\def\eqref#1{equation~\ref{#1}}
\def\1{\bm{1}}
\DeclareMathAlphabet{\mathsfit}{\encodingdefault}{\sfdefault}{m}{sl}
\SetMathAlphabet{\mathsfit}{bold}{\encodingdefault}{\sfdefault}{bx}{n}
\newcommand{\E}{\mathbb{E}}
\DeclareMathOperator*{\argmax}{arg\,max}
\newcommand{\Top}{\textup{Top}}
\newcommand{\haricolor}[1]{\color{black}}
\newtheorem{lemma}{Lemma}
\newtheorem{remark}{Remark}
\newcommand*\widefbox[1]{\fbox{\hspace{0.8em}#1\hspace{0.8em}}}
\newcommand{\todowj}[1]{\todo[color={green!30!white}]{#1}}
\newcommand{\todoakm}[1]{\todo[color={RoyalBlue!35!white}]{#1}}
\newcommand{\todoasr}[1]{\todo[color={cyan}]{#1}}
\newcommand{\removefornow}[1]{}
\renewcommand{\Pr}{\mathbb{P}}
\newcommand{\defEq}{\stackrel{.}{=}}
\newcommand{\defeq}{\defEq}
\newcommand{\bp}{p_t}
\newcommand{\bq}{q_t}
\newcommand{\bT}{\mathbb{T}}
\newcommand{\cV}{\mathscr{V}}
\newcommand{\zo}{\text{0-1}}
\newcommand{\entropy}{\textrm{entropy}}
\newcommand{\tv}{\textrm{TV}}
\newcommand{\normm}{\operatorname{norm}}
\theoremstyle{definition}
\title{Faster Cascades via Speculative Decoding}
\author{
Harikrishna Narasimhan$^\dagger$, ~~Wittawat Jitkrittum$^*$, ~~Ankit Singh Rawat$^*$\\  Seungyeon Kim$^*$, ~~Neha Gupta$^\dagger$, ~~Aditya Krishna Menon$^*$, ~~Sanjiv Kumar$^*$\\[12pt]
$^\dagger$Google Research, Mountain View\hspace{1cm}
$^*$Google Research, New York
\\[12pt]
Corresponding author: \tt{hnarasimhan@google.com}\\[5pt]
}
\begin{document}

\maketitle

\begin{abstract}
Cascades and speculative decoding are two common approaches to improving language models' inference efficiency. 
Both approaches 
involve
interleaving models of different sizes, 
but via fundamentally distinct mechanisms:
cascades employ a \emph{deferral rule} that invokes the larger model only for ``hard'' inputs,
while speculative decoding uses \emph{speculative execution} to primarily invoke the larger model in parallel verification mode.
These mechanisms offer different benefits: empirically, cascades offer better cost-quality trade-offs, often even outperforming the large model, while theoretically, speculative decoding offers a guarantee of quality-neutrality. 
In this paper, we leverage the best of both these approaches by designing new 
\emph{speculative cascading} techniques that implement their deferral rule through speculative execution. We characterize the optimal deferral rule for our speculative cascades, and employ a plug-in approximation to the optimal rule. 
Experiments with Gemma and T5 models on a range of language benchmarks show that our approach yields better cost-quality trade-offs than cascading and speculative decoding baselines.

\end{abstract}

\section{Introduction}

Large language models (LLMs) have yielded significant advances in quality on a range of natural language processing tasks~\citep{Radford:2018,Raffel:2020,Brown:2020,Black:2022,Chowdhery:2022,Wei:2022,Chung:2022,Tay:2023,Anil:2023,Touvron:2023,Gemini:2023}, at the cost of an 
increase in inference latency.
This has sparked a growing body of literature on reducing LMs' inference costs without (overly) compromising 
on quality~\citep{Elbayad:2020,Pope:2022,Schuster:2022,Leviathan:2023,chen2023accelerating,Sheng:2023,Sun:2023}.
One such line of work involves constructing a family of models of various sizes (e.g., a small and large model), and suitably orchestrating amongst them to make a prediction.
Two canonical instantiations of this strategy are \emph{model cascading}~\citep{wang2020wisdom,Mamou:2022,Varshney:2022,Khalili:2022,Dohan:2022,Chen:2023a,gupta2024language,Ding:2024} and \emph{speculative decoding}~\citep{Stern:2018,chen2023accelerating,Leviathan:2023,Sun:2023,Li:2024,Xia:2024}.

While similar in spirit, 
cascades and speculative decoding are fundamentally different in details.
Cascades
employ a \emph{deferral rule} to identify ``hard'' inputs,
and only invoke larger models on such inputs.
For example,
in a two-model cascade, one first invokes the smaller model, and uses its associated probability of the generated output to decide whether to defer to the larger model.
By contrast, 
speculative decoding 
uses a small model to \emph{draft} a block of tokens via standard auto-regressive decoding, which are then \emph{verified} in parallel by a large model. 
One 
then accepts all drafted tokens until the first ``implausible'' one,
which is rolled back 
based on
the larger LM's prediction. 

Owing to their different mechanisms, both methods have complementary strengths.
Cascades seek to output distributions that have the best quality for a given cost budget, 
and potentially provide \textit{\haricolor{red}better cost-quality trade-offs}, sometimes even
yielding better accuracies than the individual models they are constructed with \citep{jitkrittum2024does, kim2023speculative} (\S\ref{sec:cascade-meets-speed}).  
By contrast,
speculative decoding is theoretically guaranteed to match the output distribution (or a close approximation thereof~\citep{Tran-Thien_2023}), and are practically observed to provide impressive speed-ups~\citep{Stern:2018,chen2023accelerating,Leviathan:2023,Sun:2023}.
Given the complementary nature of these two approaches, a natural question arises: \emph{can we leverage the best of both techniques?} 

In this paper, we do so by designing new 
techniques for two-model cascades
that implement their deferral rule in a speculative manner: 
we have the smaller model generate drafts auto-regressively, and the larger model execute in parallel on the drafts to decide \emph{whether or not to defer on them}. We show that this \emph{speculative cascading} approach 
yields better
cost-quality trade-offs than both standard cascades and speculative decoding. 
In detail, we make the following contributions:
%
\begin{enumerate}[itemsep=2pt,topsep=0pt,leftmargin=16pt,label=(\roman*)]
    \item  We introduce a general recipe for speculative execution, where we seek to mimic a general \emph{target} distribution that interleaves the drafter's and verifier's 
    distributions.
    Lossy speculative sampling \citep{Tran-Thien_2023} is a special case of 
    this recipe 
    for a particular target distribution (\S\ref{sec:gen-speed}).
    
    \item We  show how common cascading 
    rules, 
    such as Chow's rule \citep{chow1970optimum} and confidence-difference thresholding \citep{jitkrittum2024does}, 
    can be implemented speculatively by plugging in their target distribution into our framework. We refer to these as \emph{speculative cascades} (\S\ref{sec:seq-to-spec}).

    \item 
    We characterize the \emph{theoretically optimal} deferral rule for a speculative cascade, and design a speculative cascading technique that implements a plug-in estimate to the optimal rule (\S\ref{sec:opt-spec}, Lemma \ref{lem:spec-def-opt}, Table \ref{tab:targets}). {\haricolor{blue} We also present token-specific variants of our deferral rules (\S\ref{sec:sample-dep}).}
    \item Through experiments with {
    \haricolor{blue}
    Gemma  \citep{team2024gemma} and T5 models \citep{raffel2020exploring} on a range of benchmark language tasks including summarization, translation, reasoning, coding and QA}, we show that speculative cascades 
    are able to provide better cost-quality trade-offs than their sequential cascade and speculative decoding counterparts (\S\ref{sec:expts}).
\end{enumerate}

\section{A Tale of Two Efficient LM Inference Strategies}
\label{sec:prelims}

Let $\mathscr{V}$ denote a finite vocabulary of \emph{tokens},
with $\mathscr{V}^*$ denoting the set of all \emph{sequences} generated by this vocabulary. 
Let $\Delta_\cV$ denote the set of all probability distributions over tokens in $\cV$. 
Given an arbitrary length sequence $x = x_1 x_2 \ldots x_L \in \mathscr{V}^*$ and index $i \leq L$, denote by $x_{<i} = x_1 x_2 \ldots x_{i - 1}$. 
A \emph{language model} (LM)
is a
probability distribution over $\mathscr{V}^*$. 
Let $\Pr$ denote the ground-truth probability distribution over $\mathscr{V}^*$. This could be, for example, a distribution over prompt-response pairs
that the LM may encounter during deployment, or a distribution of sequences used to pre-train the LM. 
We will measure the quality of an LM based on how closely it mimics $\Pr$. 

Suppose we are provided two LMs $q$ and $p$, 
where $p$ is the larger (more expensive)
model. Our goal is to design an inference strategy that selectively invokes $q$ and $p$ to  trade-off between quality and latency (
which may be approximated by the fraction of times that $p$ is invoked).
We will denote by $q(x_t|x_{<t})$ the probability $q$ associates to token $x_t \in \mathscr{V}$ given prefix 
$x_{<t} \in \mathscr{V}^{t-1}$, and by $p(x_t|x_{<t})$ 
the same distribution from model $p$. Whenever it is clear from context, we will hide the conditioning on prefix $x_{<t}$, and use the shorthand $p_t(\cdot)$ for $p(\cdot|x_{<t})$ and $q_t(\cdot)$ for $q(\cdot|x_{<t})$.  


\textbf{Cascades} 
are an effective strategy to trade-off cost and quality by having the smaller model $q$ handle the ``easy'' samples, 
and the larger model $p$ handle the ``hard'' ones \citep{gupta2024language,Yue:2024}. A common cascading approach  is confidence thresholding or Chow's rule \citep{chow1970optimum, jitkrittum2024does},
where we first run $q$ on the  input, and defer to $p$ when $q$'s {confidence} for its generated response is sufficiently low. This strategy is typically
implemented at the \emph{sequence-level}, where for a given prefix $x_{<m}$ 
we invoke $q$ to generate a complete response $x_{m}\ldots x_{m+n}$. We evaluate $q$'s predicted probability for the response, and check whether it falls below a threshold $\alpha \in [0,1]$:
\begin{align}
\textstyle
 q(x_{m}\ldots x_{m+n}\,|\,x_{<m}) < 1 - \alpha.
 \label{eq:sequential-chow}
\end{align}
If the above holds, 
we defer to $p$ to generate a new response;
otherwise,
we retain $q$'s response. One may  tune the threshold to achieve a desired cost-quality trade-off. The literature also offers variants of Chow's rule that use a more nuanced aggregation of \emph{per-token} uncertainties \citep{gupta2024language}.  

\textbf{Speculative decoding} 
is an alternate strategy that applies \emph{token-level} interleaving between $q$ and $p$,  
resulting in \emph{provably} matching the larger model quality at a reduced inference cost~\citep{Stern:2018,Leviathan:2023}.
Given a prefix $x_{<t}$, we 
\emph{draft} $\gamma$ draft tokens $x_t, \ldots, x_{t+\gamma-1}$ via auto-regressive sampling from $q$, 
and \emph{verify} if these tokens can be accepted
by
running $p$ in parallel on the $\gamma$ prefixes $x_{<t}, \ldots, x_{<t+\gamma-1}$. 
We then rollback to the first rejected token $t+j^*$ (where $j^* \in \{ 0, 1, \ldots, \gamma - 1 \}$), replace $x_{t+j^*}$
with a new token, and repeat the process {
with prefix $x_{< t + j^* + 1}$}.

During the verification stage, a draft token $x_{t+j}$ generated by $q$ is accepted 
with probability
$\min\left( 1, \frac{p_{t+j}(x_{t+j})}{q_{t+j}(x_{t+j})} \right)$
and rejected otherwise,
recalling the shorthand
$q_{t+j}(\cdot) = q(\cdot|x_{<t+j})$ and $p_{t+j}(\cdot) = p(\cdot|x_{<t+j})$. 
A rejected token is then replaced by a new token sampled from a modified distribution
$\operatorname{norm}\left(\max\left\{0,\, p_{t+j}(\cdot) - q_{t+j}(\cdot)\right\}\right),$ 
where $\text{norm}( \cdot )$
denotes normalization to sum to 1. 
This sampling process is 
provably
equivalent to sampling $\gamma$ tokens auto-regressively from $p$ for prefix $x_{<t}$~\citep{Leviathan:2023}. 
We summarize this speculative sampling procedure in Algorithm \ref{alg:spec-decode}. 
Each invocation of this algorithm generates 
at most $\gamma + 1$ next tokens (and at least one) 
\todoakm{and at least $1$? Hari: done!}
for a given prefix $x_{<t}$. One may run this algorithm multiple times to generate a complete output sequence.

In practice, one may employ a lossy variant \citep{Tran-Thien_2023} of the above sampling that allows some \emph{deviation} from verifier's distribution $p$. In this case, a draft token $x_{t+j}$ 
is accepted 
with probability
$\min\left( 1, \frac{p_{t+j}(x_{t+j})}{(1 - \alpha) \cdot q_{t+j}(x_{t+j})} \right)$,
where $\alpha \in [0, 1)$ is a strictness parameter, with higher values indicating greater deviation from $p$.
A rejected token may then be replaced by a token sampled from the  residual distribution
$\operatorname{norm}\left(\max\left\{0,\, \frac{1}{\beta}\cdot p_{t+j}(\cdot) - q_{t+j}(\cdot)\right\}\right),
$
where $\beta \geq 1 - \alpha$ is a parameter that depends on $\alpha$, $q$ and $p$. 
A common heuristic is to simply set $\beta = 1$ \citep{ zhou2024distillspec}.

\if 0
This deferral strategy may  be implemented either at the \emph{sequence level}, where  the same model is used to predict the entire output sequence for a given context \citep{Chen:2023a,jitkrittum2024does}, or at the \emph{token level}, where the individual token could be predicted by different models in the cascade \citep{chen2023cascade}.\todo{Find better references} Sequence-level cascades are typically employed when $M_p$ is very expensive to query, and needs to be invoked only sparingly. Token-level cascades are useful in 
settings where it is indeed feasible to  interleave of two models at the token level. In this paper, our focus is on the latter setting.
\todoakm{this section seems to mix new material (token-level cascades) with old material (speculative decoding)}

\textbf{Chow's rule.} A common approach to cascading is confidence-based thresholding or Chow's rule \citep{chow1970optimum, jitkrittum2024does}, where we first run $M_q$ on the given prefix, and defer to $M_p$ when the \textit{maximum predicted probability} from the model falls below a threshold. More precisely, one first computes the smaller model's distribution $q(\cdot|x_{<t})$, check whether $\max_{v \in \cV}\,q(v|x_{<t})$ is below a pre-chosen threshold; if the answer is no, 
sample
$x_t \sim q(\cdot|x_{<t})$; otherwise evaluate $p(\cdot|x_{<t})$, and  sample $x_t \sim p(\cdot|x_{<t})$. 

More generally, we may design a deferral rule $r: \cV^{t-1}  \rightarrow \{0,1\}$ that takes the prefix $x_{<t}$ as input and outputs a binary decision, with $r(x_{<t}) =  1$ indicating that we sample from $M_p$. For example, Chow's deferral rule can be written as:
\begin{align}
    r_{\rm \tt Chow01}(x_{<t}) = 1 ~\iff~\max_{v \in \cV}\,q(v|x_{<t}) < 1 - \alpha,
    \label{eq:chow01}
\end{align}
where $\alpha$ is a cost parameter; the higher the value, the lower is the frequency of deferral to $M_p$. One may also use a confidence measure different from the maximum probability, such as the entropy of the small model's probability distribution:
\begin{align}
    r_{\rm \tt ChowLog}(x_{<t}) = 1 ~\iff~\entropy\big(q(\cdot|x_{<t})\big) > \alpha,
    \label{eq:chowlog}
\end{align}
where $\entropy(q) = -\sum_{v \in \cV} q(v) \cdot \log(q(v)).$ 
As we show in Appendix \ref{app:chow}, \eqref{eq:chow01} is typically prescribed when the quality of cascade is evaluated in terms of its accuracy against the ground-truth distribution on individual tokens, and \eqref{eq:chowlog} is prescribed when the metric of interest is the cross-entropy loss.

While Chow's rule is easy to implement and effective in many applications, it can be sub-optimal if the smaller model's maximum token probability is not reflective of which of the two models are better equipped to predict the next token for a given prefix \citep{jitkrittum2024does}.

\textbf{Optimal deferral rule.} 
We now derive the optimal deferral rule $r$ for a cascade. For this, we must first specify an objective to minimize at each step $t$. 
Adopting the setup from \citep{jitkrittum2024does,  gupta2024language}, a reasonable objective to  minimize is the expected loss from the deferral rule against the ground-truth distribution, with an added cost for  deferring to the larger model. We state this below for a fixed prefix $x_{<t}$,
using the short-hand $q_t(v)$ for the conditional distribution $q(v|x_{<t})$ and $p_t(v)$ for $p(v|x_{<t})$:
\todoakm{We need to introduce $\mathbb{P}$ somewhere}
%
\begin{align}
L_{\rm def}(r) &=\E_{x_t \sim \Pr(\cdot|x_{<t})}\Big[ \big(1 - r(x_{<t})\big) \cdot \ell(x_t, q_t) + r(x_{<t}) \cdot \big(\ell(x_t, p_t) + \alpha\big) \Big],
\label{eq:seq-def-risk}
\end{align}
for a cost penalty $\alpha > 0$ and loss function $\ell: \cV \times \Delta_{\cV} \rightarrow \mathbb{R}_+$.
\todoakm{define $\Delta$}
Common choices for $\ell$ include the 0-1 loss $\ell_{\zo}(x_t, q_t) = \1\left(x_t \ne \argmax_v q_t(v)\right)$  and the log loss $\ell_{\log}(x_t, q_t) = -\log\left(q_t(x_t)\right).$
\begin{lemma}[{Optimal deferral for sequential cascades} \citep{jitkrittum2024does}]
The minimizer of \eqref{eq:seq-def-risk} is of the form:
\begin{align}
    r^*(x_{<t}) = 1 ~~\iff~~ \E_{x_t \sim \Pr(\cdot|x_{<t})}\left[\ell(x_t, q_t)\right] \,>\, \E_{x_t \sim \Pr(\cdot|x_{<t})}\left[\ell(x_t, p_t)\right] + \alpha.
    \label{eq:seq-def-opt}
\end{align}
\vspace{-15pt}
\label{lem:seq-def-opt}
\end{lemma}
Intuitively, we compare the expected loss  from $M_q$ with the expected cost of invoking $M_p$, and decide to defer when the latter is smaller.\todo{Explain that this optimization problem is set up for a fixed prefix $x_{<t}$. One may also consider the coupled optimization problem across all prefixes from 1 to $T$.}

\textbf{Plug-in estimator for \eqref{eq:seq-def-opt}.} 
The optimal rule in \eqref{eq:seq-def-opt} requires computing expectations over the ground-truth   distribution $\Pr(\cdot|x_{>t})$, which is not available during inference time.
 A common approach in the cascades literature is to replace the expected losses with the models' confidence estimates  \citep{jitkrittum2024does}. For example, when $\ell = \ell_\zo$, it may be reasonable to use $1 - \max_v q_t(v)$ as an estimate of the expected 0-1 loss $\E_{x_t \sim \Pr(\cdot|x_{<t})}\left[\ell_\zo(x_t, q_t)\right]$ and $1 - \max_v p_t(v)$ as an estimate of $\E_{x_t \sim \Pr(\cdot|x_{<t})}\left[\ell_\zo(x_t, q_t)\right]$.  The extent to which these estimates are accurate depend on how well  $M_p$ and $M_q$ are calibrated \citep{Guo:2017}. The resulting plug-in estimator for \eqref{eq:seq-def-opt} is given by:
\begin{align}
\textstyle
    r_{\rm\tt Diff01}(x_{<t}) = 1 ~~\iff~~ \max_{v} q_t(v)  \,<\, \max_{v} p_t(v) - \alpha.
    \label{eq:seq-opt-def-01-plugin}
\end{align}
Similarly, when $\ell = \ell_{\log}$, we may use the entropy $-\sum_v q_t(v)\cdot \log(q_t(v))$ from $q_t$ as an estimate of its expected log-loss, and similarly for $p_t$, giving us the plug-in estimator: 
\begin{align}
\textstyle
    r_{\rm\tt DiffLog}(x_{<t}) = 1 ~~\iff~~ \sum_v q_t(v)\cdot \log(q_t(v))  \,<\, \sum_v p_t(v)\cdot \log(p_t(v)) - \alpha.
    \label{eq:seq-opt-def-log-plugin}
\end{align}

Note that both  $r_{\rm\tt Diff01}$ and $r_{\rm\tt DiffLog}$ cannot be used directly in a cascade, as they need the larger model to be invoked to compute $p$. However, they serve as \emph{oracle} deferral rules that allow to analyze the head-room available to improve upon Chow's rule.\todo{Mention about early-exit routers somewhere}


\fi

\begin{table}[]
    \centering
    \caption{Target distributions associated with different inference algorithms, where $\alpha$ is a free parameter and $\beta \geq 1-\alpha$ depends on $\alpha$, $q$ and $p$. The last column indicates whether the  execution is sequential (Algorithm \ref{alg:seq-sampling}), via an oracle (Algorithm \ref{alg:oracle}), or speculative (Algorithm \ref{alg:spec-cascade}) \citep{Leviathan:2023}. 
    See~(\ref{eqn:t_delta}) for details on $\delta$.
    The third row presents a variant of the BiLD algorithm of \cite{kim2023speculative}, where $D(q, p)$ is a measure of discrepancy between $q$ and $p$; the original algorithm differs  in the use of a deterministic speculative decoding procedure with a dynamic draft window 
    (see \S\ref{sec:related}). 
    }
    \vspace{-6pt}
    \resizebox{\linewidth}{!}{%
    \begin{tabular}{@{}llll@{}}
    \toprule
    Inference strategy & 
    Deferral decision $\delta(q,p)$ &
    Target distribution $\pi(x)$ & Execution\\
    \midrule
         SpecDecoding  \citep{Leviathan:2023} &
        - &
         $p(x)$
          & Speculative
         \\[2pt]
         Lossy SpecDecoding  \citep{Tran-Thien_2023} & 
         - &
         $\max\{\min\{ q(x), \frac{p(x)}{1 - \alpha} \}, 
         \frac{p(x)}{\beta}\}
         $
         & Speculative
         \\[2pt]
         BiLD* \citep{kim2023speculative} & 
        $\1\big(\, D(q, p) > \alpha \big)$
         &
         $ (1 - \delta) \cdot q(x) + \delta \cdot p(x)$
         & Speculative
         \\
         \midrule
         TokenCascade [{\tt Chow}] \citep{chow1970optimum} &
         $\1\big(\max_v q(v) < 1 - \alpha\big)$ 
         & $ (1 - \delta) \cdot q(x) + \delta \cdot p(x)$
         & Sequential
         \\[2pt]
         Oracle [{\tt Diff}] \citep{jitkrittum2024does}
         & $\1\big(\max_v q(v) < \max_v p(v) - \alpha\big)$ 
         & $ (1 - \delta) \cdot q(x) + \delta \cdot p(x)$
         & Oracle
         \\
         \midrule
         SpecCascade [{\tt Chow}]
         & $\1\big(\max_v q(v) < 1 - \alpha\big)$ 
         &$ (1 - \delta) \cdot q(x) + \delta \cdot p(x)$
         & Speculative
         \\[2pt]
         SpecCascade [{\tt Diff}]
         & $\1\big(\max_v q(v) < \max_v p(v)  - \alpha\big)$ 
         &$ (1 - \delta) \cdot q(x) + \delta \cdot p(x)$
         & Speculative
         \\[2pt]
         SpecCascade [{\tt OPT}]
         & $\1\big(\max_v q(v) < \max_v p(v)  - \alpha \cdot D_{\tv}(p, q)\big)$ 
         &$ (1 - \delta) \cdot q(x) + \delta \cdot p(x)$
         & Speculative\\
          \bottomrule
    \end{tabular}%
    }
    \vspace{-10pt}
    \label{tab:targets}
\end{table}

\section{Cascades Meet Speculative Decoding}
\label{sec:cascade-meets-speed}

Both cascades and speculative decoding interleave models of different sizes to reduce inference cost, but fundamentally differ in the mechanisms they use. 
As a step towards comparing the strengths and weaknesses of these approaches, we first
describe how one may design a \emph{token-level cascade}.

\subsection{Warm-up: Token-level cascades}
\label{sec:token-cascades}
It is straightforward to extend 
the sequence-level
Chow's rule from \S\ref{sec:prelims} to form a \emph{token-level cascade} between  $q$ and $p$. For a prefix $x_{<t}$, we first compute the smaller model's distribution $q(\cdot|x_{<t})$, and check whether $\max_{v \in \cV}\,q(v|x_{<t})$ is below a pre-chosen threshold.
if so,
we evaluate $p(\cdot|x_{<t})$, and  sample $x_t \sim p(\cdot|x_{<t})$;
otherwise, 
we  sample
$x_t \sim q(\cdot|x_{<t})$.

More generally, we may design a token-level \emph{deferral rule} ${r}: \cV^{t-1}  \rightarrow \{0,1\}$ that takes the prefix $x_{<t}$ as input and outputs a binary decision, with ${r}(x_{<t}) =  1$ indicating that we 
defer to $p$ (i.e., draw a sample from $p$ rather than $q$). 
For example, token-level Chow's rule can be written as:
\begin{align}
\textstyle
    {r}_{\rm \tt Chow}(x_{<t}) = 1 ~\iff~\max_{v \in \cV}\,q(v|x_{<t}) < 1 - \alpha,
    \label{eq:chow01}
\end{align}
where $\alpha$ is a threshold parameter; 
the higher the value, the lower is the frequency of deferral to $p$. 
One may also use other confidence measures than the maximum probability, such as the entropy of the small model's probability distribution.
We elaborate in \S\ref{app:chow}
that the  choice of confidence measure would depend on the evaluation metric of interest;  Equation \ref{eq:chow01} 
is typically prescribed when the cascade's quality 
is evaluated in terms of its accuracy against the ground-truth distribution on individual tokens, whereas entropy 
is prescribed when the metric of interest is the cross-entropy loss.

\subsection{Optimal token-level cascade deferral}
\label{sec:opt-sec-cascade}
While Chow's rule \eqref{eq:chow01} is easy to implement, 
it can be sub-optimal if the smaller model's max-token probability is not reflective of which of the two models are better equipped to predict the next token for a given prefix \citep{jitkrittum2024does}.
Given this, it is natural to ask what the \emph{optimal} deferral rule $r$ for a token-cascade looks like,
and whether we can reasonably approximate this rule. 

For this, we must first specify an objective to minimize at each step $t$. 
Following the prior cascade literature \citep{jitkrittum2024does,  gupta2024language}, a reasonable objective to  minimize is the expected loss from the deferral rule against the ground-truth distribution
$\Pr$,
with an added cost for  deferring to the larger model. We state this below for a fixed prefix $x_{<t}$,
using as before
\todoakm{``using once more''? Hari: done!}
the short-hand $q_t(\cdot)$ for $q(\cdot|x_{<t})$ and $p_t(\cdot)$ for $p(\cdot|x_{<t})$:
%
\begin{align}
L_{\rm def}(r; x_{<t}) &=\E_{v \sim \Pr(\cdot|x_{<t})}\Big[ \big(1 - r(x_{<t})\big) \cdot \ell(v, q_t) + r(x_{<t}) \cdot \big(\ell(v, p_t) + \alpha\big) \Big],
\label{eq:seq-def-risk}
\end{align}
for a cost penalty $\alpha \geq 0$ 
and loss function $\ell: \cV \times \Delta_{\cV} \rightarrow \mathbb{R}_+$.
Common choices for $\ell$ include the 0-1 loss $\ell_{\zo}(v, q_t) = \1\left(v \ne \argmax_{v'} q_t(v')\right)$  and the log loss $\ell_{\log}(v, q_t) = -\log\left(q_t(v)\right).$
\begin{lemma}[{Optimal deferral for token-level cascades} \citep{jitkrittum2024does}]
\label{lem:seq-def-opt}
The minimizer of \eqref{eq:seq-def-risk} is of the form:
\begin{align}
    r^*(x_{<t}) = 1 ~~\iff~~ \E_{v \sim \Pr(\cdot|x_{<t})}\left[\ell(v, q_t)\right] \,>\, \E_{v \sim \Pr(\cdot|x_{<t})}\left[\ell(v, p_t)\right] + \alpha.
    \label{eq:seq-def-opt}
\end{align}
\vspace{-15pt}
\end{lemma}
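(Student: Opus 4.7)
The plan is to exploit the fact that, for the fixed prefix $x_{<t}$, the objective $L_{\rm def}(r; x_{<t})$ depends on the deferral rule $r$ only through the scalar value $r(x_{<t}) \in \{0,1\}$. Minimizing over an otherwise unconstrained $r$ therefore reduces to a pointwise binary decision: choose $r(x_{<t}) \in \{0,1\}$ to minimize the inner expectation. This is a standard two-action Bayes-optimal decision problem with a cost penalty $\alpha$ attached to the ``defer'' action.

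First I would use the fact that $r(x_{<t})$ does not depend on $v$ to pull it outside the expectation in \eqref{eq:seq-def-risk}, rewriting
\begin{align*}
L_{\rm def}(r; x_{<t}) \;=\; \big(1 - r(x_{<t})\big) \cdot \E_{v \sim \Pr(\cdot|x_{<t})}\!\left[\ell(v, q_t)\right] \;+\; r(x_{<t}) \cdot \Big(\E_{v \sim \Pr(\cdot|x_{<t})}\!\left[\ell(v, p_t)\right] + \alpha\Big).
\end{align*}
Introducing the shorthand $A \defeq \E_{v}[\ell(v, q_t)]$ and $B \defeq \E_{v}[\ell(v, p_t)] + \alpha$, the objective collapses to the affine function $A + r(x_{<t}) \cdot (B - A)$ in $r(x_{<t})$. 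Minimizing over $r(x_{<t}) \in \{0,1\}$, the optimal choice is $r^*(x_{<t}) = 1$ precisely when $B < A$, i.e., when $\E_{v}[\ell(v, p_t)] + \alpha < \E_{v}[\ell(v, q_t)]$, which is exactly the claimed condition \eqref{eq:seq-def-opt} after rearrangement.

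The only subtlety is the tie-breaking convention at $A = B$, where both actions yield identical objective values; the strict inequality in the lemma statement simply encodes the convention of not deferring in case of exact equality. Since the optimization decouples completely across prefixes and reduces to a one-dimensional binary choice over an affine function, I do not anticipate any real obstacle, and the proof amounts to this one-line affine-minimization argument.
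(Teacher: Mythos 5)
Your proposal is correct and matches the paper's own argument: the proof in the appendix likewise pulls $r(x_{<t})$ out of the expectation, writes the objective as a constant plus $r(x_{<t})$ times the difference of expected losses (with the $\alpha$ penalty), and sets $r^*(x_{<t})=1$ exactly when that difference is negative. Your remark on tie-breaking at equality is a fine, if unnecessary, addition; otherwise the two arguments are the same one-line affine minimization.
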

Intuitively, we compare the expected loss  from $q$ with the expected cost of invoking $p$, and decide to defer when the latter is smaller.
We note here that this optimization problem is set up for a \emph{fixed} prefix $x_{<t}$. One may also consider the coupled optimization problem across all positions.
\todo{Explain that this optimization problem is set up for a fixed prefix $x_{<t}$. One may also consider the coupled optimization problem across all prefixes from 1 to $T$. Hari: done!}

\emph{Plug-in estimator for \eqref{eq:seq-def-opt}.} 
The optimal rule in \eqref{eq:seq-def-opt} requires computing expectations over the ground-truth   distribution $\Pr(\cdot|x_{>t})$, which is not available during inference time.
 A common approach in the cascades literature is to replace the expected losses with the models' confidence estimates  \citep{jitkrittum2024does}. For example, when $\ell = \ell_\zo$, it may be reasonable to use $1 - \max_v q_t(v)$ as an estimate of the expected 0-1 loss $\E_{x_t \sim \Pr(\cdot|x_{<t})}\left[\ell_\zo(x_t, q_t)\right]$ and $1 - \max_v p_t(v)$ as an estimate of $\E_{x_t \sim \Pr(\cdot|x_{<t})}\left[\ell_\zo(x_t, q_t)\right]$.  The extent to which these estimates are accurate depend on how well  $q$ and $p$ are calibrated \citep{Guo:2017}. The resulting plug-in estimator 
 for (\ref{eq:seq-def-opt}) thresholds the \textit{difference} of confidence estimates from both distributions:
\begin{empheq}[box=\widefbox]{align}
\textstyle
    {\hat{r}_{\rm\tt Diff}(x_{<t}) = 1 ~~\iff~~ \max_{v} q_t(v)  \,<\, \max_{v} p_t(v) - \alpha.}
    \label{eq:seq-opt-def-01-plugin}
\end{empheq}
Similarly, when $\ell = \ell_{\log}$, we may use the entropy $-\sum_v q_t(v)\cdot \log(q_t(v))$ from $q_t$ as an estimate of its expected log-loss, and similarly for $p_t$ (see Appendix~\ref{app:log}). 
\begin{remark}[\textbf{Oracle deferral rules}]
\label{rem:oracle}
For efficiency reasons, 
  $\hat{r}_{\rm\tt Diff}$ 
  cannot be directly used in a \emph{token-level} cascade,\todowj{``to improve efficiency''? We can if we want quality, right? Hari: yep, added a clarification}
  as it needs the large model to be invoked at every step $t$. 
  However, it serves as an  \emph{oracle} that allows to analyze the head-room available to improve upon Chow's rule.
  See also Remark~\ref{rem:exact-oracle}.
\end{remark}

\subsection{\haricolor{red}Contrasting token-level cascade and speculative decoding trade-offs}
\label{sec:cascade-ensemble-effect}
Token-level cascades and 
speculative decoding differ in the distribution over tokens they seek to mimic. Speculative decoding seeks to mimic the large model's output distribution, and is usually used when one wants to match the quality of the large model. 
On the other hand, token-level cascades seek to output distributions that closely approximate the ground-truth label distribution and potentially offer \textit{\haricolor{red} good cost-quality trade-offs}, sometimes yielding better quality than even the large model.

Cascades are useful when the draft model fares better than the verifier on some inputs, and one may want to retain the drafter's predictions even when it disagrees with the verifier. Even in cases where both the drafter and verifier fare poorly on some inputs (e.g., due to label noise), one may want to ignore the disagreement between the drafter and verifier to avoid triggering unnecessary  roll-backs. 

As a concrete example, we consider token-level cascades of T5 models \citep{raffel2020exploring} of two different sizes finetuned on a WMT EN $\rightarrow$ DE translation \cite{bojar2014findings} and an extreme summarization (XSum) task \citep{narayan2018don}. We construct these cascades using both 
(token-level)
Chow's rule  in \eqref{eq:chow01} 
and the oracle {\tt Diff} rule in \eqref{eq:seq-opt-def-01-plugin}, 
and also apply speculative decoding with the smaller (larger) model as the drafter (verifier). 
In Figure \ref{fig:oracle_cascade}, we plot quality as a function of fraction of samples deferred to the large model (number of deferrals divided by number of generated tokens), as we vary the cost parameter $\alpha$. 
Note that with speculative decoding, each verification step\todowj{Minor: worth explicitly mentioning $\gamma$ tokens are verified in parallel.} 
verifies $\gamma$ tokens in parallel, but 
is counted as a single deferral to the large model.
While speculative decoding matches the quality of the large model (right-most point), the oracle yields significantly better 
cost-qualty trade-offs.
Even Chow's rule, which is sub-optimal for cascading \citep{jitkrittum2024does}, offers better trade-offs, and
outperforms speculative decoding in a small region.\todoasr{But does this small improvement comes at some added cost? E.g., higher latency? -- Okay, we seem to be discussing this in the following program. But should we breifly allude to this in the caption of Figure 1 as well? Hari: done!} As noted by \cite{kim2023speculative}, this may be attributed to the ensembling effect in a cascade.
\todoakm{the caption refers to ``deferral rate'', but the x-axis says ``Fraction of calls''. Hari: changed this as per next comment!}
\todoakm{maybe the x-axis could be more explicit, like ``\# large-model calls / \# generated tokens''. Hari: done!}

However, as also evident from the plots, token-level cascades require a significantly larger number of deferrals to the large model to achieve the same quality.
This is because token-level cascades are executed \emph{sequentially}: whenever $q$ defers, we execute $p$  once to generate one next token for the prefix accumulated so far, and the control transfers back to $q$. In contrast, speculative decoding runs $p$ in \emph{scoring} mode to verify $\gamma$ draft tokens from $q$ in parallel.  Moreover,  the stochastic verification algorithm in speculative decoding often results in fewer tokens from $q$ getting rejected compared to the deterministic deferral rules used in a cascade. These observations motivate a natural question: \emph{given their complementary strengths, how can we leverage the best of both these techniques?} 

\begin{figure}[t]
    \centering
    \includegraphics[scale=0.3]{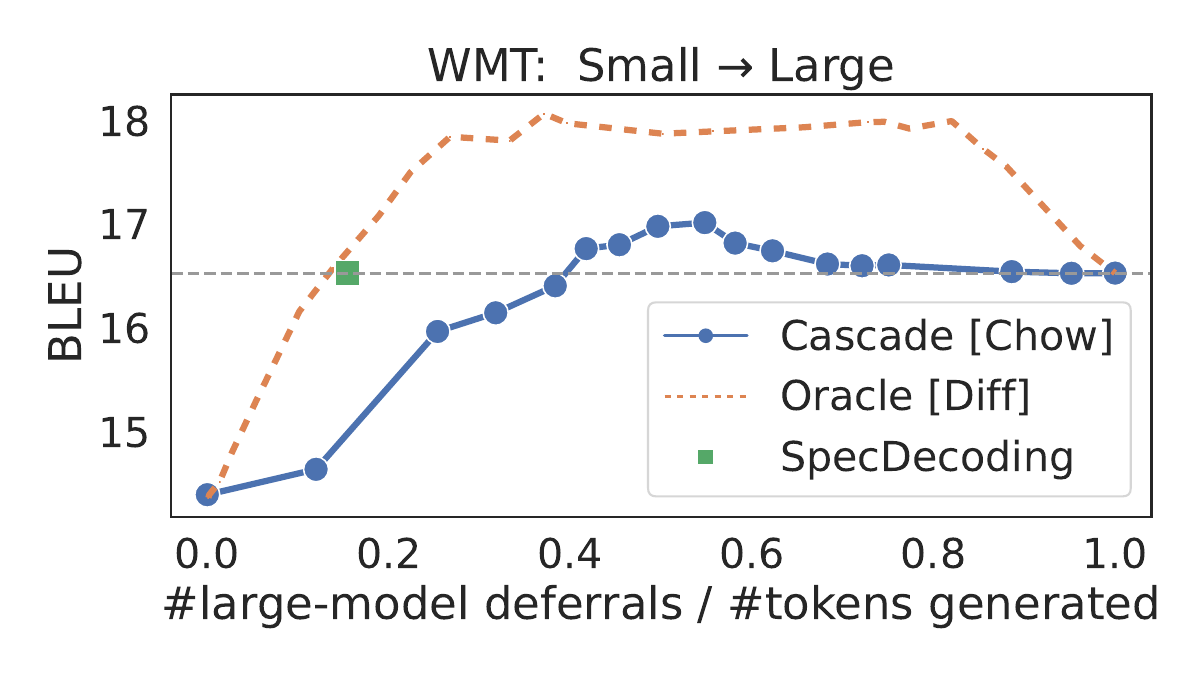}
     \includegraphics[scale=0.3]{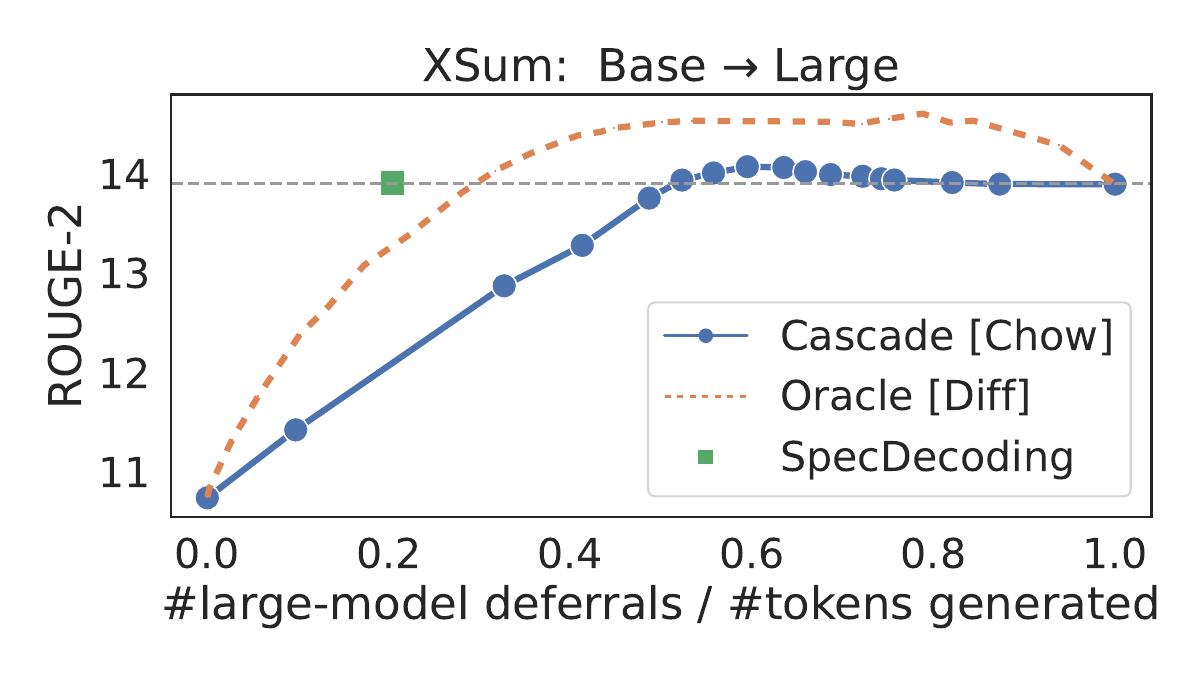}
    \caption{Plots of quality as a function of the \textit{\haricolor{red}number of deferrals to the larger model divided by the total number of generated tokens} for cascades constructed from T5 models (under temperature sampling with $T = 1$). The left-most point represents the small model and the right-most represents the large model.  We compare token-level cascades constructed with Chow's rule ({\tt Chow}) and an oracle deferral rule ({\tt Diff}), and speculative decoding with block size $\gamma = 5$. \textit{With a cascade, each call to the large model yields exactly one token, whereas with speculative decoding, a single call scores $\gamma$ draft tokens in parallel.}
    While speculative decoding matches the quality of the large model (see dashed horizontal line), the oracle deferral rule yields significantly better quality
    on a range of deferral rates; this however comes at the cost of higher number of deferrals to the large model.
    }
    \label{fig:oracle_cascade}
\end{figure}

\section{Speculative Cascades: Leveraging the Best of Both Worlds}
\label{sec:spec-cascades}
In addressing the above question, we present our main contribution: a principled approach to combining  the better trade-offs cascades offer with the faster execution of speculative decoding.

\subsection{Speculative decoding with general target distributions}
\label{sec:gen-speed}
We begin by considering a generic version of speculative sampling that seeks to mimic a \emph{general} target distribution derived from the drafter's and verifier's distributions.
In the proposed sampling procedure outlined in Algorithm \ref{alg:gen-speed}, we sample tokens auto-regressively as before from the drafter's distribution. During the verification step, however, we do not compare the drafter's token probabilities against the verifier's  distribution. Instead, we use a user-specified target distribution $\pi = \bT(q, p) \in \Delta_{\cV}$ derived from the drafter's and verifier's distributions at position $t$, for some function $\bT(\cdot,\cdot)$ that is inexpensive to compute.
We accept a draft token  $x_t$ when $q(x_t) \leq \pi(x_t)$ and reject it otherwise with probability $1 - \frac{\pi(x_t)}{q(x_t)}$. Upon rejection, we re-sample from the residual distribution $\operatorname{norm}\left(\max\{0, \pi(\cdot) - q(\cdot)\}\right)$.

This general procedure not only encompasses standard speculative decoding \citep{Leviathan:2023} for $\bT(q, p) = p$, but also includes lossy speculative decoding \citep{Tran-Thien_2023} as a special case:
\begin{lemma}
\label{lem:lossy-sd-target-distr}
Algorithm \ref{alg:gen-speed} reduces to the lossy speculative sampling procedure in \citep{Tran-Thien_2023} with  parameters $\alpha$ and $\beta$ when $\bT(q, p)(v) = \max\{\min\{ q(v), \frac{p(v)}{1 - \alpha} \},  \frac{p(v)}{\beta}\}$. 
\end{lemma}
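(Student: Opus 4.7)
The plan is to unfold the definitions of both procedures and match them in two parts: (i) the per-token acceptance probability, and (ii) the residual distribution used upon rejection. Algorithm~\ref{alg:gen-speed} accepts the draft $x_t$ with probability $\min\{1, \pi(x_t)/q(x_t)\}$ and, on rejection, samples from $\operatorname{norm}(\max\{0, \pi(\cdot) - q(\cdot)\})$. The lossy scheme of \citet{Tran-Thien_2023} accepts with probability $\min\{1, p(x_t)/((1-\alpha)q(x_t))\}$ and, on rejection, samples from $\operatorname{norm}(\max\{0, p(\cdot)/\beta - q(\cdot)\})$. So it suffices to verify, pointwise in $v$, that plugging the stated $\pi$ into the generic expressions reproduces these quantities.

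For convenience I would abbreviate $A = q(v)$, $B = p(v)/(1-\alpha)$, $C = p(v)/\beta$, noting that $C \leq B$ since $\beta \geq 1-\alpha$, and that $\pi(v) = \max\{\min\{A,B\}, C\}$. I would then split into three exhaustive cases: (1a) $C \leq A \leq B$, in which $\pi(v) = A$; (1b) $A < C \; (\leq B)$, in which $\pi(v) = C$; and (2) $A > B$, in which $\pi(v) = B$. A short check in each case shows that $\min\{1, \pi(v)/A\}$ equals $\min\{1, B \cdot (1-\alpha)/A \cdot 1/(1-\alpha)\} = \min\{1, p(v)/((1-\alpha) A)\}$: in cases (1a) and (1b) both expressions equal $1$ (since $A \leq B$ forces $p(v)/((1-\alpha)A) \geq 1$), while in case (2) we have $\pi(v)/A = B/A = p(v)/((1-\alpha)A) < 1$ and both sides agree.

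For the residual distribution I would show that the unnormalized densities $\max\{0, \pi(v) - q(v)\}$ and $\max\{0, p(v)/\beta - q(v)\}$ coincide pointwise, so that normalization yields the same distribution. In case (1a) both are zero (for the generic one because $\pi(v) = A$, and for the lossy one because $C \leq A$ implies $C - A \leq 0$). In case (1b), $\pi(v) = C$, so $\pi(v) - A = C - A > 0$ matches $p(v)/\beta - q(v)$ exactly. In case (2), $\pi(v) = B \leq A$, so $\pi(v) - A \leq 0$; and for the lossy side $C - A \leq B - A \leq 0$, so both positive parts vanish. Combining the acceptance matching and the residual matching gives distributional equivalence of the two procedures.

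I do not expect a serious obstacle: the only subtle point is keeping the ordering $C \leq B$ in mind so that the outer $\max$ in the definition of $\pi$ never ``activates'' in case (2), which is exactly what ensures the residual densities agree on the set where $q$ is large. Once the three cases are laid out cleanly, both parts of the claim reduce to direct algebraic identities.
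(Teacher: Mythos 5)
Your proposal is correct and follows essentially the same argument as the paper: a three-way case split on the ordering of $q(v)$, $p(v)/(1-\alpha)$, and $p(v)/\beta$ (your cases (1a), (1b), (2) correspond exactly to the paper's cases (iii), (ii), (i)), verifying in each case that the acceptance probability and the (unnormalized) residual density of Algorithm~\ref{alg:gen-speed} with the stated $\bT$ match those of the lossy procedure pointwise. Your remark that one should compare unnormalized residual masses and then normalize is, if anything, a slightly cleaner way of stating what the paper does.
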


\subsection{From sequential to speculative cascades}
\label{sec:seq-to-spec}
Equipped with Algorithm \ref{alg:gen-speed}, we now propose new cascading techniques that implement their deferral rule in a speculative manner. Recall 
from \S\ref{sec:token-cascades}
that a 
token-level
cascade of two models $q$ and $p$ is defined by a deferral rule $r: \cV^{t-1}\rightarrow \{0,1\}$. 
For a prefix $x_{<t}$, the next-token distribution at position $t$ modeled by this cascade can be written as:
\[
\pi(v) = (1 - r(x_{<t})) \cdot q_t(v) + r(x_{<t}) \cdot p_t(v).
\]
In fact, for  all the deferral rules described in \S\ref{sec:prelims}, the resulting distribution can be described by a target distribution function $\bT_\delta$ of the form:
\begin{equation}
    \label{eqn:t_delta}
    \bT_\delta(q, p)(v)
     = (1 - \delta(q, p)) \cdot q(v) + \delta(q, p) \cdot p(v),
\end{equation}
for some function $\delta: \Delta_\cV \times \Delta_\cV \rightarrow \{0,1\}$ that maps  distributions $(q, p)$ to a binary decision. 
For example, for {\tt Chow}, $\delta(q, p) = \1\big(\max_v q(v) < 1 - \alpha\big)$, and for {\tt Diff}, $\delta(q, p) = \1\big(\max_v q(v) < \max_v p(v) - \alpha\big).$ See Table \ref{tab:targets} for a summary of target distributions for different deferral rules. \todowj{Table 1 is referred to for the first time here. But it is placed a few pages back. Hari: added a ref in the contributions list}

Our proposal is to then invoke the speculative sampling procedure in Algorithm \ref{alg:gen-speed} with $\bT_\delta$ as the target distribution function.   
We outline this generic \emph{speculative cascading} approach in Algorithm \ref{alg:spec-cascade}, and contrast it with the sequential execution of a deferral rule in Algorithm \ref{alg:seq-sampling}. 



\begin{remark}[\textbf{Exact implementation of  oracle deferral rule {\tt Diff}}]
\label{rem:exact-oracle}
In a sequential cascade, the large model's distribution $p$ cannot be used \todowj{Saying that $p$ is not available may confuse some readers. ``using $p$ defeats the purpose of a cascade''? Hari: done!} at the time the deferral decision is made (see Remark \ref{rem:oracle}), as this would defeat the purpose of the cascade. With a speculative cascade, however, we can employ rules like {\tt Diff}  that depend on both $q$ and $p$. This is because we run the large model $p$ in parallel on  drafts generated by the small model $q$, allowing us to compute both $p(\cdot)$ and $q(\cdot)$ on every prefix.
\end{remark}




So far we have considered deferral rules designed for sequential cascades. In what follows, we derive the optimal deferral rule $r$ for a speculative cascade, where we sample speculatively from a target distribution $\pi = (1 - r(x_{<t})) \cdot q_t + r(x_{<t}) \cdot p_t$ using $q_t$ as the drafter.

\begin{figure}
\begin{minipage}[c]{0.43\linewidth}
\vspace{-10pt}
\begin{algorithm}[H]\small
\caption{\tt{SpecDecode}}
\label{alg:spec-decode}
\begin{algorithmic}
\Require Models $q$, $p$, Prefix $x_{<t}$, Block size $\gamma$
\State $\bT(q,p) \defeq p$
\Ensure $\text{\tt{GenSpecSample}}(q, p, \bT, x_{<t}, \gamma)$
\end{algorithmic}
\end{algorithm}
\vspace{-0.45cm}
\begin{algorithm}[H]\small
\caption{\tt{TokenCascade}}
\label{alg:seq-sampling}
\begin{algorithmic}
\Require Models $q$, $p$, Deferral logic ${\delta}$, Prefix $x_{<t}$
\State $\bq(\cdot) \defeq q(\cdot|x_{<t})$
\If{${\delta}(\bq, \emptyset) = 0$ }
    \State Sample  $x_t \sim \bq(\cdot)$
\Else
    \State $\bp(\cdot) \defeq p(\cdot|x_{<t})$; ~~Sample  $x_t \sim \bp(\cdot)$
\EndIf
\Ensure $x_t$
\end{algorithmic}
\end{algorithm}
\vspace{-0.45cm}
\begin{algorithm}[H]\small
\caption{\tt{OracleCascade}}
\label{alg:oracle}
\begin{algorithmic}
\Require Models $q$, $p$, Deferral logic ${\delta}$, Prefix $x_{<t}$
\State $\bq(\cdot) \defeq q(\cdot|x_{<t})$;~~~ $\bp(\cdot) \defeq p(\cdot|x_{<t})$
\If{${\delta}(\bq, \bp) = 0$ }
    \State Sample  $x_t \sim \bq(\cdot)$
\Else
    \State Sample  $x_t \sim \bp(\cdot)$
\EndIf
\Ensure $x_t$
\end{algorithmic}
\end{algorithm}
\end{minipage}
\vspace{-0.27cm}
~~
\begin{minipage}[c]{0.57\linewidth}
\vspace{-5pt}
\begin{algorithm}[H]
\caption{\tt{GenSpecSample}}
\label{alg:gen-speed}
\begin{algorithmic}\small
\Require Models $q$, $p$, Target distr.\ $\bT$, Prefix $x_{<t}$, Block size $\gamma$ 
\vspace{1pt}
\State $[\gamma] \equiv \{0, \ldots, \gamma\}$
\vspace{2pt}
\State {\color{olive}Sample $\gamma$ tokens auto-regressively from $q$}
\vspace{1pt}
\For{$j = 0$ to $\gamma-1$}
\State $q_{t+j}(\cdot) \defeq q(\cdot|x_{<t+j});~~~~$ $x_{t+j} \sim q_{t+j}(\cdot)$
\EndFor
\vspace{2pt}
\State {\color{olive}Run $p$ in parallel to score $\gamma$ draft tokens}
\vspace{1pt}
\State $p_{t+j}(\cdot) \defeq p(\cdot|x_{<t+j}), ~\forall j \in [\gamma]$
\State $\pi_{t+j} = \bT(q_{t+j}, p_{t+j})$
\vspace{3pt}
\State {\color{olive}Find the earliest draft token that gets rejected}
\vspace{1pt}
\State $a_{j} \sim \text{Ber}\left(\min\left\{1, \frac{ \pi_{t+j}({x_{t+j}}) }{ q_{t+j}({x_{t+j}}) }\right\} \right), ~\forall j \in [\gamma -1];$
~~ $a_{\gamma} = 0$
\State $j^{*} = \min\{j \in [\gamma] \,:\, a_{j} = 0\}$
\vspace{3pt}
\State {\color{olive}Sample a new token from residual distribution}
\vspace{1pt}
\State $p_{\rm res}(\cdot) =
\begin{cases}
\operatorname{norm}\hspace{-1pt}\left(\max\left\{0,\, \pi_{t+j^{*}}(\cdot) - q_{t+j^{*}}(\cdot)\right\}\right)
& \text{if}~j^* < \gamma
\\
\pi_{t+\gamma}(\cdot) & \text{else}
\end{cases}
$
\State Sample  $x_{t+j^{*}} \sim p_{\rm res}(\cdot)$
\vspace{1pt}
\Ensure $x_t, \ldots, x_{t+j^{*}}$
\end{algorithmic}
\end{algorithm}
\vspace{-0.55cm}
\begin{algorithm}[H]\small
\caption{{\tt{SpecCascade}}}
\label{alg:spec-cascade}
\begin{algorithmic}
\Require Models $q$, $p$, Deferral logic $\delta$, Prefix $x_{<t}$, Block size $\gamma$
\State $\bT_\delta(q,p) \defeq (1 - \delta(q,p)) \cdot q +  \delta(q,p)  \cdot p$
\Ensure $\text{\tt{GenSpecSample}}(q, p, \bT_\delta, x_{<t}, \gamma)$
\end{algorithmic}
\end{algorithm}
\end{minipage}
\end{figure}

\subsection{Optimal speculative cascade deferral} 
\label{sec:opt-spec}
%
As with sequential cascades (\S\ref{sec:prelims}), we begin by defining an  objective to minimize. We seek a deferral rule  $r: \cV^{t-1} \rightarrow \{0,1\}$ that minimizes a loss against the ground-truth distribution, while limiting the inference cost to be within a budget. 
(Per above, this deferral rule implicitly defines a target distribution $\pi$.)
The inference cost crucially depends on how frequently a draft token is rejected in the verification phase, triggering a rollback. 
To this end, we derive the probability that a token sampled from $q$  is rejected during verification, for a target distribution resulting from a deferral rule $r$.\todoakm{nit, in Section 3 we did not keep a separate section on ``deferral risk for token-level cascades''. Hari: combined with next section!}
\begin{lemma}
\label{lem:token-acceptance-rate}
For a given prefix $x_{<t}$,  and target distribution $\pi = (1 - r(x_{<t})) \cdot q_t + r(x_{<t}) \cdot p_t$,\todowj{$\pi_t$?} the probability of a token  drawn from draft distribution $q_t$ being rejected is equal to:
$r(x_{<t}) \cdot D_{\textup{\tv}}(\bp, \bq),$
where $D_\textup{\tv}(p, q) = \sum_{v \in \cV}\max\{0, p(v) - q(v)\}$ is the \emph{TV} distance between $p$ and $q$.
\end{lemma}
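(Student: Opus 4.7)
The plan is to directly compute the total rejection probability by marginalising over the draft token $v \sim q_t$. From the acceptance rule in Algorithm~\ref{alg:gen-speed}, a draft token $v$ is accepted with probability $\min\{1, \pi(v)/q_t(v)\}$, so conditional on $v$ the rejection probability is $\max\{0,\, 1 - \pi(v)/q_t(v)\}$. Averaging over $v \sim q_t$ gives
\begin{align*}
\Pr[\text{reject}] \;=\; \sum_{v \in \cV} q_t(v)\cdot \max\!\left\{0,\, 1 - \tfrac{\pi(v)}{q_t(v)}\right\} \;=\; \sum_{v \in \cV} \max\{0,\, q_t(v) - \pi(v)\}.
\end{align*}

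Next I would substitute $\pi(v) = (1 - r(x_{<t})) \cdot q_t(v) + r(x_{<t}) \cdot p_t(v)$ and simplify the integrand. A direct computation gives
\begin{align*}
q_t(v) - \pi(v) \;=\; q_t(v) - (1 - r(x_{<t}))\,q_t(v) - r(x_{<t})\,p_t(v) \;=\; r(x_{<t})\cdot\bigl(q_t(v) - p_t(v)\bigr).
\end{align*}
Since $r(x_{<t}) \in \{0,1\}$ is non-negative, it pulls out of the $\max$, yielding $\Pr[\text{reject}] = r(x_{<t}) \cdot \sum_v \max\{0,\, q_t(v) - p_t(v)\}$.

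Finally, I would invoke the well-known symmetry of the positive part of a signed measure: because $\sum_v (q_t(v) - p_t(v)) = 0$, we have $\sum_v \max\{0, q_t(v) - p_t(v)\} = \sum_v \max\{0, p_t(v) - q_t(v)\} = D_{\tv}(p_t, q_t)$ (in the convention fixed by the lemma's statement). Combining these gives $\Pr[\text{reject}] = r(x_{<t}) \cdot D_{\tv}(p_t, q_t)$, as claimed.

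This proof is essentially a one-line calculation, so there is no real obstacle; the only subtlety worth flagging is the symmetry step that identifies the two expressions for TV distance, and the sanity check that at $r(x_{<t}) = 0$ we have $\pi = q_t$ and the rejection probability is $0$, while at $r(x_{<t}) = 1$ we have $\pi = p_t$ and recover the standard speculative-decoding rejection rate $D_{\tv}(p_t, q_t)$.
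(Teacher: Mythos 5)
Your proposal is correct and follows essentially the same argument as the paper: marginalize the per-token rejection probability $\max\{0, 1 - \pi(v)/q_t(v)\}$ over $v \sim q_t$, reduce to a sum of positive parts, substitute the form of $\pi$, and identify the result as $r(x_{<t}) \cdot D_{\tv}(p_t, q_t)$. The only cosmetic difference is that you factor $r(x_{<t})$ out of the $\max$ and invoke the symmetry $\sum_v \max\{0, q_t(v)-p_t(v)\} = \sum_v \max\{0, p_t(v)-q_t(v)\}$ at the end, whereas the paper uses the normalization of $\pi$ to flip to $\max\{0,\pi(v)-q_t(v)\}$ first and then case-splits on $r(x_{<t}) \in \{0,1\}$; both are valid.
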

Intuitively, whenever $r(x_{<t}) = 0$, $\pi(v) = q_t(v)$, and therefore there is no rejection or roll-back; when $r(x_{<t}) = 1$, the rejection rate equals $D_{\textup{\tv}}(\bp, \bq)$.


For a fixed prefix $x_{<t}$, we formulate the goal of finding a solution to:
\begin{align}
\hspace{-0.25cm}\min_{r}&~~\E_{v \sim \Pr(\cdot|x_{<t})}\Big[ \big(1 - r(x_{<t})\big) \cdot \ell(v, q_t) + r(x_{<t}) \cdot \ell(v, p_t)\big) \Big]
\hspace{0.1cm}\text{s.t.}~~
 r(x_{<t}) \cdot D_{\tv}(\bp, \bq)  \,\leq\, B,
\label{eq:spec-def-constrained}
\end{align}
for some budget $B > 0$. 
Equivalently, one may minimize an unconstrained objective similar to \eqref{eq:seq-def-risk},
for suitable cost parameter $\alpha > 0$ (see \S\ref{app:equivalence}):
\begin{align}
\hspace{-5pt}L_{\rm spec}(r; x_{<t}) &= \E_{v \sim \Pr(\cdot|x_{<t})}\big[ \big(1 - r(x_{<t})\big) \cdot \ell(v, q_t) + r(x_{<t}) \cdot \big(\ell(v, p_t) + \alpha \cdot D_{\tv}(\bp, \bq)\big) \big],
\label{eq:spec-def-risk}
\end{align}

Contrasting \eqref{eq:spec-def-risk}\todowj{nit: sometimes we use "\eqref{eq:seq-def-risk}". Sometimes we use (\ref{eq:seq-def-risk}).} with the deferral risk in \eqref{eq:seq-def-risk} for a sequential cascade, a  key difference is that the cost of deferring to  the larger model is \emph{no longer a constant}, but depends on the similarity between $\bq$ and $\bp$, as measured by the TV distance between them.

\todoakm{nit, have we used the phrase ``speculative deferral'' in the text? what about ``optimal speculative cascade deferral''?. Hari: done!}
We next derive the optimal deferral rule for \eqref{eq:spec-def-risk}, and construct a feasible estimator for it.

\begin{lemma}[{Optimal deferral for speculative cascades}]
The minimizer of \eqref{eq:spec-def-risk} is of the form:
\begin{align}
    r^*(x_{<t}) = 1 ~~\iff~~ \E_{v \sim \Pr(\cdot|x_{<t})}\left[\ell(v, q_t)\right] \,>\, \E_{v \sim \Pr(\cdot|x_{<t})}\left[\ell(v, p_t)\right] + \alpha \cdot D_{\textup{\tv}}(\bp, \bq).
    \label{eq:spec-def-opt}
\end{align}
\vspace{-15pt}
\label{lem:spec-def-opt}
\end{lemma}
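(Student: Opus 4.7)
The lemma has the same structure as Lemma~\ref{lem:seq-def-opt}, the only twist being that the deferral cost depends on $(q_t, p_t)$ through $D_{\textup{\tv}}(\bp, \bq)$ rather than being a constant $\alpha$. Because the problem is posed for a \emph{fixed} prefix $x_{<t}$, the minimization reduces to choosing a single bit $r(x_{<t}) \in \{0,1\}$: the claim will follow immediately by comparing the values of $L_{\rm spec}$ at the two possible choices.

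\textbf{Key steps.} First, I would observe that $D_{\textup{\tv}}(\bp,\bq)$ depends only on the prefix $x_{<t}$ and hence behaves as a constant inside the expectation over $v \sim \Pr(\cdot \mid x_{<t})$. Second, I would evaluate the objective on each branch:
\begin{align*}
L_{\rm spec}(r;x_{<t})\big|_{r(x_{<t})=0} &= \E_{v \sim \Pr(\cdot|x_{<t})}[\ell(v,q_t)],\\
L_{\rm spec}(r;x_{<t})\big|_{r(x_{<t})=1} &= \E_{v \sim \Pr(\cdot|x_{<t})}[\ell(v,p_t)] + \alpha \cdot D_{\textup{\tv}}(\bp,\bq).
\end{align*}
Third, since the minimum over $\{0,1\}$ is attained by whichever branch is smaller, the optimal rule sets $r^*(x_{<t}) = 1$ exactly when the second expression is strictly smaller than the first, which rearranges to the stated inequality
\[
\E_{v \sim \Pr(\cdot|x_{<t})}[\ell(v,q_t)] \,>\, \E_{v \sim \Pr(\cdot|x_{<t})}[\ell(v,p_t)] + \alpha \cdot D_{\textup{\tv}}(\bp,\bq).
\]
(Ties may be broken in favor of $r^*=0$ without loss of generality, matching the strict inequality in the lemma statement.)

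\textbf{Expected difficulty.} There is no real obstacle: the argument is the standard pointwise-optimality reduction used for Lemma~\ref{lem:seq-def-opt}, and the only subtlety is recognizing that $D_{\textup{\tv}}(\bp,\bq)$ is a deterministic function of the prefix and can be pulled out of the expectation. The content of the lemma relative to Lemma~\ref{lem:seq-def-opt} is thus not the derivation itself but the reinterpretation of the deferral cost: the threshold for deferring is now adaptive, shrinking to $\alpha \cdot D_{\textup{\tv}}(\bp,\bq)$ so that when the drafter and verifier largely agree (small TV), deferral is essentially free, while large disagreement (and hence high expected rollback cost) raises the bar.
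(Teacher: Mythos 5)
Your proposal is correct and follows essentially the same argument as the paper: for a fixed prefix the objective is affine in the single bit $r(x_{<t})$, so one compares the value at $r=0$, namely $\E_{v}[\ell(v,q_t)]$, with the value at $r=1$, namely $\E_{v}[\ell(v,p_t)] + \alpha\cdot D_{\textup{\tv}}(\bp,\bq)$, and defers exactly when the latter is strictly smaller. The paper phrases this by factoring the risk as $r(x_{<t})$ times the difference of the two branch values plus a constant, which is the same pointwise comparison you make.
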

When $p_t$ and $q_t$ are similar, the rejection rate for $q_t$ is low, and hence the deferral decision will  depend largely on which of the two models yields a lower expected loss. When $p_t$ and $q_t$ are very different, the optimal decision is to defer to  $p_t$ only when it yields a substantially lower loss than $q_t$. 

\emph{Plug-in estimator for \eqref{eq:spec-def-opt}.} 
The optimal rule requires estimating expectations with respect the ground-truth distribution $\Pr(\cdot|x_{<t}).$ We employ similar plug-in estimators as the ones used with sequential cascades (\S\ref{sec:cascade-meets-speed}). When $\ell=\ell_\zo$, we replace the expected 0-1 loss with (one minus) the  maximum probability from the model, giving us:
\begin{empheq}[box=\widefbox]{align}
\textstyle
    {\hat{r}_{\rm\tt OPT}(x_{<t}) = 1 ~~\iff~~ \max_{v} q_t(v)  \,<\, \max_{v} p_t(v) - \alpha \cdot D_{\textup{\tv}}(\bp, \bq).}
    \label{eq:spec-opt-def-01-plugin}
\end{empheq}
%
The efficacy of the  plug-in estimator depends on how closely the individual models approximate the ground-truth distribution $\Pr(\cdot|x_{<t})$; this is formalized by the following regret bound:
\begin{lemma}[{Regret bound for $\hat{r}_{\rm\tt OPT}$}] 
\label{lem:regret-01}
Suppose $\ell = \ell_{\emph{\zo}}$. Then for a fixed prefix $x_{<t}$:
\begin{align*}
L_{\rm spec}(\hat{r}_{\rm\tt OPT}; x_{<t}) - \min_r\, L_{\rm spec}(r; x_{<t}) ~\leq~ \max_{v \in \cV} \big|\Pr(v|x_{<t}) - q_t(v)\big| \,+\, \max_{v \in \cV} \big|\Pr(v|x_{<t}) - p_t(v)\big|.
\\[-20pt]
\end{align*}
\end{lemma}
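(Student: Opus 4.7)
}

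The plan is to reduce the regret to the sup-norm approximation error of the two conditional distributions, via a standard ``plug-in'' argument for a thresholded classifier. First, rewrite the speculative deferral risk (\ref{eq:spec-def-risk}) in the affine form
\begin{align*}
L_{\rm spec}(r; x_{<t}) = \E_{v \sim \Pr(\cdot|x_{<t})}[\ell(v, q_t)] - r(x_{<t}) \cdot g^*(x_{<t}),
\end{align*}
where $g^*(x_{<t}) \defeq \E_{v \sim \Pr(\cdot|x_{<t})}[\ell(v, q_t)] - \E_{v \sim \Pr(\cdot|x_{<t})}[\ell(v, p_t)] - \alpha \cdot D_{\tv}(\bp, \bq)$. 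By Lemma \ref{lem:spec-def-opt}, the minimizer is $r^*(x_{<t}) = \1(g^*(x_{<t}) > 0)$, so
\begin{align*}
L_{\rm spec}(\hat{r}_{\rm\tt OPT}; x_{<t}) - \min_r L_{\rm spec}(r; x_{<t}) = \big(r^*(x_{<t}) - \hat{r}_{\rm\tt OPT}(x_{<t})\big) \cdot g^*(x_{<t}).
\end{align*}

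Next, define the plug-in surrogate $\hat{g}(x_{<t}) \defeq (1 - \max_v q_t(v)) - (1 - \max_v p_t(v)) - \alpha D_{\tv}(\bp, \bq) = \max_v p_t(v) - \max_v q_t(v) - \alpha D_{\tv}(\bp, \bq)$, so that $\hat{r}_{\rm\tt OPT}(x_{<t}) = \1(\hat{g}(x_{<t}) > 0)$. A short case analysis shows that whenever $r^*(x_{<t}) \ne \hat{r}_{\rm\tt OPT}(x_{<t})$, the quantities $g^*$ and $\hat{g}$ have opposite signs, and hence $|g^*(x_{<t})| \leq |g^*(x_{<t}) - \hat{g}(x_{<t})|$. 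This yields the uniform regret bound
\begin{align*}
L_{\rm spec}(\hat{r}_{\rm\tt OPT}; x_{<t}) - \min_r L_{\rm spec}(r; x_{<t}) \leq |g^*(x_{<t}) - \hat{g}(x_{<t})|.
\end{align*}

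Finally, I would bound $|g^* - \hat{g}|$ term-by-term using the key identity that for $\ell = \ell_\zo$ and $v_q^* \defeq \argmax_{v'} q_t(v')$,
\begin{align*}
\E_{v \sim \Pr(\cdot|x_{<t})}[\ell_\zo(v, q_t)] - (1 - \max_v q_t(v)) = q_t(v_q^*) - \Pr(v_q^* \mid x_{<t}),
\end{align*}
whose absolute value is at most $\max_{v \in \cV} |\Pr(v|x_{<t}) - q_t(v)|$; an analogous bound holds for $p_t$. Since the $\alpha D_{\tv}(\bp,\bq)$ terms cancel in $g^* - \hat{g}$, a triangle inequality delivers the claimed bound. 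The main obstacle is purely bookkeeping in the case analysis; the approximation-error identity is immediate from the definition of $\ell_\zo$ and bounds the difference between the true expected 0-1 loss and its $\max_v$ plug-in by the sup-norm gap to $\Pr$.
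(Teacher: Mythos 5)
Your proposal is correct and follows essentially the same route as the paper's proof: both express the regret as the disagreement between $\hat{r}_{\rm\tt OPT}$ and $r^*$ times the true score, reduce it to the gap between that score and its plug-in surrogate (the $\alpha \cdot D_{\textup{\tv}}(p_t,q_t)$ terms cancelling), and bound each model's contribution via the identity $\E_{v \sim \Pr(\cdot|x_{<t})}[\ell_{\zo}(v,q_t)] - (1-\max_v q_t(v)) = q_t(v_q^*) - \Pr(v_q^*|x_{<t})$, hence by the sup-norm gaps to $\Pr(\cdot|x_{<t})$. The only cosmetic difference is that you package the case analysis as ``disagreement implies $|g^*| \le |g^* - \hat{g}|$,'' whereas the paper shows the analogous term (its $\text{term}_1$) is non-positive and then bounds the two remaining approximation terms unconditionally.
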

One can now run the speculative cascading procedure in Algorithm \ref{alg:spec-cascade} using  \eqref{eq:spec-opt-def-01-plugin} as the deferral rule; the corresponding $\delta(\cdot)$ is listed in Table \ref{tab:targets}. See  \S\ref{app:opt-log} for a similar derivation for $\ell=\ell_{\log}$.

\if 0
\begin{remark}[\textbf{Special case of greedy decoding}] 
When temperature $T \rightarrow 0$, $D_{\textup{\tv}}(\tilde{p}_t, \tilde{q}_t) = 1$ whenever $\argmax_v p_t(v) \ne \argmax_v q_t(v)$, and is zero otherwise. In this case, running Algorithm \ref{alg:spec-cascade} with  $\tilde{r}_{\rm\tt OPT}$ as the deferral rule (and $\tilde{q}_t$ as the drafter) is equivalent to running it with  $\hat{r}_{\rm\tt Diff}$ in \eqref{eq:seq-opt-def-01-plugin} as the deferral rule. In other words, for greedy decoding, the optimal deferral rules for a speculative cascade coincides with that for a sequential cascade.  We formalize this in Lemma \ref{lem:greedy-special-case} in  \S\ref{app:greedy}. 
\end{remark}
\fi
{\haricolor{blue}
\subsection{Token-specific speculative cascades}
\label{sec:sample-dep}
The plug-in deferral rules 
in  (\ref{eq:seq-opt-def-01-plugin}) and (\ref{eq:spec-opt-def-01-plugin}) 
decide between the drafter's distribution $q_t(\cdot)$ and the verifier's distribution $p_t(\cdot)$ by comparing their maximum token probabilities. 
A downside to this approach is that the draft token $x_t \sim q_t(\cdot)$ may not 
maximize
$q_t(\cdot)$.
Thus, even when  $x_{t}$ is of poor quality, we may end up accepting it because $q_t$ happens to be \emph{more peaked} than $p_t$. 

To alleviate this problem, we propose the use of \emph{token-specific deferral rules} $r: \cV^{t-1} \times \cV \rightarrow \{0,1\}$
that use both the prefix $x_{<t}$ and a candidate token $v$ to provide a binary decision $r(x_{<t}, v) \in \{0, 1\}$, with 0 indicating that the token is of acceptable quality. We may then construct a target distribution of the following form:
\begin{align}
    \pi_{\rm\tt Token}(v) = q_t(v) \cdot (1 - r(x_{<t}, v))  + p_t(v) \cdot \eta,
    \label{eq:sample-dep-target-distribution}
\end{align}
where $\eta = \sum_{v' \in \cV} r(x_{<t}, v') \cdot q_t(v')$ is a normalizing term chosen to ensure that $\sum_v \pi_{\tt Token}(v) = 1$. This target distribution closely mimics  $q_t(\cdot)$  on tokens that the deferral rule $r$ deems to be of acceptable quality, and defers to $p_t(\cdot)$ otherwise. 
One can modify the generic speculative sampling algorithm in Algorithm \ref{alg:gen-speed} to use $\pi_{\tt Token}$ as the target distribution, as shown in Algorithm \ref{alg:token-spec} in \S\ref{app:sample-dep}.

To design 
$r$, we propose a  heuristic variant of the {\tt Diff} rule in equation \ref{eq:seq-def-opt} that compares the expected 0-1 loss from the candidate token $v$ with the expected 0-1 loss from distribution $p_t$ (in \S\ref{app:sample-dep}, we discuss deriving a similar variant of the {\tt OPT} rule in equation \ref{eq:spec-def-opt}):
\todoakm{the reviewer may wonder how these relate to Equations 4 and 9. Hari: added refs to eq 4 and 9.}
\begin{align}
    r(x_{<t}, v) = 1 ~~\iff~~ 1 - \Pr(v|x_{<t}) \,>\, \E_{v \sim \Pr(\cdot|x_{<t})}\left[\ell_{\zo}(v, p_t)\right] + \alpha,
    \label{eq:token-specific-r}
\end{align}
for a cost parameter $\alpha$. 
The following are some simple plug-in approximations to \eqref{eq:token-specific-r}:
\begin{empheq}[box=\widefbox]{align}
    \hat{r}_{\rm\tt TokenV1}(x_{<t}, v) = 1 &~~\iff~~\textstyle q_t(v)  \,<\, \max_{v'} p_t(v') - \alpha
    \label{eq:sample-dep-01-plugin-v1}\\
    \hat{r}_{\rm\tt TokenV2}(x_{<t}, v) = 1 &~~\iff~~\textstyle p_t(v)  \,<\, \max_{v'} p_t(v') - \alpha
    \label{eq:sample-dep-01-plugin-v2}\\
    \hat{r}_{\rm\tt TokenV3}(x_{<t}, v) = 1 &~~\iff~~\textstyle p_t(v)  \,<\, \max_{v'} p_t(v') \cdot(1 - \alpha)
    \label{eq:sample-dep-01-plugin-v3},
\end{empheq}
where we approximate $\Pr(v|x_{<t})$ with either $q_t(v)$ or $p_t(v)$. Equation \ref{eq:sample-dep-01-plugin-v3}  is a multiplicative plug-in approximation that has  similarities to the rejection criterion used 
by \citet{Leviathan:2023} for lossy speculative greedy decoding, and  results in an intuitive target distribution:
\begin{align*}
\textstyle
    \pi_{\tt TokenV3}(v) =
    q_t(v) \cdot \1\big(v \in \Top_\alpha\big)
    \,+\,
    p_t(v) \cdot \sum_{v' \notin \Top_\alpha} q_t(v'),
\end{align*}
where $\Top_\alpha = \{v \in \cV:\, p_t(v)  \,\geq\, \max_{v'} p_t(v') \cdot(1 - \alpha)\}$ is the set of top ranked tokens by $p_t(\cdot)$. For these top-ranked tokens, $\pi_{\tt TokenV3}$ approximates $q_t(\cdot)$; for the rest, it is a re-scaled version of $p_t(\cdot)$. 
}
\section{Further related work and conclusions}
\label{sec:related}
There has been a stream of work on improving
the  draft generation process in speculative decoding; these include having the drafter and verifier share the same backbone \citep{Stern:2018, kim2024towards, cai2024medusa, monea2023pass, hooper2023speed, zhang2023draft, elhoushi2024layer, liu2024kangaroo}, using multiple small draft models \cite{chen2023cascade, wang2024minions}, using  tree-structured draft batches \citep{spector2023accelerating, miao2024specinfer},  distilling the drafter with the verifier \citep{zhou2024distillspec}, and leveraging multiple sampled draft candidates \cite{Sun:2023}.\todoakm{maybe this can come after Experiments? Hari: the problem with doing this is that we reference this section in the expts when pointing to the target distribution for BiLD!}

The work that is most closely related to our specific proposal is the Big Little Decoder (BiLD)~\citep{kim2023speculative}, which can be seen as another lossy variant of speculative decoding \citep{Leviathan:2023, Tran-Thien_2023, zhou2024distillspec}. BiLD has two phases: a \emph{fallback} phase, during which the drafter $q$ is run auto-regressively until its maximum predicted probability is sufficiently low; and a \emph{rollback} phase, during which the verifier $p$ is run in parallel on the prefixes generated by $q$ and rolls back to the point where $D(q,p) > \alpha$, for a metric $D$ that measures discrepancy 
and threshold $\alpha$. The fallback phase implements Chow's deferral rule in (\ref{eq:chow01}), and allows for the draft window size to vary dynamically based on an estimate of how likely the draft tokens will be accepted; the rollback phase can be seen as a deterministic variant of the rejection sampling algorithm of~\citet{Leviathan:2023}.

 An advantage of {BiLD} over the rejection sampling algorithm in \citep{Leviathan:2023} is the use of Chow's rule to vary the draft window size. However, the final target distribution it seeks to mimic, $\bT_{\textup{BiLD}}(q, p)(v) = \1(D(q, p) \leq \alpha)\cdot q(v) + \1(D(q, p) > \alpha)\cdot p(v)$, is an approximation to $p$; specifically, the target distribution $\pi = \bT_{\textup{BiLD}}(q, p)$ is chosen to satisfy $D(\pi, p) \leq \alpha$. Hence, in cases where $q$ deviates substantially from $p$,  BiLD would choose $p$  as the target distribution, even when $q$ offers better quality on a prefix (where quality can be measured using a suitable loss function). In contrast, our proposed approach in \S\ref{sec:spec-cascades} uses speculative decoding to approximate  target distributions that seek to \emph{optimally} cascade between $q$ and $p$.
In our experiments, we compare the efficacy of using  $\bT_{\textup{BiLD}}$ as the target distribution with the target distributions we propose in this paper (see Table \ref{tab:targets}).


\begin{figure}[t]
    \centering
\begin{table}[H]
    \centering
    \caption{Reduction in latency from different methods ($T=1, \gamma=5$) when matching the quality of the large model (cols 2--7), and the best quality metric when matching each method yields without exceeding the latency of the large model (cols 8--13). Quality is measured in terms of the BLEU for WMT and ROUGE-2 for XSum and CNNDM. See Figure \ref{fig:tradeoffs} for $T=0.5$ and \S\ref{app:T5-greedy} for $T=0$.}
    \vspace{-5pt}
    \resizebox{\columnwidth}{!}{%
    \begin{tabular}{lcccccccccccccccccc}
    \toprule
    & \multicolumn{7}{c}{Latency$\downarrow$ when matching large model's quality}
    & &
    \multicolumn{7}{c}{Best quality  \emph{without} exceeding large model's latency}
    \\
    \cmidrule{2-8}
    \cmidrule{10-16}
    & \multicolumn{3}{c}{Small $\rightarrow$ Large}
    &
    & \multicolumn{3}{c}{Small $\rightarrow$ XL}
    &
    & \multicolumn{3}{c}{Small $\rightarrow$ Large}
    &
    & \multicolumn{3}{c}{Small $\rightarrow$ XL}
    \\
     \cmidrule{2-4}
     \cmidrule{6-8}
     \cmidrule{10-12}
     \cmidrule{14-16}
    Method & 
    WMT & XSum & CNNDM & & 
    WMT & XSum & CNNDM & &
    WMT & XSum & CNNDM & & 
    WMT & XSum & CNNDM 
    \\
    \midrule
         \texttt{SeqCascade} [{\tt Chow}] & 
         1.55$\times$ &
         0.84$\times$ &
         0.98$\times$ & &
         2.46$\times$
         & 0.93$\times$
         & 0.94$\times$
         & 
         & 16.56
         & 12.97
         & 9.91 
         &
         & 16.29
         & 16.40
         & 11.18
         \\
    \midrule
         \texttt{TokenCascade} [{\tt Chow}] & 
         1.03$\times$ & 
         0.93$\times$ & 
         1.40$\times$ & & 
         1.46$\times$ &
         0.82$\times$ &
         1.51$\times$ & 
         &16.52 
         &13.30
         &10.36 &
         & 16.65
         & 17.09
         &11.44
         \\
         \texttt{SpecDecode} [{\tt Lossy}] & 
         1.61$\times$ & 
         1.10$\times$
         & 1.57$\times$ &
         & 2.17$\times$
         & 1.28$\times$ 
         & 2.07$\times$ &&
         17.26& 
         13.90& 
         10.43& 
         & 16.94
         &17.36
         &11.53
         \\
         \texttt{BiLD}$^*$ 
         & 1.34$\times$ 
         & 1.04$\times$ 
         & 1.38$\times$ &
         & 1.85$\times$
         & 1.28$\times$
         & 1.84$\times$ &&
         16.49& 
         13.81&
         10.14& 
         & 15.90
         & 17.35
         & 11.35
         \\
    \midrule
         \texttt{SpecCascade} [{\tt Chow}]  
         & 1.43$\times$ 
         & 1.04$\times$ 
         & 1.41$\times$ &
         & 2.01$\times$
         & 1.28$\times$
         & 1.97$\times$ &&
         17.76&
         13.82&
         10.28& 
         & 16.35
         &17.36
         & 11.39
         \\
         \texttt{SpecCascade} [{\tt Diff}] 
         & 1.79$\times$ 
         & {1.17$\times$}
         & 1.75$\times$ &
         & 2.44$\times$ 
         & 1.30$\times$
         & {2.15$\times$} & &
         18.04&
         14.00&
         10.64& 
         & 18.07
         & 17.37
         & 11.67
         \\
         \texttt{SpecCascade} [{\tt OPT}] 
         & \textbf{1.95$\times$}
         & {1.17$\times$}
         & {1.80$\times$} &
         & \textbf{2.61$\times$} 
         & {1.34$\times$} 
         & \textbf{2.21$\times$} & &
         {18.33}& 
         {14.10}& 
         {10.86}& 
         & {18.09}
         &{17.48}
         &{11.85}
         \\
         \texttt{SpecCascade} [{\tt Token}]
         & 1.85$\times$
         & \textbf{1.18$\times$}
         & \textbf{1.89$\times$} &
         & {2.50$\times$}
         & \textbf{1.40$\times$}
         & {1.89$\times$}
         & &
         \textbf{22.50} &
         \textbf{15.85} &
         \textbf{12.63} &
         & 
         \textbf{22.70} &
         \textbf{18.79} &
         \textbf{12.63}
         \\
    \bottomrule
    \end{tabular}
    }
    \label{tab:speedup}
    \vspace{-10pt}
\end{table}
    \hspace{-9pt}
    \includegraphics[scale=0.35]{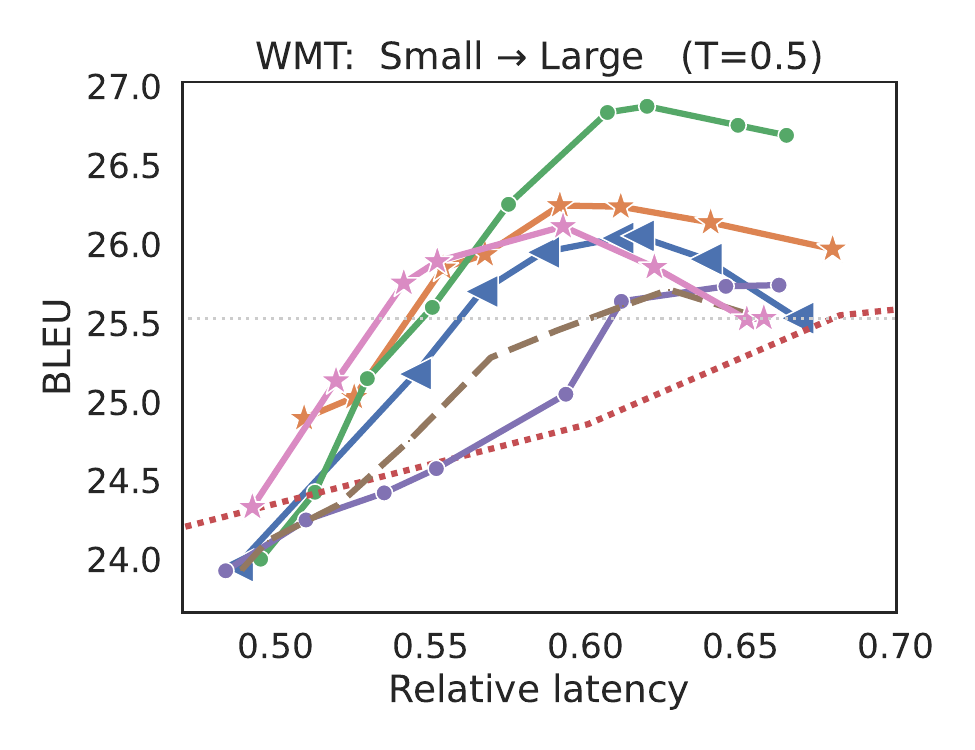}
    \hspace{-16pt}
    \includegraphics[scale=0.35]{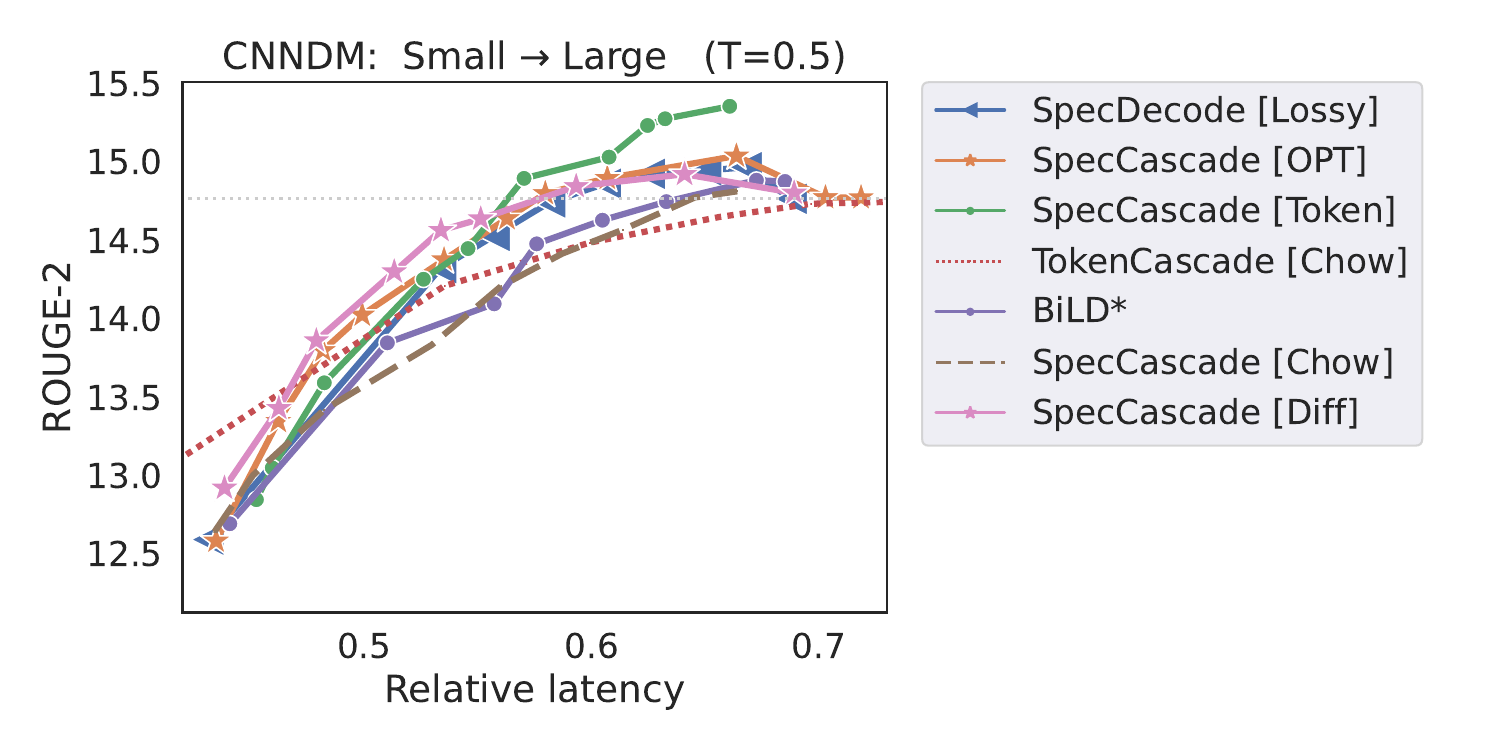}
    \caption{Latency-quality trade-off plots for methods that interleave T5-small with T5-large ($\gamma=5$). Latency is measured \emph{relative} to that of  calling  T5-large on all inputs.
    The horizontal dotted line denotes the quality of T5-large. \S\ref{app:expts-temperature}--\ref{app:expts-gamma} contain more plots with varying temperatures and $\gamma$.
    }
    \label{fig:tradeoffs}
\end{figure}

\section{Experimental results}
\label{sec:expts}
We compare our speculative cascading techniques with both sequential cascades and standard speculative decoding on a range of language benchmarks, including translation, reasoning, coding, QA, etc. 
%
We evaluate speculative cascades constructed from both the \textbf{T5 v1.1 family} of encoder-decoder models \citep{Raffel:2020}, and \textbf{Gemma v2} decoder-only models \citep{team2024gemma}. 
We construct the cascades with four deferral rules: (i) {\tt Chow} in (\ref{eq:chow01}), (ii) {\tt Diff} in (\ref{eq:seq-opt-def-01-plugin}), (iii) {\tt OPT} in (\ref{eq:spec-opt-def-01-plugin}), and (iv) 
the {\tt Token}-specific rule  in (\ref{eq:sample-dep-01-plugin-v3}) (we present results for the V1 and V2 variants in \S\ref{app:expts-token-specific}). 

{\haricolor{blue}
\textbf{Cascades versus SpecDecode evaluation}.\ Our evaluation protocol is markedly different from the standard evaluation of speculative decoding algorithms, where the goal is to speed up inference with a large model while preserving its output distribution.
In contrast, our focus is on \textbf{trading-off quality for lower inference costs by cascading} two models of different sizes. 
\todoakm{the ``and'' seems like it could be a new sentence. it is not related to the point about deviating from the standard SPEED evaluation protocol? Hari: done!}
We also 
 \textbf{do \textit{not} claim to develop a new state-of-the-art method for fast LM inference}. Furthermore, the speculative cascades we design build on the  original speculative decoding algorithm \citet{Leviathan:2023}. While one could potentially also adapt our proposal to  other recent variants of speculative decoding \citep{cai2024medusa, li2024eagle}, these involve a wholly orthogonal suite of techniques to what we propose (such as architectural changes, allowing for multiple drafts, distillation, and so on; see \S\ref{sec:related}).
}


\textbf{Baselines.} The cascading and speculative decoding methods we compare to include:
\begin{enumerate}[itemsep=1pt,topsep=0pt,leftmargin=16pt,label=(\roman*)]
    \item  \emph{Sequence-level cascade} \citep{jitkrittum2024does,gupta2024language} based on sequence-level Chow's rule in \eqref{eq:sequential-chow} (\texttt{SeqCascade} [{\tt Chow}]).
    \item \emph{Token-level cascade}
    outlined in Algorithm \ref{alg:seq-sampling}, with token-level Chow's rule in \eqref{eq:chow01} used for deferral \citep{chow1970optimum, Gupta:2022} (\texttt{TokenCascade} [{\tt Chow}]).
    \item \emph{Lossy speculative decoding} described in \S\ref{sec:prelims}, with both $\beta = 1$ \citep{Leviathan:2023, zhou2024distillspec} (\texttt{SpecDecode} [{\tt Lossy}]) and $\beta$ tuned using the procedure in \cite{Tran-Thien_2023} ({\tt Lossy}$^\star$).
    \item \emph{Big-Little Decoder approach} \citep{kim2023speculative}, with both the original deterministic version (\texttt{BiLD}), and the variant where we apply Algorithm \ref{alg:gen-speed} to the target distribution $\mathbb{T}_{\textup{BiLD}}$ in \S\ref{sec:related} (\texttt{BiLD}$^*$).
\end{enumerate}

\textbf{Fine-tuned T5 cascades.} 
\todoakm{maybe the para heading can explicate fine-tuning here, vs few-shot for Gemma. Hari: done!}
Our experiments on T5 models are based on the setup in \cite{zhou2024distillspec}; see \S\ref{app:expt-setup} for details. We use T5-small (77M) as the small model, and either T5-large (800M) or T5-XL (3B) as the large model. In each case, we  supervised
 fine-tune these models on three  tasks: WMT EN$\rightarrow$DE translation \citep{bojar-EtAl:2014:W14-33}, CNN/DM summarization \citep{hermann2015teaching}, and XSum abstractive summarization \citep{narayan2018don}. 
\todoakm{do the latency numbers match what is reported in DistillSpec? Hari: our quality metrics match what they report; I'll be able to compare with their speed ups once I better understand how they compute relative latency}
We use temperatures $T=0, 0.1, 0.5, 1.0$, and block sizes $\gamma = 3, 5, 7$ (full results in \S\ref{app:expts}). 
Following the protocol in \cite{Leviathan:2023, zhou2024distillspec}, to measure latency, we evaluate the wall-clock decoding time with batch size 1.


\if 0
We evaluate all methods under both greedy decoding ($T=0$), and temperature sampling with temperature $T=1$. As noted in \S\ref{sec:proposal}, with greedy decoding, the {\tt OPT} deferral rule  coincides with the {\tt Diff} deferral rule. 
\fi

\begin{figure}[t]
\centering
    \includegraphics[scale=0.27]{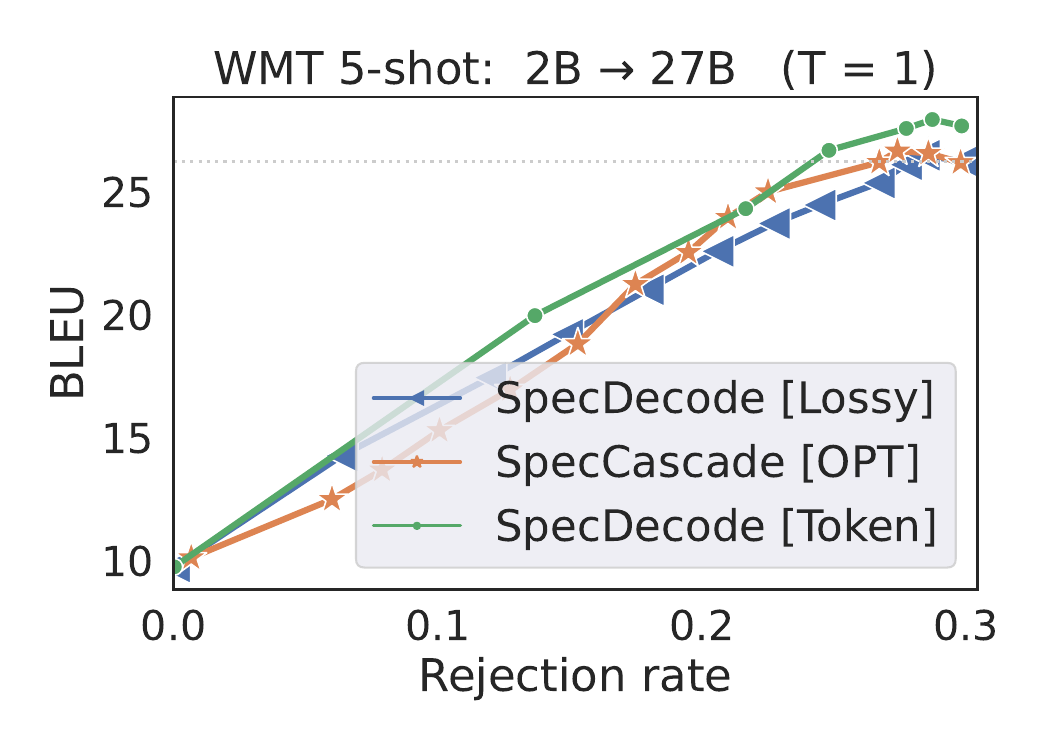}
    \includegraphics[scale=0.27]{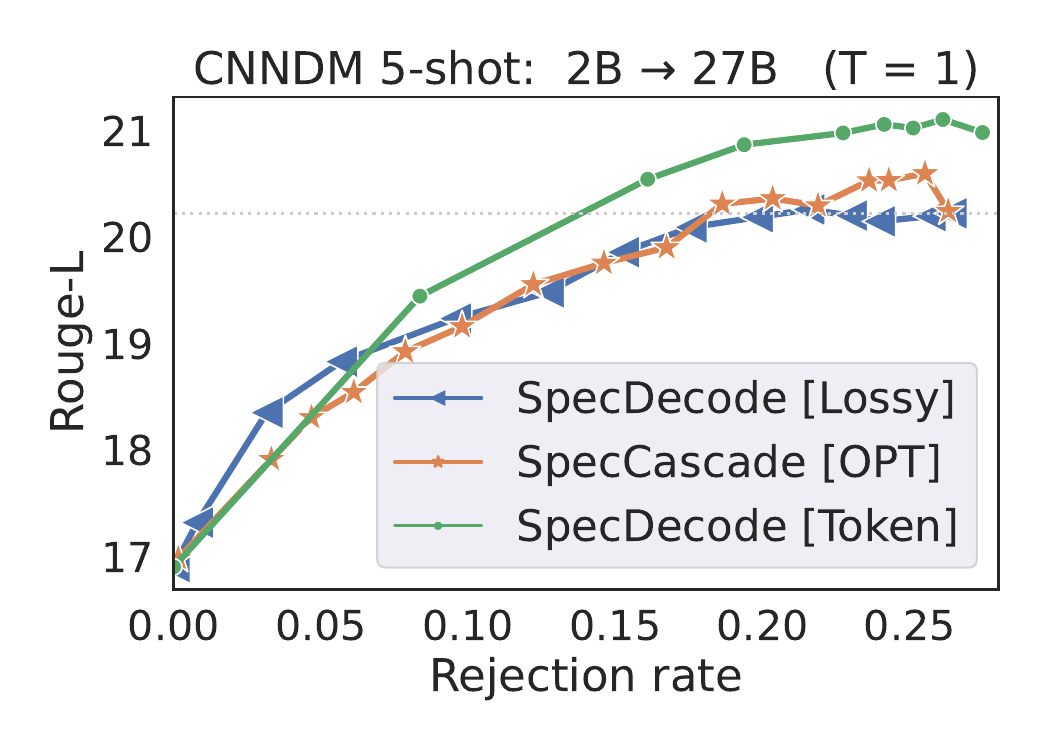}
    \includegraphics[scale=0.27]{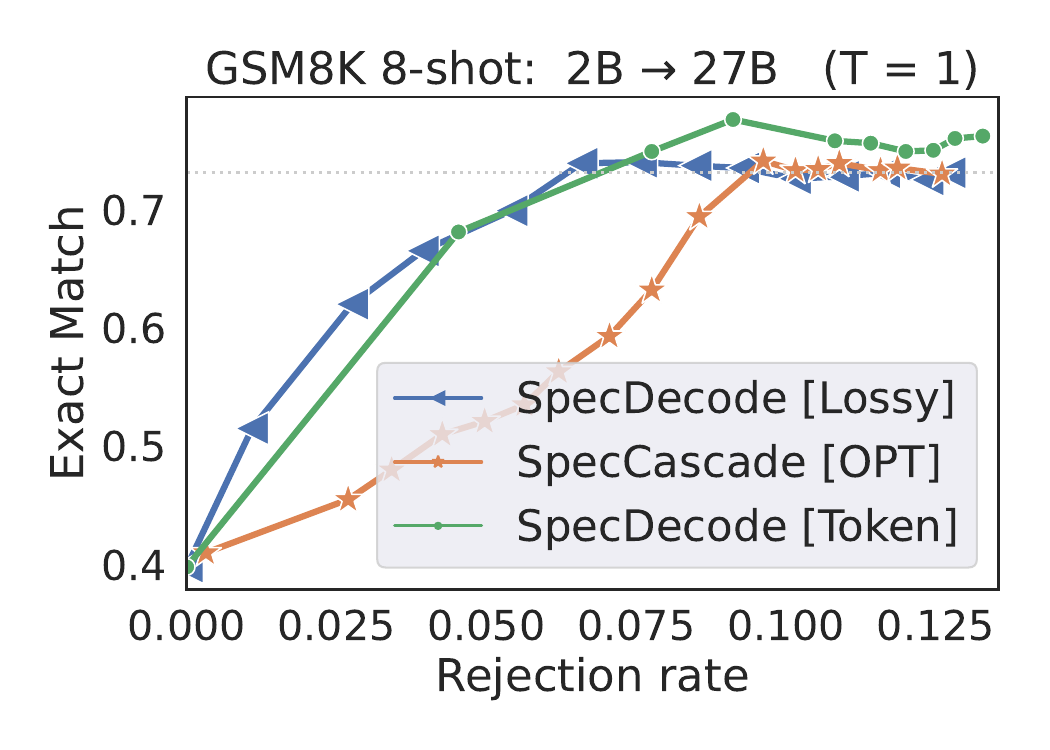}
    \includegraphics[scale=0.27]{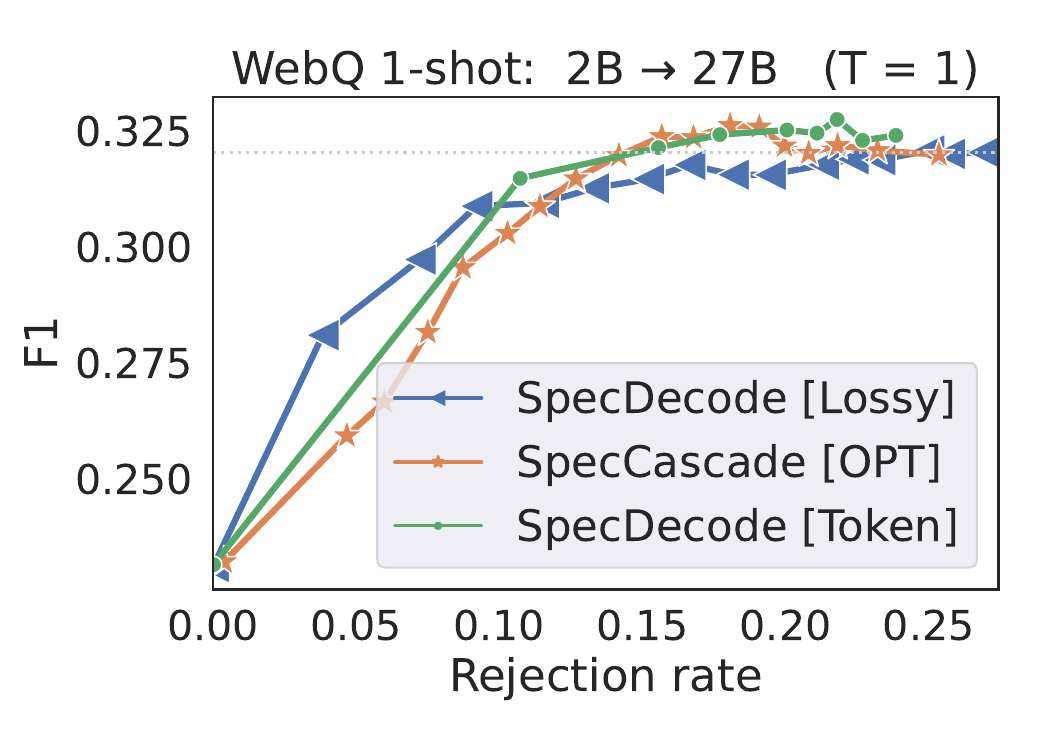}
    \includegraphics[scale=0.27]{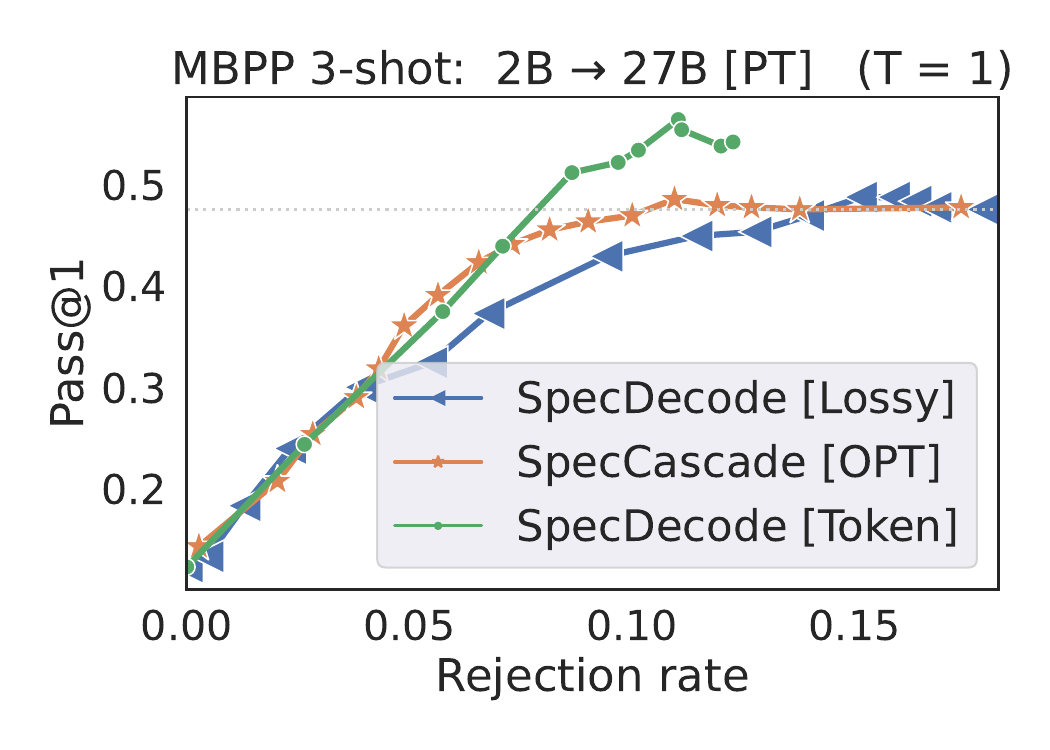}
    \includegraphics[scale=0.27]{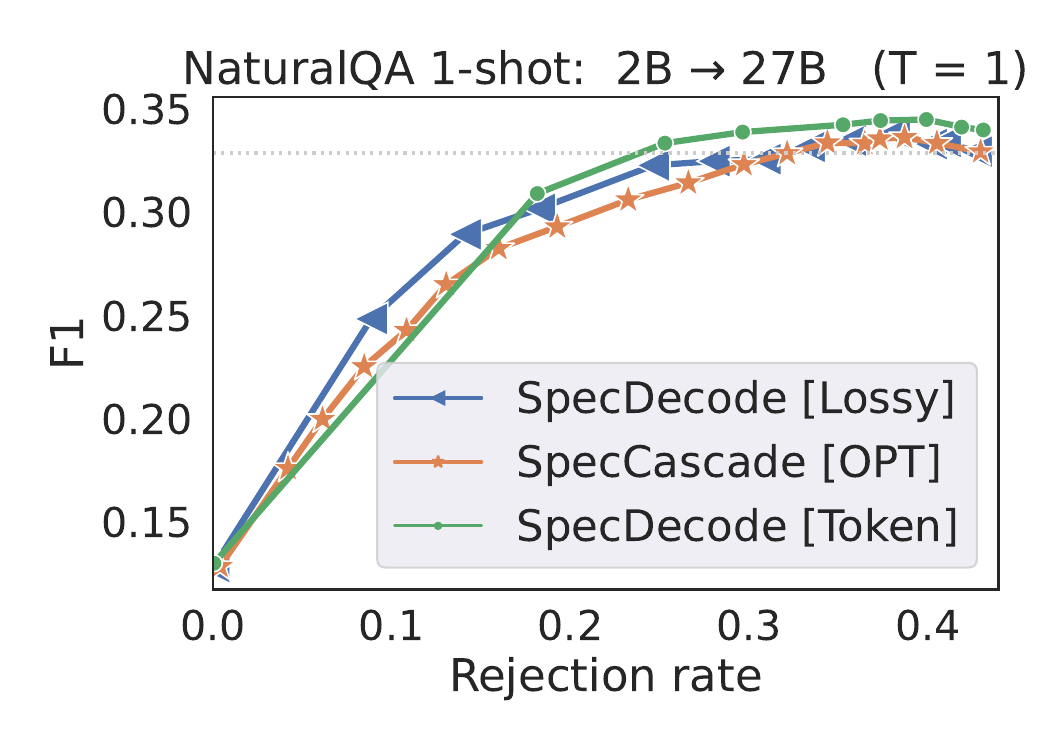}
    \caption{Plots of quality vs.\ rejection rate for methods that interleave Gemma 2B with Gemma 27B ($\gamma=1$).  We use instruction-tuned models; for MBPP we report additional results with pre-trained models. See \S\ref{app:expts-token-specific} for remaining plots, comparison to  (\ref{eq:sample-dep-01-plugin-v1}--\ref{eq:sample-dep-01-plugin-v2}) and 
    results on 2B $\rightarrow$ 9B cascades.
    }
    \label{fig:tradeoffs-gemma}
\end{figure}



In Figure \ref{fig:tradeoffs}, we present plots of quality vs.\ latency for the different methods. 
In each case, we vary the lenience parameter $\alpha$, and  plot either the BLEU or ROUGE-2 metric as a function of the relative latency to the larger model. For brevity, we include the three main baselines; in \S\ref{app:expts-bild}--\ref{app:expts-lossy},
we compare to \texttt{SpecDecode} [{\tt Lossy}$^\star$] \citep{Tran-Thien_2023} and the original {\tt BiLD} algorithm \cite{kim2023speculative}. 
Methods that use speculative execution are considerably faster than sequential token-level cascades ({\tt TokenCascade [Chow]}), although sequential cascades do have an advantage in the low-latency regimes. This is because unlike speculative approaches, which always call the large model after every $\gamma$ steps, a sequential cascade only invokes the large model when the small model defers. 
\todoakm{the legend is a bit hard to make out. we may  consider only including two plots per row. Hari: Did this for Fig 2. For Fig 3, I've made the font size bigger. Would you like two figures per row for Fig 3 too?}
\todoakm{further to the above, we may consider the value of including full deferral curves for both T5 and Gemma, versus having a summary like Table 2. Hari: hmm, I think reviewers usually like to see numbers in a tabulated form, especially when it comes to speed-ups}

In Table \ref{tab:speedup}, 
\todoakm{caption should clarify this is for T5. Hari: done!}
we report  (i) the reduction in latency from T5 cascades when matching the quality of the large model, and (ii) the best quality that each method can deliver without exceeding the latency of the large model. 
{\haricolor{blue}\texttt{SpecCascade} [{\tt Token}]  often yields  the highest speed-up and the best quality metrics, with {\tt OPT} coming in second.} The cascading approaches are often seen to fare poorly on both quality and latency metrics, with the exception of WMT, where \texttt{SeqCascade} yields non-trivial speed-ups.\todoasr{Do we need to briefly remind the reader that BiLD* is different from BiLD implementation?}
{\haricolor{blue}The reason the {\tt Token}-specific rule fares better than  {\tt OPT} and {\tt Diff} is because the latter compute their deferral decisions based on which of $q_t(\cdot)$ and $p_t(\cdot)$ is more \textit{peaked}; this can be a disadvantage when the sampled token is not  close to the distribution mode, which is likely to happen with higher temperatures. As shown in \S\ref{app:expts-temperature}, with lower temperatures, the gap between these rules diminishes.\todoakm{should we also call out the WMT results being sensitive to temperature? Hari: I think this is true of all methods. I'll be including plots with varying temperatures in the appendix. After I include that, I'll add a note in the main text.}
}

{\haricolor{blue}
\textbf{Few-shot Gemma cascades.} 
\todoakm{what about the MBPP results? Hari: TBD}
To evaluate the Gemma model cascades, we use few-shot prompting with 8 language benchmarks: WMT, CNN/DM, GSM8K, MBPP, SQuAD 2.0, WebQuestions, NaturalQA and TriviaQA; many of these feature in the SpecBench suite \citep{xia2024unlocking}.  Figure \ref{fig:tradeoffs-gemma}  presents plots of quality vs.\ rejection rate with a 2B drafter and 27B verifier for $\gamma=1$. For brevity, we only compare the methods that fare the best in the previous experiments. With the exception of TriviaQA, {\tt SpecCascade [Token]} is able to both match the 27B's quality at a lower  rejection rate and yield the best overall quality, often better than 27B. 
Since all three methods use the exact same implementation for speculative execution, a lower rejection rate directly translates to a lower latency. 
}

{\haricolor{red}
Interestingly, {\tt OPT} is not as effective as with T5. We attribute this to the differences in distributions between the two setups: with T5, the maximum token probability served as a good indicator of token accuracy for both $q$ and $p$; with Gemma, we expect the large model to have a closer alignment with the ground-truth distribution (due to it being several billion parameters apart from the smaller model), and hence using the large model probabilities to measure confidence for both the small and large model (\ref{eq:sample-dep-01-plugin-v3}) yields better trade-offs than comparing the modes from the two model distributions. 
}



\section{Conclusions}
We have proposed new speculative cascading techniques that use a combination of auto-regressive drafting and parallel verification to implement their deferral rule, and shown that they yield better cost-quality trade-offs than standard cascades and speculative decoding. {\haricolor{blue} A limitation of our approach is that while it offers a higher throughput, it also incurs a higher total compute cost compared to sequential cascades.} 
In the future, we wish to replace our plug-in estimators
with a router model \citep{gupta2024language} trained on ground-truth samples, 
 to improve the local deferral objective 
 at each position $t$ (\ref{eq:spec-def-risk}) 
 with a global 
objective, 
and to extend our proposal to {more than two models}.

\bibliographystyle{iclr2025_conference}
\bibliography{NeurIPS/references, NeurIPS/references_cascades}

\begin{thebibliography}{62}
\providecommand{\natexlab}[1]{#1}
\providecommand{\url}[1]{\texttt{#1}}
\expandafter\ifx\csname urlstyle\endcsname\relax
  \providecommand{\doi}[1]{doi: #1}\else
  \providecommand{\doi}{doi: \begingroup \urlstyle{rm}\Url}\fi

\bibitem[Anil \& et~al.(2023)Anil and et~al.]{Anil:2023}
Rohan Anil and et~al.
\newblock Pa{LM} 2 technical report, 2023.

\bibitem[Austin et~al.(2021)Austin, Odena, Nye, Bosma, Michalewski, Dohan,
  Jiang, Cai, Terry, Le, et~al.]{austin2021program}
Jacob Austin, Augustus Odena, Maxwell Nye, Maarten Bosma, Henryk Michalewski,
  David Dohan, Ellen Jiang, Carrie Cai, Michael Terry, Quoc Le, et~al.
\newblock Program synthesis with large language models.
\newblock \emph{arXiv preprint arXiv:2108.07732}, 2021.

\bibitem[Berant et~al.(2013)Berant, Chou, Frostig, and
  Liang]{berant2013semantic}
Jonathan Berant, Andrew Chou, Roy Frostig, and Percy Liang.
\newblock Semantic parsing on freebase from question-answer pairs.
\newblock In \emph{Proceedings of the 2013 conference on empirical methods in
  natural language processing}, pp.\  1533--1544, 2013.

\bibitem[Black et~al.(2022)Black, Biderman, Hallahan, Anthony, Gao, Golding,
  He, Leahy, McDonell, Phang, Pieler, Prashanth, Purohit, Reynolds, Tow, Wang,
  and Weinbach]{Black:2022}
Sidney Black, Stella Biderman, Eric Hallahan, Quentin Anthony, Leo Gao,
  Laurence Golding, Horace He, Connor Leahy, Kyle McDonell, Jason Phang,
  Michael Pieler, Usvsn~Sai Prashanth, Shivanshu Purohit, Laria Reynolds,
  Jonathan Tow, Ben Wang, and Samuel Weinbach.
\newblock {GPT}-{N}eo{X}-20{B}: An open-source autoregressive language model.
\newblock In Angela Fan, Suzana Ilic, Thomas Wolf, and Matthias Gall{\'e}
  (eds.), \emph{Proceedings of BigScience Episode {\#}5 -- Workshop on
  Challenges {\&} Perspectives in Creating Large Language Models}, pp.\
  95--136, virtual+Dublin, May 2022. Association for Computational Linguistics.
\newblock \doi{10.18653/v1/2022.bigscience-1.9}.
\newblock URL \url{https://aclanthology.org/2022.bigscience-1.9}.

\bibitem[Bojar et~al.(2014{\natexlab{a}})Bojar, Buck, Federmann, Haddow, Koehn,
  Leveling, Monz, Pecina, Post, Saint-Amand, Soricut, Specia, and
  Tamchyna]{bojar-EtAl:2014:W14-33}
Ondrej Bojar, Christian Buck, Christian Federmann, Barry Haddow, Philipp Koehn,
  Johannes Leveling, Christof Monz, Pavel Pecina, Matt Post, Herve Saint-Amand,
  Radu Soricut, Lucia Specia, and Ale~{s} Tamchyna.
\newblock Findings of the 2014 workshop on statistical machine translation.
\newblock In \emph{Proceedings of the Ninth Workshop on Statistical Machine
  Translation}, pp.\  12--58, Baltimore, Maryland, USA, June
  2014{\natexlab{a}}. Association for Computational Linguistics.
\newblock URL \url{http://www.aclweb.org/anthology/W/W14/W14-3302}.

\bibitem[Bojar et~al.(2014{\natexlab{b}})Bojar, Buck, Federmann, Haddow, Koehn,
  Leveling, Monz, Pecina, Post, Saint-Amand, et~al.]{bojar2014findings}
Ond{\v{r}}ej Bojar, Christian Buck, Christian Federmann, Barry Haddow, Philipp
  Koehn, Johannes Leveling, Christof Monz, Pavel Pecina, Matt Post, Herve
  Saint-Amand, et~al.
\newblock Findings of the 2014 workshop on statistical machine translation.
\newblock In \emph{Proceedings of the ninth workshop on statistical machine
  translation}, pp.\  12--58, 2014{\natexlab{b}}.

\bibitem[Brown et~al.(2020)Brown, Mann, Ryder, Subbiah, Kaplan, Dhariwal,
  Neelakantan, Shyam, Sastry, Askell, Agarwal, Herbert-Voss, Krueger, Henighan,
  Child, Ramesh, Ziegler, Wu, Winter, Hesse, Chen, Sigler, Litwin, Gray, Chess,
  Clark, Berner, McCandlish, Radford, Sutskever, and Amodei]{Brown:2020}
Tom Brown, Benjamin Mann, Nick Ryder, Melanie Subbiah, Jared~D Kaplan, Prafulla
  Dhariwal, Arvind Neelakantan, Pranav Shyam, Girish Sastry, Amanda Askell,
  Sandhini Agarwal, Ariel Herbert-Voss, Gretchen Krueger, Tom Henighan, Rewon
  Child, Aditya Ramesh, Daniel Ziegler, Jeffrey Wu, Clemens Winter, Chris
  Hesse, Mark Chen, Eric Sigler, Mateusz Litwin, Scott Gray, Benjamin Chess,
  Jack Clark, Christopher Berner, Sam McCandlish, Alec Radford, Ilya Sutskever,
  and Dario Amodei.
\newblock Language models are few-shot learners.
\newblock In H.~Larochelle, M.~Ranzato, R.~Hadsell, M.~F. Balcan, and H.~Lin
  (eds.), \emph{Advances in Neural Information Processing Systems}, volume~33,
  pp.\  1877--1901. Curran Associates, Inc., 2020.

\bibitem[Cai et~al.(2024)Cai, Li, Geng, Peng, Lee, Chen, and
  Dao]{cai2024medusa}
Tianle Cai, Yuhong Li, Zhengyang Geng, Hongwu Peng, Jason~D Lee, Deming Chen,
  and Tri Dao.
\newblock Medusa: Simple {LLM} inference acceleration framework with multiple
  decoding heads.
\newblock \emph{arXiv preprint arXiv:2401.10774}, 2024.

\bibitem[Chen et~al.(2023{\natexlab{a}})Chen, Borgeaud, Irving, Lespiau, Sifre,
  and Jumper]{chen2023accelerating}
Charlie Chen, Sebastian Borgeaud, Geoffrey Irving, Jean-Baptiste Lespiau,
  Laurent Sifre, and John Jumper.
\newblock Accelerating large language model decoding with speculative sampling.
\newblock \emph{arXiv preprint arXiv:2302.01318}, 2023{\natexlab{a}}.

\bibitem[Chen et~al.(2023{\natexlab{b}})Chen, Zaharia, and Zou]{Chen:2023a}
Lingjiao Chen, Matei Zaharia, and James Zou.
\newblock Frugal{GPT}: How to use large language models while reducing cost and
  improving performance, 2023{\natexlab{b}}.

\bibitem[Chen et~al.(2023{\natexlab{c}})Chen, Yang, Lin, Sun, Huang, and
  Chang]{chen2023cascade}
Ziyi Chen, Xiaocong Yang, Jiacheng Lin, Chenkai Sun, Jie Huang, and Kevin
  Chen-Chuan Chang.
\newblock Cascade speculative drafting for even faster {LLM} inference.
\newblock \emph{arXiv preprint arXiv:2312.11462}, 2023{\natexlab{c}}.

\bibitem[Chow(1970)]{chow1970optimum}
C~Chow.
\newblock On optimum recognition error and reject tradeoff.
\newblock \emph{IEEE Transactions on information theory}, 16\penalty0
  (1):\penalty0 41--46, 1970.

\bibitem[Chowdhery et~al.(2022)Chowdhery, Narang, Devlin, Bosma, Mishra,
  Roberts, Barham, Chung, Sutton, Gehrmann, Schuh, Shi, Tsvyashchenko, Maynez,
  Rao, Barnes, Tay, Shazeer, Prabhakaran, Reif, Du, Hutchinson, Pope, Bradbury,
  Austin, Isard, Gur-Ari, Yin, Duke, Levskaya, Ghemawat, Dev, Michalewski,
  Garcia, Misra, Robinson, Fedus, Zhou, Ippolito, Luan, Lim, Zoph, Spiridonov,
  Sepassi, Dohan, Agrawal, Omernick, Dai, Pillai, Pellat, Lewkowycz, Moreira,
  Child, Polozov, Lee, Zhou, Wang, Saeta, Diaz, Firat, Catasta, Wei,
  Meier-Hellstern, Eck, Dean, Petrov, and Fiedel]{Chowdhery:2022}
Aakanksha Chowdhery, Sharan Narang, Jacob Devlin, Maarten Bosma, Gaurav Mishra,
  Adam Roberts, Paul Barham, Hyung~Won Chung, Charles Sutton, Sebastian
  Gehrmann, Parker Schuh, Kensen Shi, Sasha Tsvyashchenko, Joshua Maynez,
  Abhishek Rao, Parker Barnes, Yi~Tay, Noam Shazeer, Vinodkumar Prabhakaran,
  Emily Reif, Nan Du, Ben Hutchinson, Reiner Pope, James Bradbury, Jacob
  Austin, Michael Isard, Guy Gur-Ari, Pengcheng Yin, Toju Duke, Anselm
  Levskaya, Sanjay Ghemawat, Sunipa Dev, Henryk Michalewski, Xavier Garcia,
  Vedant Misra, Kevin Robinson, Liam Fedus, Denny Zhou, Daphne Ippolito, David
  Luan, Hyeontaek Lim, Barret Zoph, Alexander Spiridonov, Ryan Sepassi, David
  Dohan, Shivani Agrawal, Mark Omernick, Andrew~M. Dai,
  Thanumalayan~Sankaranarayana Pillai, Marie Pellat, Aitor Lewkowycz, Erica
  Moreira, Rewon Child, Oleksandr Polozov, Katherine Lee, Zongwei Zhou, Xuezhi
  Wang, Brennan Saeta, Mark Diaz, Orhan Firat, Michele Catasta, Jason Wei,
  Kathy Meier-Hellstern, Douglas Eck, Jeff Dean, Slav Petrov, and Noah Fiedel.
\newblock Pa{LM}: Scaling language modeling with pathways, 2022.

\bibitem[Chung et~al.(2022)Chung, Hou, Longpre, Zoph, Tay, Fedus, Li, Wang,
  Dehghani, Brahma, Webson, Gu, Dai, Suzgun, Chen, Chowdhery, Narang, Mishra,
  Yu, Zhao, Huang, Dai, Yu, Petrov, Chi, Dean, Devlin, Roberts, Zhou, Le, and
  Wei]{Chung:2022}
Hyung~Won Chung, Le~Hou, Shayne Longpre, Barret Zoph, Yi~Tay, William Fedus,
  Eric Li, Xuezhi Wang, Mostafa Dehghani, Siddhartha Brahma, Albert Webson,
  Shixiang~Shane Gu, Zhuyun Dai, Mirac Suzgun, Xinyun Chen, Aakanksha
  Chowdhery, Sharan Narang, Gaurav Mishra, Adams Yu, Vincent Zhao, Yanping
  Huang, Andrew Dai, Hongkun Yu, Slav Petrov, Ed~H. Chi, Jeff Dean, Jacob
  Devlin, Adam Roberts, Denny Zhou, Quoc~V. Le, and Jason Wei.
\newblock Scaling instruction-finetuned language models, 2022.
\newblock URL \url{https://arxiv.org/abs/2210.11416}.

\bibitem[Cobbe et~al.(2021)Cobbe, Kosaraju, Bavarian, Chen, Jun, Kaiser,
  Plappert, Tworek, Hilton, Nakano, et~al.]{cobbe2021training}
Karl Cobbe, Vineet Kosaraju, Mohammad Bavarian, Mark Chen, Heewoo Jun, Lukasz
  Kaiser, Matthias Plappert, Jerry Tworek, Jacob Hilton, Reiichiro Nakano,
  et~al.
\newblock Training verifiers to solve math word problems.
\newblock \emph{arXiv preprint arXiv:2110.14168}, 2021.

\bibitem[Ding et~al.(2024)Ding, Mallick, Wang, Sim, Mukherjee, R{\"u}hle,
  Lakshmanan, and Awadallah]{Ding:2024}
Dujian Ding, Ankur Mallick, Chi Wang, Robert Sim, Subhabrata Mukherjee, Victor
  R{\"u}hle, Laks V.~S. Lakshmanan, and Ahmed~Hassan Awadallah.
\newblock Hybrid {LLM}: Cost-efficient and quality-aware query routing.
\newblock In \emph{The Twelfth International Conference on Learning
  Representations}, 2024.
\newblock URL \url{https://openreview.net/forum?id=02f3mUtqnM}.

\bibitem[Dohan et~al.(2022)Dohan, Xu, Lewkowycz, Austin, Bieber, Lopes, Wu,
  Michalewski, Saurous, Sohl-dickstein, Murphy, and Sutton]{Dohan:2022}
David Dohan, Winnie Xu, Aitor Lewkowycz, Jacob Austin, David Bieber,
  Raphael~Gontijo Lopes, Yuhuai Wu, Henryk Michalewski, Rif~A. Saurous, Jascha
  Sohl-dickstein, Kevin Murphy, and Charles Sutton.
\newblock Language model cascades, 2022.
\newblock URL \url{https://arxiv.org/abs/2207.10342}.

\bibitem[Elbayad et~al.(2020)Elbayad, Gu, Grave, and Auli]{Elbayad:2020}
Maha Elbayad, Jiatao Gu, Edouard Grave, and Michael Auli.
\newblock Depth-adaptive transformer.
\newblock In \emph{International Conference on Learning Representations}, 2020.
\newblock URL \url{https://openreview.net/forum?id=SJg7KhVKPH}.

\bibitem[Elhoushi et~al.(2024)Elhoushi, Shrivastava, Liskovich, Hosmer, Wasti,
  Lai, Mahmoud, Acun, Agarwal, Roman, et~al.]{elhoushi2024layer}
Mostafa Elhoushi, Akshat Shrivastava, Diana Liskovich, Basil Hosmer, Bram
  Wasti, Liangzhen Lai, Anas Mahmoud, Bilge Acun, Saurabh Agarwal, Ahmed Roman,
  et~al.
\newblock Layer skip: Enabling early exit inference and self-speculative
  decoding.
\newblock \emph{arXiv preprint arXiv:2404.16710}, 2024.

\bibitem[Guo et~al.(2017)Guo, Pleiss, Sun, and Weinberger]{Guo:2017}
Chuan Guo, Geoff Pleiss, Yu~Sun, and Kilian~Q. Weinberger.
\newblock On calibration of modern neural networks.
\newblock In \emph{Proceedings of the 34th International Conference on Machine
  Learning - Volume 70}, ICML'17, pp.\  1321–1330. JMLR.org, 2017.

\bibitem[Gupta et~al.(2022)Gupta, Smith, Adlam, and Mariet]{Gupta:2022}
Neha Gupta, Jamie Smith, Ben Adlam, and Zelda~E Mariet.
\newblock Ensembles of classifiers: a bias-variance perspective.
\newblock \emph{Transactions of Machine Learning Research}, 2022.
\newblock URL \url{https://openreview.net/forum?id=lIOQFVncY9}.

\bibitem[Gupta et~al.(2024)Gupta, Narasimhan, Jitkrittum, Rawat, Menon, and
  Kumar]{gupta2024language}
Neha Gupta, Harikrishna Narasimhan, Wittawat Jitkrittum, Ankit~Singh Rawat,
  Aditya~Krishna Menon, and Sanjiv Kumar.
\newblock Language model cascades: Token-level uncertainty and beyond.
\newblock In \emph{The Twelfth International Conference on Learning
  Representations}, 2024.
\newblock URL \url{https://openreview.net/forum?id=KgaBScZ4VI}.

\bibitem[Hermann et~al.(2015)Hermann, Kocisky, Grefenstette, Espeholt, Kay,
  Suleyman, and Blunsom]{hermann2015teaching}
Karl~Moritz Hermann, Tomas Kocisky, Edward Grefenstette, Lasse Espeholt, Will
  Kay, Mustafa Suleyman, and Phil Blunsom.
\newblock Teaching machines to read and comprehend.
\newblock \emph{Advances in neural information processing systems}, 28, 2015.

\bibitem[Hooper et~al.(2023)Hooper, Kim, Mohammadzadeh, Genc, Keutzer, Gholami,
  and Shao]{hooper2023speed}
Coleman Hooper, Sehoon Kim, Hiva Mohammadzadeh, Hasan Genc, Kurt Keutzer, Amir
  Gholami, and Sophia Shao.
\newblock Speed: Speculative pipelined execution for efficient decoding.
\newblock \emph{arXiv preprint arXiv:2310.12072}, 2023.

\bibitem[Jitkrittum et~al.(2023)Jitkrittum, Gupta, Menon, Narasimhan, Rawat,
  and Kumar]{jitkrittum2024does}
Wittawat Jitkrittum, Neha Gupta, Aditya~K Menon, Harikrishna Narasimhan, Ankit
  Rawat, and Sanjiv Kumar.
\newblock When does confidence-based cascade deferral suffice?
\newblock \emph{Advances in Neural Information Processing Systems}, 36, 2023.

\bibitem[Joshi et~al.(2017)Joshi, Choi, Weld, and Zettlemoyer]{Joshi:2017}
Mandar Joshi, Eunsol Choi, Daniel Weld, and Luke Zettlemoyer.
\newblock {T}rivia{QA}: A large scale distantly supervised challenge dataset
  for reading comprehension.
\newblock In \emph{Proceedings of the 55th Annual Meeting of the Association
  for Computational Linguistics (Volume 1: Long Papers)}, pp.\  1601--1611,
  Vancouver, Canada, July 2017. Association for Computational Linguistics.
\newblock \doi{10.18653/v1/P17-1147}.
\newblock URL \url{https://aclanthology.org/P17-1147}.

\bibitem[Khalili et~al.(2022)Khalili, You, and Bohannon]{Khalili:2022}
Leila Khalili, Yao You, and John Bohannon.
\newblock Babybear: Cheap inference triage for expensive language models, 2022.
\newblock URL \url{https://arxiv.org/abs/2205.11747}.

\bibitem[Kim et~al.(2023)Kim, Mangalam, Moon, Malik, Mahoney, Gholami, and
  Keutzer]{kim2023speculative}
Sehoon Kim, Karttikeya Mangalam, Suhong Moon, Jitendra Malik, Michael~W
  Mahoney, Amir Gholami, and Kurt Keutzer.
\newblock Speculative decoding with big little decoder.
\newblock In \emph{Thirty-seventh Conference on Neural Information Processing
  Systems}, 2023.

\bibitem[Kim et~al.(2024)Kim, Suresh, Papineni, Riley, Kumar, and
  Benton]{kim2024towards}
Taehyeon Kim, Ananda~Theertha Suresh, Kishore Papineni, Michael Riley, Sanjiv
  Kumar, and Adrian Benton.
\newblock Towards fast inference: Exploring and improving blockwise parallel
  drafts.
\newblock \emph{arXiv preprint arXiv:2404.09221}, 2024.

\bibitem[Kwiatkowski et~al.(2019)Kwiatkowski, Palomaki, Redfield, Collins,
  Parikh, Alberti, Epstein, Polosukhin, Devlin, Lee, Toutanova, Jones, Kelcey,
  Chang, Dai, Uszkoreit, Le, and Petrov]{Kwiatkowski:2019}
Tom Kwiatkowski, Jennimaria Palomaki, Olivia Redfield, Michael Collins, Ankur
  Parikh, Chris Alberti, Danielle Epstein, Illia Polosukhin, Jacob Devlin,
  Kenton Lee, Kristina Toutanova, Llion Jones, Matthew Kelcey, Ming-Wei Chang,
  Andrew~M. Dai, Jakob Uszkoreit, Quoc Le, and Slav Petrov.
\newblock Natural questions: A benchmark for question answering research.
\newblock \emph{Transactions of the Association for Computational Linguistics},
  7:\penalty0 452--466, 2019.
\newblock \doi{10.1162/tacl_a_00276}.
\newblock URL \url{https://aclanthology.org/Q19-1026}.

\bibitem[Leviathan et~al.(2023)Leviathan, Kalman, and Matias]{Leviathan:2023}
Yaniv Leviathan, Matan Kalman, and Yossi Matias.
\newblock Fast inference from transformers via speculative decoding.
\newblock In Andreas Krause, Emma Brunskill, Kyunghyun Cho, Barbara Engelhardt,
  Sivan Sabato, and Jonathan Scarlett (eds.), \emph{Proceedings of the 40th
  International Conference on Machine Learning}, volume 202 of
  \emph{Proceedings of Machine Learning Research}, pp.\  19274--19286. PMLR,
  23--29 Jul 2023.
\newblock URL \url{https://proceedings.mlr.press/v202/leviathan23a.html}.

\bibitem[Li et~al.(2024{\natexlab{a}})Li, Wei, Zhang, and Zhang]{Li:2024}
Yuhui Li, Fangyun Wei, Chao Zhang, and Hongyang Zhang.
\newblock {EAGLE}: Speculative sampling requires rethinking feature
  uncertainty.
\newblock In \emph{International Conference on Machine Learning},
  2024{\natexlab{a}}.

\bibitem[Li et~al.(2024{\natexlab{b}})Li, Wei, Zhang, and Zhang]{li2024eagle}
Yuhui Li, Fangyun Wei, Chao Zhang, and Hongyang Zhang.
\newblock Eagle: Speculative sampling requires rethinking feature uncertainty.
\newblock \emph{arXiv preprint arXiv:2401.15077}, 2024{\natexlab{b}}.

\bibitem[Liu et~al.(2024)Liu, Tang, Liu, Ni, Han, and Wang]{liu2024kangaroo}
Fangcheng Liu, Yehui Tang, Zhenhua Liu, Yunsheng Ni, Kai Han, and Yunhe Wang.
\newblock Kangaroo: Lossless self-speculative decoding via double early
  exiting.
\newblock \emph{arXiv preprint arXiv:2404.18911}, 2024.

\bibitem[Mamou et~al.(2022)Mamou, Pereg, Wasserblat, and Schwartz]{Mamou:2022}
Jonathan Mamou, Oren Pereg, Moshe Wasserblat, and Roy Schwartz.
\newblock {TangoBERT}: Reducing inference cost by using cascaded architecture,
  2022.
\newblock URL \url{http://arxiv.org/abs/2204.06271}.

\bibitem[Miao et~al.(2024)Miao, Oliaro, Zhang, Cheng, Wang, Zhang, Wong, Zhu,
  Yang, Shi, et~al.]{miao2024specinfer}
Xupeng Miao, Gabriele Oliaro, Zhihao Zhang, Xinhao Cheng, Zeyu Wang, Zhengxin
  Zhang, Rae Ying~Yee Wong, Alan Zhu, Lijie Yang, Xiaoxiang Shi, et~al.
\newblock Specinfer: Accelerating large language model serving with tree-based
  speculative inference and verification.
\newblock In \emph{Proceedings of the 29th ACM International Conference on
  Architectural Support for Programming Languages and Operating Systems, Volume
  3}, pp.\  932--949, 2024.

\bibitem[Monea et~al.(2023)Monea, Joulin, and Grave]{monea2023pass}
Giovanni Monea, Armand Joulin, and Edouard Grave.
\newblock Pass: Parallel speculative sampling.
\newblock \emph{arXiv preprint arXiv:2311.13581}, 2023.

\bibitem[Narayan et~al.(2018)Narayan, Cohen, and Lapata]{narayan2018don}
Shashi Narayan, Shay~B Cohen, and Mirella Lapata.
\newblock Don’t give me the details, just the summary! topic-aware
  convolutional neural networks for extreme summarization.
\newblock In \emph{Proceedings of the 2018 Conference on Empirical Methods in
  Natural Language Processing}, pp.\  1797--1807, 2018.

\bibitem[Pope et~al.(2022)Pope, Douglas, Chowdhery, Devlin, Bradbury, Levskaya,
  Heek, Xiao, Agrawal, and Dean]{Pope:2022}
Reiner Pope, Sholto Douglas, Aakanksha Chowdhery, Jacob Devlin, James Bradbury,
  Anselm Levskaya, Jonathan Heek, Kefan Xiao, Shivani Agrawal, and Jeff Dean.
\newblock Efficiently scaling transformer inference, 2022.

\bibitem[Radford et~al.(2018)Radford, Narasimhan, Salimans, Sutskever,
  et~al.]{Radford:2018}
Alec Radford, Karthik Narasimhan, Tim Salimans, Ilya Sutskever, et~al.
\newblock Improving language understanding by generative pre-training.
\newblock
  \url{https://cdn.openai.com/research-covers/language-unsupervised/language_understanding_paper.pdf},
  2018.

\bibitem[Raffel et~al.(2020{\natexlab{a}})Raffel, Shazeer, Roberts, Lee,
  Narang, Matena, Zhou, Li, and Liu]{Raffel:2020}
Colin Raffel, Noam Shazeer, Adam Roberts, Katherine Lee, Sharan Narang, Michael
  Matena, Yanqi Zhou, Wei Li, and Peter~J. Liu.
\newblock Exploring the limits of transfer learning with a unified text-to-text
  transformer.
\newblock \emph{J. Mach. Learn. Res.}, 21:\penalty0 140:1--140:67,
  2020{\natexlab{a}}.
\newblock URL \url{http://jmlr.org/papers/v21/20-074.html}.

\bibitem[Raffel et~al.(2020{\natexlab{b}})Raffel, Shazeer, Roberts, Lee,
  Narang, Matena, Zhou, Li, and Liu]{raffel2020exploring}
Colin Raffel, Noam Shazeer, Adam Roberts, Katherine Lee, Sharan Narang, Michael
  Matena, Yanqi Zhou, Wei Li, and Peter~J Liu.
\newblock Exploring the limits of transfer learning with a unified text-to-text
  transformer.
\newblock \emph{Journal of machine learning research}, 21\penalty0
  (140):\penalty0 1--67, 2020{\natexlab{b}}.

\bibitem[Rajpurkar et~al.(2016)Rajpurkar, Zhang, Lopyrev, and
  Liang]{Rajpurkar:2016}
Pranav Rajpurkar, Jian Zhang, Konstantin Lopyrev, and Percy Liang.
\newblock {SQ}u{AD}: 100,000+ questions for machine comprehension of text.
\newblock In \emph{Proceedings of the 2016 Conference on Empirical Methods in
  Natural Language Processing}, pp.\  2383--2392, Austin, Texas, November 2016.
  Association for Computational Linguistics.
\newblock \doi{10.18653/v1/D16-1264}.
\newblock URL \url{https://aclanthology.org/D16-1264}.

\bibitem[Schuster et~al.(2022)Schuster, Fisch, Gupta, Dehghani, Bahri, Tran,
  Tay, and Metzler]{Schuster:2022}
Tal Schuster, Adam Fisch, Jai Gupta, Mostafa Dehghani, Dara Bahri, Vinh~Q.
  Tran, Yi~Tay, and Donald Metzler.
\newblock Confident adaptive language modeling.
\newblock In Alice~H. Oh, Alekh Agarwal, Danielle Belgrave, and Kyunghyun Cho
  (eds.), \emph{Advances in Neural Information Processing Systems}, 2022.
\newblock URL \url{https://openreview.net/forum?id=uLYc4L3C81A}.

\bibitem[Sheng et~al.(2023)Sheng, Zheng, Yuan, Li, Ryabinin, Chen, Liang, Re,
  Stoica, and Zhang]{Sheng:2023}
Ying Sheng, Lianmin Zheng, Binhang Yuan, Zhuohan Li, Max Ryabinin, Beidi Chen,
  Percy Liang, Christopher Re, Ion Stoica, and Ce~Zhang.
\newblock {F}lex{G}en: High-throughput generative inference of large language
  models with a single {GPU}.
\newblock In Andreas Krause, Emma Brunskill, Kyunghyun Cho, Barbara Engelhardt,
  Sivan Sabato, and Jonathan Scarlett (eds.), \emph{Proceedings of the 40th
  International Conference on Machine Learning}, volume 202 of
  \emph{Proceedings of Machine Learning Research}, pp.\  31094--31116. PMLR,
  23--29 Jul 2023.
\newblock URL \url{https://proceedings.mlr.press/v202/sheng23a.html}.

\bibitem[Spector \& Re(2023)Spector and Re]{spector2023accelerating}
Benjamin Spector and Chris Re.
\newblock Accelerating {LLM} inference with staged speculative decoding.
\newblock \emph{arXiv preprint arXiv:2308.04623}, 2023.

\bibitem[Stern et~al.(2018)Stern, Shazeer, and Uszkoreit]{Stern:2018}
Mitchell Stern, Noam Shazeer, and Jakob Uszkoreit.
\newblock Blockwise parallel decoding for deep autoregressive models.
\newblock \emph{CoRR}, abs/1811.03115, 2018.
\newblock URL \url{http://arxiv.org/abs/1811.03115}.

\bibitem[Sun et~al.(2024)Sun, Suresh, Ro, Beirami, Jain, and Yu]{Sun:2023}
Ziteng Sun, Ananda~Theertha Suresh, Jae~Hun Ro, Ahmad Beirami, Himanshu Jain,
  and Felix Yu.
\newblock Spectr: Fast speculative decoding via optimal transport.
\newblock \emph{Advances in Neural Information Processing Systems}, 36, 2024.

\bibitem[Tay et~al.(2023)Tay, Dehghani, Tran, Garcia, Wei, Wang, Chung, Bahri,
  Schuster, Zheng, Zhou, Houlsby, and Metzler]{Tay:2023}
Yi~Tay, Mostafa Dehghani, Vinh~Q. Tran, Xavier Garcia, Jason Wei, Xuezhi Wang,
  Hyung~Won Chung, Dara Bahri, Tal Schuster, Steven Zheng, Denny Zhou, Neil
  Houlsby, and Donald Metzler.
\newblock {UL}2: Unifying language learning paradigms.
\newblock In \emph{The Eleventh International Conference on Learning
  Representations}, 2023.
\newblock URL \url{https://openreview.net/forum?id=6ruVLB727MC}.

\bibitem[Team et~al.(2023)Team, Anil, and et~al.]{Gemini:2023}
Gemini Team, Rohan Anil, and et~al.
\newblock Gemini: A family of highly capable multimodal models, 2023.

\bibitem[Team et~al.(2024)Team, Riviere, Pathak, Sessa, Hardin, Bhupatiraju,
  Hussenot, Mesnard, Shahriari, Ram{\'e}, et~al.]{team2024gemma}
Gemma Team, Morgane Riviere, Shreya Pathak, Pier~Giuseppe Sessa, Cassidy
  Hardin, Surya Bhupatiraju, L{\'e}onard Hussenot, Thomas Mesnard, Bobak
  Shahriari, Alexandre Ram{\'e}, et~al.
\newblock Gemma 2: Improving open language models at a practical size.
\newblock \emph{arXiv preprint arXiv:2408.00118}, 2024.

\bibitem[Touvron et~al.(2023)Touvron, Lavril, Izacard, Martinet, Lachaux,
  Lacroix, Rozière, Goyal, Hambro, Azhar, Rodriguez, Joulin, Grave, and
  Lample]{Touvron:2023}
Hugo Touvron, Thibaut Lavril, Gautier Izacard, Xavier Martinet, Marie-Anne
  Lachaux, Timothée Lacroix, Baptiste Rozière, Naman Goyal, Eric Hambro,
  Faisal Azhar, Aurelien Rodriguez, Armand Joulin, Edouard Grave, and Guillaume
  Lample.
\newblock Llama: Open and efficient foundation language models, 2023.

\bibitem[Tran-Thien(2023)]{Tran-Thien_2023}
Vivien Tran-Thien.
\newblock An optimal lossy variant of speculative decoding, 2023.
\newblock URL \url{https://vivien000.github.io...}
\newblock Unsupervised Thoughts (Blog). URL:
  \url{https://github.com/vivien000/mentored\_decoding}.

\bibitem[Varshney \& Baral(2022)Varshney and Baral]{Varshney:2022}
Neeraj Varshney and Chitta Baral.
\newblock Model cascading: Towards jointly improving efficiency and accuracy of
  nlp systems.
\newblock \emph{arXiv preprint arXiv:2210.05528}, 2022.

\bibitem[Wang et~al.(2024)Wang, Yang, Wang, Liu, Wang, Liang, Ma, Feng, You,
  Bao, et~al.]{wang2024minions}
Siqi Wang, Hailong Yang, Xuezhu Wang, Tongxuan Liu, Pengbo Wang, Xuning Liang,
  Kejie Ma, Tianyu Feng, Xin You, Yongjun Bao, et~al.
\newblock Minions: Accelerating large language model inference with adaptive
  and collective speculative decoding.
\newblock \emph{arXiv preprint arXiv:2402.15678}, 2024.

\bibitem[Wang et~al.(2020)Wang, Kondratyuk, Christiansen, Kitani, Alon, and
  Eban]{wang2020wisdom}
Xiaofang Wang, Dan Kondratyuk, Eric Christiansen, Kris~M Kitani, Yair Alon, and
  Elad Eban.
\newblock Wisdom of committees: An overlooked approach to faster and more
  accurate models.
\newblock \emph{arXiv preprint arXiv:2012.01988}, 2020.

\bibitem[Wei et~al.(2022)Wei, Bosma, Zhao, Guu, Yu, Lester, Du, Dai, and
  Le]{Wei:2022}
Jason Wei, Maarten Bosma, Vincent Zhao, Kelvin Guu, Adams~Wei Yu, Brian Lester,
  Nan Du, Andrew~M. Dai, and Quoc~V Le.
\newblock Finetuned language models are zero-shot learners.
\newblock In \emph{International Conference on Learning Representations}, 2022.
\newblock URL \url{https://openreview.net/forum?id=gEZrGCozdqR}.

\bibitem[Xia et~al.(2024{\natexlab{a}})Xia, Yang, Dong, Wang, Li, Ge, Liu, Li,
  and Sui]{Xia:2024}
Heming Xia, Zhe Yang, Qingxiu Dong, Peiyi Wang, Yongqi Li, Tao Ge, Tianyu Liu,
  Wenjie Li, and Zhifang Sui.
\newblock Unlocking efficiency in large language model inference: A
  comprehensive survey of speculative decoding, 2024{\natexlab{a}}.

\bibitem[Xia et~al.(2024{\natexlab{b}})Xia, Yang, Dong, Wang, Li, Ge, Liu, Li,
  and Sui]{xia2024unlocking}
Heming Xia, Zhe Yang, Qingxiu Dong, Peiyi Wang, Yongqi Li, Tao Ge, Tianyu Liu,
  Wenjie Li, and Zhifang Sui.
\newblock Unlocking efficiency in large language model inference: A
  comprehensive survey of speculative decoding, 2024{\natexlab{b}}.

\bibitem[Yue et~al.(2024)Yue, Zhao, Zhang, Du, and Yao]{Yue:2024}
Murong Yue, Jie Zhao, Min Zhang, Liang Du, and Ziyu Yao.
\newblock Large language model cascades with mixture of thought representations
  for cost-efficient reasoning.
\newblock In \emph{The Twelfth International Conference on Learning
  Representations}, 2024.
\newblock URL \url{https://openreview.net/forum?id=6okaSfANzh}.

\bibitem[Zhang et~al.(2023)Zhang, Wang, Li, Shou, Chen, Chen, and
  Mehrotra]{zhang2023draft}
Jun Zhang, Jue Wang, Huan Li, Lidan Shou, Ke~Chen, Gang Chen, and Sharad
  Mehrotra.
\newblock Draft \& verify: Lossless large language model acceleration via
  self-speculative decoding.
\newblock \emph{arXiv preprint arXiv:2309.08168}, 2023.

\bibitem[Zhou et~al.(2024)Zhou, Lyu, Rawat, Menon, Rostamizadeh, Kumar, Kagy,
  and Agarwal]{zhou2024distillspec}
Yongchao Zhou, Kaifeng Lyu, Ankit~Singh Rawat, Aditya~Krishna Menon, Afshin
  Rostamizadeh, Sanjiv Kumar, Jean-Fran{\c{c}}ois Kagy, and Rishabh Agarwal.
\newblock Distillspec: Improving speculative decoding via knowledge
  distillation.
\newblock In \emph{The Twelfth International Conference on Learning
  Representations}, 2024.
\newblock URL \url{https://openreview.net/forum?id=rsY6J3ZaTF}.

\end{thebibliography}
\clearpage
\newpage
\appendix
\section{Proofs}
\label{app:proofs}
\subsection{Proof of Lemma \ref{lem:seq-def-opt}}
\begin{proof}
Expanding the loss in \eqref{eq:seq-def-risk}, we have:
\begin{align*}
    L_{\rm def}(r; x_{<t}) &=\big(1 - r(x_{<t})\big) \cdot \E_{x_t \sim \Pr(\cdot|x_{<t})}\left[  \ell(x_t, q_t)\right] + r(x_{<t}) \cdot \big(\E_{x_t \sim \Pr(\cdot|x_{<t})}\left[\ell(x_t, p_t)\right] + \alpha\big)\\
    &= r(x_{<t}) \cdot \left(\E_{x_t \sim \Pr(\cdot|x_{<t})}\left[\ell(x_t, p_t)\right] + \alpha
    - \E_{x_t \sim \Pr(\cdot|x_{<t})}\left[  \ell(x_t, q_t)\right]\right)
    \,+\, \E_{x_t \sim \Pr(\cdot|x_{<t})}\left[  \ell(x_t, q_t)\right]
\end{align*}
This objective is minimized by a deferral rule $r: \cV^{t-1} \rightarrow \{0,1\}$ that minimizes, for each prefix $x_{<t}$, the term within the parenthesis. Therefore the minimizer $r^*(x_{<t}) = 1$ whenever the term within the parenthesis is negative:
\[
\E_{x_t \sim \Pr(\cdot|x_{<t})}\left[\ell(x_t, p_t)\right] + \alpha
    - \E_{x_t \sim \Pr(\cdot|x_{<t})}\left[  \ell(x_t, q_t)\right] < 0,
\]
and $r^*(x_{<t}) = 0$ otherwise. Re-arranging the terms completes the proof.
\end{proof}

\subsection{Proof of Lemma \ref{lem:lossy-sd-target-distr}}
\begin{proof}
The proof follows straight-forwardly from the results in \citep{Tran-Thien_2023}. Recall from \S\ref{sec:prelims} that the lossy speculative decoding procedure of \citep{Tran-Thien_2023} accepts a draft token $x$ with probability:
\begin{equation}
    \kappa(x) = \min\left\{1, \frac{p(x)}{(1-\alpha)\cdot q(x)}\right\},
    \label{eq:lossy-sd-1}
\end{equation}
and replaces a rejected draft token with a token sampled from the residual distribution:
\begin{equation}
p_{\rm res}(x) = \normm \left(\max\left\{0,\, \frac{1}{\beta}\cdot p(x) - q(x)\right\}\right),
    \label{eq:lossy-sd-2}
\end{equation}
for parameters $\alpha \in [0, 1)$ and $\beta \geq 1 - \alpha$.

We need to show that running Algorithm \ref{alg:gen-speed} with the target distribution:
\[
\pi(x) = \max\left\{\min\left\{ q(x), \frac{p(x)}{1 - \alpha} \right\},  \frac{p(x)}{\beta}\right\}
\]
results in the same acceptance probability \eqref{eq:lossy-sd-1} and residual distribution \eqref{eq:lossy-sd-2}.

The acceptance probability for a draft token $x$ when running Algorithm \ref{alg:gen-speed} on $\pi$ is given by:
\[
\kappa^\pi(x) = \min\left\{1, \frac{\pi(x)}{ q(x)}\right\}.
\]
The corresponding residual distribution is given by:
\[
p^\pi_{\rm res}(x) = \normm\left(\max\left\{0,\,  \pi(x) - q(x)\right\}\right).
\]

We consider three possible cases: 

\textbf{Case (i): $q(x) > \frac{1}{1 - \alpha}\cdot p(x) \geq \frac{1}{\beta}\cdot p(x)$.} In this case, $\pi(x) = \frac{1}{1 - \alpha}\cdot p(x)$. As a result:
\[
\kappa^\pi(x) = \min\left\{1, \frac{p(x)}{(1 - \alpha)\cdot q(x)}\right\} =  \kappa(x);
\]
\begin{align*}
p^\pi_{\rm res}(x) &=  \normm\left( \max\left\{0,\,  \frac{1}{1 - \alpha} \cdot p(x) - q(x)\right\} \right) \\ 
&= 0 = \normm\left( \max\left\{0,\,  \frac{1}{\beta} \cdot p(x) - q(x)\right\} \right) = p_{\rm res}(x).
\end{align*}

\textbf{Case (ii): $\frac{1}{1 - \alpha}\cdot p(x) \geq \frac{1}{\beta}\cdot p(x) > q(x)$.} In this case, $\pi(x) = \frac{1}{\beta}\cdot p(x)$. As a result:
\[
\kappa^\pi(x) = \min\left\{1, \frac{p(x)}{\beta\cdot q(x)}\right\} = 1  =  \min\left\{1, \frac{p(x)}{(1 - \alpha)\cdot q(x)}\right\} = \kappa(x);
\]
\begin{align*}
p^\pi_{\rm res}(x) &=  \normm\left( \max\left\{0,\,  \frac{1}{\beta} \cdot p(x) - q(x)\right\} \right) = p_{\rm res}(x).
\end{align*}
\textbf{Case (iii): $\frac{1}{1 - \alpha}\cdot p(x) \geq q(x) \geq \frac{1}{\beta}\cdot p(x)$.} In this case, $\pi(x) = q(x)$. As a result:
\[
\kappa^\pi(x) = 1 =  
\min\left\{1, \frac{p(x)}{(1 - \alpha)\cdot q(x)}\right\} = \kappa(x);
\]
\[
p^\pi_{\rm res}(x) = 0 = \normm \left(\max\left\{0,\, \frac{1}{\beta}\cdot p(x) - q(x)\right\}\right) = p_{\rm res}(x).
\]
In all three cases, the acceptance probabilities and residual distributions are identical.
\end{proof}

\subsection{Proof of Lemma \ref{lem:token-acceptance-rate}}
\begin{proof}
Under a target distribution $\pi_t$, the probability of a draft token drawn from $q_t$ being is rejected is given by \citep{Leviathan:2023}:
\begin{align*}
\text{rejection probability} &= \sum_{v \in \cV} q_t(v) \cdot \left(1 - \min\left\{1, \frac{\pi_t(v)}{q_t(v)}\right\}\right)\\
&= 1- \sum_{v \in \cV}\min\left\{q_t(v), \pi_t(v)\right\}\\
&= 
\sum_{v \in \cV} \pi_t(v)- \sum_{v \in \cV}\min\left\{q_t(v), \pi_t(v)\right\}
\\
&=
\sum_{v \in \cV} \max\left\{0, \pi_t(v) - q_t(v)\right\}.
\end{align*}
Expanding $\pi$, the rejection probability becomes:
\[
\text{rejection probability} = \sum_{v \in \cV}
\max\left\{0,\, (1 - r(x_{<t})) \cdot q_t(v) + r(x_{<t}) \cdot p_t(v) - q_t(v) \right\}
\]
When $r(x_{<t}) = 1$, we have:
\begin{align*}
\text{rejection probability} &= 
\sum_{v \in \cV} \min\left\{0,\, p_t(v) - q_t(v) \right\} = D_{\tv}(p_t, q_t) = r(x_{<t}) \cdot D_{\tv}(p_t, q_t).
\end{align*}
When $r(x_{<t}) = 0$, we have:
\begin{align*}
\text{rejection probability} &= 
0 = r(x_{<t}) \cdot D_{\tv}(p_t, q_t),
\end{align*}
as desired.
\end{proof}

\subsection{Proof of Lemma \ref{lem:spec-def-opt}}
\begin{proof}
Expanding the deferral risk in \eqref{eq:spec-def-risk}, we have:
\begin{align*}
L_{\rm spec}(r; x_{<t}) &= r(x_{<t}) \cdot \left(
\E_{x_t \sim \Pr(\cdot|x_{<t})}\left[\ell(x_t, p_t)\right] + \alpha \cdot D_{\tv}(\bp, \bq) 
- \E_{x_t \sim \Pr(\cdot|x_{<t})}\left[ \ell(x_t, q_t)\right]
\right)\\ 
&\hspace{7cm}+ \E_{x_t \sim \Pr(\cdot|x_{<t})}\left[ \ell(x_t, q_t)\right].
\end{align*}
This objective is minimized by a deferral rule $r: \cV^{t-1} \rightarrow \{0,1\}$ that minimizes, for each prefix $x_{<t}$, the term within the parenthesis. Therefore the minimizer $r^*(x_{<t}) = 1$ whenever the term within the parenthesis is negative:
\[
\E_{x_t \sim \Pr(\cdot|x_{<t})}\left[\ell(x_t, p_t)\right] + \alpha \cdot D_{\tv}(\bp, \bq) 
- \E_{x_t \sim \Pr(\cdot|x_{<t})}\left[ \ell(x_t, q_t)\right] < 0,
\]
and $r^*(x_{<t}) = 0$ otherwise. Re-arranging the terms completes the proof.
\end{proof}

\subsection{Proof of Lemma \ref{lem:regret-01}}
For a fixed prefix $x_{<t}$, we can write the deferral risk in \eqref{eq:spec-def-risk} as:
\begin{align*}
    L_{\rm spec}(r; x_{<t}) &= r(x_{<t}) \cdot \left(
\E_{x_t \sim \Pr(\cdot|x_{<t})}\left[\ell(x_t, p_t)\right] + \alpha \cdot D_{\tv}(\bp, \bq) 
- \E_{x_t \sim \Pr(\cdot|x_{<t})}\left[ \ell(x_t, q_t)\right]
\right) + C,
\end{align*}
where $C$ is a term independent of the deferral rule $r$. Let $r^*: \cV^{t-1} \rightarrow\{0,1\}$ denote the optimal deferral rule that minimizes $L_{\rm spec}$ for any prefix $x_{<t}$. We then have:
\begin{align*}
    \lefteqn{L_{\rm spec}\left(\hat{r}_{\rm\tt OPT}; x_{<t}\right) - L_{\rm spec}\left(r^*; x_{<t}\right)}\\
    &=
    \left(\hat{r}_{\rm\tt OPT}(x_{<t}) - r^*(x_{<t})\right) \cdot \left(
\E_{x_t \sim \Pr(\cdot|x_{<t})}\left[\ell(x_t, p_t)\right] + \alpha \cdot D_{\tv}(\bp, \bq) 
- \E_{x_t \sim \Pr(\cdot|x_{<t})}\left[ \ell(x_t, q_t)\right]
\right).
\end{align*}
Adding and subtracting $\max_{v} q_t(v) - \max_{v} p_t(v)$ to the term within the second parenthesis, we get:
\begin{align}
\lefteqn{L_{\rm spec}\left(\hat{r}_{\rm\tt OPT}; x_{<t}\right) - L_{\rm spec}\left(r^*; x_{<t}\right)}
\nonumber\\\nonumber
    &=
    \left(\hat{r}_{\rm\tt OPT}(x_{<t}) - r^*(x_{<t})\right) \cdot \left(
\max_{v} q_t(v) + \alpha \cdot D_{\tv}(\bp, \bq) 
- \max_{v} p_t(v)
\right)\\\nonumber
&~~~~~+ 
\left(\hat{r}_{\rm\tt OPT}(x_{<t}) - r^*(x_{<t})\right) \cdot \left(
\E_{x_t \sim \Pr(\cdot|x_{<t})}\left[\ell(x_t, p_t)\right] 
-\E_{x_t \sim \Pr(\cdot|x_{<t})}\left[\ell(x_t, q_t)\right] - \max_{v} q_t(v) + \max_{v} p_t(v)
\right)\\\nonumber
&=
    \left(\hat{r}_{\rm\tt OPT}(x_{<t}) - r^*(x_{<t})\right) \cdot \left(
\max_{v} p_t(v) + \alpha \cdot D_{\tv}(\bp, \bq) 
- \max_{v} q_t(v)
\right)\\\nonumber
&~~~~~+ 
\left(\hat{r}_{\rm\tt OPT}(x_{<t}) - r^*(x_{<t})\right) \cdot \left(
\E_{x_t \sim \Pr(\cdot|x_{<t})}\left[\ell(x_t, p_t)\right] - 1 + \max_{v} p_t(v)\right)\\\nonumber
&~~~~~+ 
\left(\hat{r}_{\rm\tt OPT}(x_{<t}) - r^*(x_{<t})\right) \cdot\left(1- \max_{v} q_t(v) - \E_{x_t \sim \Pr(\cdot|x_{<t})}\left[\ell(x_t, q_t)\right] \right)\\\nonumber
&=
    \left(\hat{r}_{\rm\tt OPT}(x_{<t}) - r^*(x_{<t})\right) \cdot \left(
\max_{v} p_t(v) + \alpha \cdot D_{\tv}(\bp, \bq) 
- \max_{v} q_t(v)
\right)\\\nonumber
&~~~~~+ 
\left|\hat{r}_{\rm\tt OPT}(x_{<t}) - r^*(x_{<t})\right| \cdot \left|
\E_{x_t \sim \Pr(\cdot|x_{<t})}\left[\ell(x_t, p_t)\right] - 1 + \max_{v} p_t(v)\right|\\\nonumber
&~~~~~+ 
\left|\hat{r}_{\rm\tt OPT}(x_{<t}) - r^*(x_{<t})\right| \cdot\left|1- \max_{v} q_t(v) - \E_{x_t \sim \Pr(\cdot|x_{<t})}\left[\ell(x_t, q_t)\right] \right|\\\nonumber
&=
\underbrace{
\left(\hat{r}_{\rm\tt OPT}(x_{<t}) - r^*(x_{<t})\right) \cdot \left(
\max_{v} p_t(v) + \alpha \cdot D_{\tv}(\bp, \bq) 
- \max_{v} q_t(v)
\right)}_{\text{term}_1}\nonumber\\
&~~~~~+ 
\underbrace{
\left|
\E_{x_t \sim \Pr(\cdot|x_{<t})}\left[\ell(x_t, p_t)\right] - 1 + \max_{v} p_t(v)\right|}_{\text{term}_2}
+ 
\underbrace{
\left|1- \max_{v} q_t(v) - \E_{x_t \sim \Pr(\cdot|x_{<t})}\left[\ell(x_t, q_t)\right] \right|}_{\text{term}_3}
\label{eq:regret-helper}
\end{align}
where we have used the fact that 
$\left|\hat{r}_{\rm\tt OPT}(x_{<t}) - r^*(x_{<t})\right| \leq 1. 
$ 

We bound each term separately. For the first term,  consider two cases: (i) $\max_{v} p_t(v) + \alpha \cdot D_{\tv}(\bp, \bq) 
- \max_{v} q_t(v) \leq 0$ and (ii) $\max_{v} p_t(v) + \alpha \cdot D_{\tv}(\bp, \bq) 
- \max_{v} q_t(v) > 0$. When (i) holds, $\hat{r}_{\rm\tt OPT}(x_{<t}) = 1$; so irrespective of whether $r^*(x_{<t})$ is 0 or 1,
\begin{align*}
\text{term}_1
&\leq \max_{v} p_t(v) + \alpha \cdot D_{\tv}(\bp, \bq) 
- \max_{v} q_t(v) \leq 0
\end{align*}
When (ii) holds, $\hat{r}_{\rm\tt OPT}(x_{<t}) = 0$; so irrespective of whether $r^*(x_{<t})$ is 0 or 1,
\begin{align*}
\text{term}_1
&\leq -\left(
\max_{v} p_t(v) + \alpha \cdot D_{\tv}(\bp, \bq) 
- \max_{v} q_t(v)
\right) < 0.
\end{align*}
Thus we have:
\begin{align}
    \text{term}_1 \leq 0.
    \label{eq:term1}
\end{align}

We next move to the second term. Since $\ell=\ell_\zo$, we have:
\begin{align*}
\text{term}_2 
&=
\left|
\E_{x_t \sim \Pr(\cdot|x_{<t})}\left[\ell(x_t, p_t)\right] - 1 + \max_{v} p_t(v)\right|\\
&= 
\left|\E_{x_t \sim \Pr(\cdot|x_{<t})}\left[1\left(x_t \ne \argmax_v p_t(v) \right)\right] -1 + \max_{v} p_t(v)\right|\\
&= \left|\max_{v} p_t(v) - \sum_{x_t}\Pr(x_t|x_{<t}) \cdot 1\left(x_t = \argmax_v p_t(v) \right)\right|
\end{align*}
Suppose $v^* \in \argmax_v p_t(v)$, then:
\begin{align}
\text{term}_2 &= \left| p_t(v^*) - \Pr(v^*|x_{<t})\right| \leq \max_v \left| p_t(v) - \Pr(v|x_{<t})\right|.
\label{eq:term2}
\end{align}

Similarly, we can show that:
\begin{align}
\text{term}_3 &\leq \max_v \left| q_t(v) - \Pr(v|x_{<t})\right|.
\label{eq:term3}
\end{align}

Substituting \eqref{eq:term1}--\eqref{eq:term3} in \eqref{eq:regret-helper} completes the proof.

\begin{table}[!t]
    \centering
    \resizebox{\columnwidth}{!}{%
    \begin{tabular}{llll}
    \toprule
    Inference strategy & 
    Deferral decision $\delta(q,p)$ &
    Target distribution $\pi(x)$ & Execution\\
    \midrule
         SpecDecoding  \cite{Leviathan:2023} &
        - &
         $p(x)$
          & Speculative
         \\[2pt]
         Lossy SpecDecoding  \citep{Tran-Thien_2023} & 
         - &
         $\max\{\min\{ p(x), \frac{q(x)}{1 - \alpha} \}, 
         \frac{q(x)}{\beta}\}
         $
         & Speculative
         \\[2pt]
         BiLD* \citep{kim2023speculative} & 
        $\1\big(\, D(q, p) > \alpha \big)$
         &
         $ (1 - \delta) \cdot q(x) + \delta \cdot p(x)$
         & Speculative
         \\
         \midrule
         Cascade [{\tt Chow}] \citep{chow1970optimum} &
         $\1\big(\max_v q(v) < 1 - \alpha\big)$ 
         & $ (1 - \delta) \cdot q(x) + \delta \cdot p(x)$
         & Sequential
         \\[2pt]
         Cascade [{\tt ChowLog}]  &
         $\1\big(\,\entropy(q)> \alpha\big)$ 
         & $ (1 - \delta) \cdot q(x) + \delta \cdot p(x)$
         & Sequential
         \\
        \midrule
         Oracle [{\tt Diff}] \citep{jitkrittum2024does}
         & $\1\big(\max_v q(v) < \max_v p(v) - \alpha\big)$ 
         & $ (1 - \delta) \cdot q(x) + \delta \cdot p(x)$
         & Oracle
         \\[2pt]
        Oracle [{\tt DiffLog}]
         & $\1\big(\,\entropy(p) < \entropy(q) - \alpha\big)$ 
         & $ (1 - \delta) \cdot q(x) + \delta \cdot p(x)$
         & Oracle
         \\
         \midrule
         SpecCascade [{\tt Chow}]
         & $\1\big(\max_v q(v) < 1 - \alpha\big)$ 
         &$ (1 - \delta) \cdot q(x) + \delta \cdot p(x)$
         & Speculative
         \\[2pt]
         SpecCascade [{\tt ChowLog}]
         & $\1\big(\,\entropy(q) > \alpha\big)$ 
         &$ (1 - \delta) \cdot q(x) + \delta \cdot p(x)$
         & Speculative
         \\[2pt]
         SpecCascade [{\tt Diff01}]
         & $\1\big(\max_v q(v) < \max_v p(v)  - \alpha\big)$ 
         &$ (1 - \delta) \cdot q(x) + \delta \cdot p(x)$
         & Speculative
         \\[2pt]
         SpecCascade [{\tt DiffLog}]
         & $\1\big(\,\entropy(p) < \entropy(q) - \alpha\big)$ 
         &$ (1 - \delta) \cdot q(x) + \delta \cdot p(x)$
         & Speculative
         \\[2pt]
         SpecCascade [{\tt OPT01}]
         & $\1\big(\max_v q(v) < \max_v p(v)  - \alpha \cdot D_{\tv}(p, q)\big)$ 
         &$ (1 - \delta) \cdot q(x) + \delta \cdot p(x)$
         & Speculative\\[2pt]
         SpecCascade [{\tt OPTLog}]
         & $\1\big(\,\entropy(p) < \entropy(q) - \alpha\cdot D_{\tv}(p, q)\big)$ 
         & $ (1 - \delta) \cdot q(x) + \delta \cdot p(x)$
         & Speculative
         \\
          \bottomrule
    \end{tabular}
    }
    \vspace{4pt}
    \caption{Target distributions associated with different inference algorithms, where $\alpha$ is a free parameter and $\beta \geq 1-\alpha$ is a parameter dependent on $q, p$ and $\alpha$. The last column indicates whether the  execution is sequential (Algorithm \ref{alg:seq-sampling}), via an oracle (Algorithm \ref{alg:oracle}), or speculative (Algorithm \ref{alg:spec-cascade}) \citep{Leviathan:2023}. 
    The third row presents a variant of the BiLD algorithm of \cite{kim2023speculative}, where $D(q, p)$ is a measure of discrepancy between $q$ and $p$; the original algorithm differs from \citep{Leviathan:2023} in the use of a deterministic speculative decoding procedure with a dynamic draft window
    (see \S\ref{sec:related}). 
    }
    \vspace{-10pt}
    \label{tab:targets-full}
\end{table}

\section{Derivation of Chow's rule}
\label{app:chow}
We show below that Chow's rule is a plug-in estimator to the optimal solution to the following objective
\begin{align}
L_{\rm rej}(r; x_{<t}) &=\E_{x_t \sim \Pr(\cdot|x_{<t})}\Big[ \big(1 - r(x_{<t})\big) \cdot \ell(x_t, q_t) + r(x_{<t}) \cdot \alpha \Big],
\label{eq:seq-rej-risk}
\end{align}
where the deferral rule is penalized with a constant penalty $\alpha \in [0, 1]$ for choosing to defer to the large model. 

Following the same steps as Lemma \ref{lem:seq-def-opt}, it is easy to show:
\begin{lemma}
The minimizer of \eqref{eq:seq-rej-risk} is of the form:
\begin{align}
    r^*(x_{<t}) = 1 ~~\iff~~ \E_{x_t \sim \Pr(\cdot|x_{<t})}\left[\ell(x_t, q_t)\right] \,>\, \alpha.
    \label{eq:seq-rej-opt}
\end{align}
\end{lemma}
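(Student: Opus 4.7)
The plan is to follow essentially the same pointwise-minimization argument used in the proof of Lemma~\ref{lem:seq-def-opt}, but applied to the simpler objective in~\eqref{eq:seq-rej-risk} where the deferral cost is a constant $\alpha$ rather than the verifier's loss plus $\alpha$. Since the objective is fixed for a specific prefix $x_{<t}$ and $r(x_{<t}) \in \{0,1\}$, the minimization reduces to choosing the better of two scalar values.

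First I would expand the expectation and collect terms linear in $r(x_{<t})$. Specifically, writing out~\eqref{eq:seq-rej-risk} gives
\begin{align*}
L_{\rm rej}(r; x_{<t}) &= \big(1 - r(x_{<t})\big)\cdot \E_{x_t \sim \Pr(\cdot|x_{<t})}\left[\ell(x_t, q_t)\right] + r(x_{<t}) \cdot \alpha \\
&= \E_{x_t \sim \Pr(\cdot|x_{<t})}\left[\ell(x_t, q_t)\right] + r(x_{<t}) \cdot \Big(\alpha - \E_{x_t \sim \Pr(\cdot|x_{<t})}\left[\ell(x_t, q_t)\right]\Big).
\end{align*}
The first term is independent of $r$, so the minimizer is determined entirely by the sign of the coefficient of $r(x_{<t})$.

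Next I would conclude by pointwise optimization: for each $x_{<t}$, set $r^*(x_{<t}) = 1$ exactly when the coefficient $\alpha - \E_{x_t \sim \Pr(\cdot|x_{<t})}\left[\ell(x_t, q_t)\right]$ is negative, i.e.\ when $\E_{x_t \sim \Pr(\cdot|x_{<t})}\left[\ell(x_t, q_t)\right] > \alpha$, and $r^*(x_{<t}) = 0$ otherwise. Rearranging recovers~\eqref{eq:seq-rej-opt}. Ties (equality) can be broken either way without affecting the minimum value.

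There is essentially no technical obstacle here; the result is a one-line consequence of the fact that the objective is linear (hence affine) in the binary decision variable $r(x_{<t})$. The only thing worth noting is that, unlike Lemma~\ref{lem:seq-def-opt} where the deferral threshold involves the verifier's expected loss, here the threshold is just the constant $\alpha$, which is why the resulting plug-in estimator (Chow's rule) only depends on a confidence proxy for $\E[\ell(x_t, q_t)]$ computed from $q_t$ alone and compares it to a fixed cutoff.
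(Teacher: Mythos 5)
Your proof is correct and follows exactly the route the paper intends: the paper's appendix simply notes the result "follows the same steps as Lemma~\ref{lem:seq-def-opt}," namely expanding the objective, isolating the term linear in the binary decision $r(x_{<t})$, and setting $r^*(x_{<t})=1$ precisely when that coefficient, $\alpha - \E_{x_t \sim \Pr(\cdot|x_{<t})}\left[\ell(x_t, q_t)\right]$, is negative. Your treatment of ties and the remark contrasting the constant threshold $\alpha$ with the verifier-dependent threshold of Lemma~\ref{lem:seq-def-opt} are both consistent with the paper.
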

If $\ell=\ell_\zo$, one may employ a plug-in estimator to \eqref{eq:seq-rej-opt} by replacing the expected 0-1 loss over $q_t$ with $1 - \max_v q_t(v)$, giving us $\hat{r}_{\rm \tt Chow}(x_{< t})$ in \eqref{eq:chow01}. If $\ell=\ell_{\log}$, one may replace the expected log loss over $q_t$ with the entropy of $q_t$, giving us:
\begin{align}
    \hat{r}_{\rm \tt ChowLog}(x_{<t}) = 1 ~\iff~\entropy\big(q(\cdot|x_{<t})\big) > \alpha,
    \label{eq:chowlog}
\end{align}
where $\entropy(q) = -\sum_{v \in \cV} q(v) \cdot \log(q(v)).$

\section{Optimal Deferral: Additional Discussion}
\label{app:log}
We provide additional discussion for the optimal deferral rules derived in \S\ref{sec:cascade-meets-speed} and \S\ref{sec:spec-cascades}.

\subsection{Optimal sequential deferral when $\ell=\ell_{\log}$}
Recall that the optimal deferral rule for a sequential cascade in Lemma \ref{lem:seq-def-opt} takes the form:
\begin{align*}
    r^*(x_{<t}) = 1 ~~\iff~~ \E_{x_t \sim \Pr(\cdot|x_{<t})}\left[\ell(x_t, q_t)\right] \,>\, \E_{x_t \sim \Pr(\cdot|x_{<t})}\left[\ell(x_t, p_t)\right] + \alpha \cdot D_{\textup{\tv}}(\bp, \bq).
\end{align*}

When $\ell = \ell_{\log}$, we may use the entropy $-\sum_v q_t(v)\cdot \log(q_t(v))$ from $q_t$ as an estimate of its expected log-loss, and similarly for $p_t$, giving us the plug-in estimator: 
\begin{align}
\textstyle
    \hat{r}_{\rm\tt DiffLog}(x_{<t}) = 1 ~~\iff~~ \sum_v q_t(v)\cdot \log(q_t(v))  \,<\, \sum_v p_t(v)\cdot \log(p_t(v)) - \alpha.
    \label{eq:seq-opt-def-log-plugin}
\end{align}

\subsection{Optimal speculative deferral when $\ell=\ell_{\log}$}
\label{app:opt-log}
Recall that the optimal deferral rule for a speculative cascade in Lemma \ref{lem:spec-def-opt} takes the form:
\begin{align*}
    r^*(x_{<t}) = 1 ~~\iff~~ \E_{x_t \sim \Pr(\cdot|x_{<t})}\left[\ell(x_t, q_t)\right] \,>\, \E_{x_t \sim \Pr(\cdot|x_{<t})}\left[\ell(x_t, p_t)\right] + \alpha \cdot D_{\textup{\tv}}(\bp, \bq).
\end{align*}
When $\ell=\ell_{\log}$, one may construct a \textit{plug-in} estimator for the above rule by replacing the expected log loss with the entropy from the distribution:
\begin{align}
\textstyle
    \hat{r}_{\rm\tt OPTLog}(x_{<t}) = 1 ~\iff~ \sum_v q_t(v)\cdot \log(q_t(v))  \,<\, \sum_v p_t(v)\cdot \log(p_t(v)) - \alpha\cdot D_{\textup{\tv}}(\bp, \bq).
    \label{eq:spec-opt-def-log-plugin}
\end{align}

\begin{lemma}[{Regret bound for $\hat{r}_{\rm\tt OPTLog}$}] 
Suppose $\ell = \ell_{\log}$. Suppose for a fixed $x_{<t}$,  $|\log(q(v))| \leq B_q$ and $|\log(p(v))| \leq B_p,\, \forall v \in \cV$, for some $B_q, B_p > 0$. Then:
\[
L_{\rm spec}(r_{\rm\tt OPT}; x_{<t}) - \min_r L_{\rm spec}(r; x_{<t}) ~\leq~ \textstyle B_q \cdot \sum_{v \in \cV} \big|\Pr(v|x_{<t}) - q_t(v)\big| \,+\, B_p \cdot \sum_{v \in \cV} \big|\Pr(v|x_{<t}) - p_t(v)\big|.
\]
\vspace{-10pt}
\label{lem:regret-log}
\end{lemma}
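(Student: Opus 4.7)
The plan is to follow the same template used in the proof of Lemma \ref{lem:regret-01}, but to replace the 0-1 loss/max-probability pairing with the log-loss/entropy pairing and use the boundedness assumptions $|\log q(v)|\le B_q$, $|\log p(v)|\le B_p$ in place of the trivial $[0,1]$ bounds that made the 0-1 analysis work.

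First I would expand $L_{\rm spec}(r;x_{<t})$ as in the earlier proof so that it splits as
\[
L_{\rm spec}(r;x_{<t}) \;=\; r(x_{<t})\cdot\Delta(x_{<t}) \;+\; C,
\]
where $\Delta(x_{<t}) = \E_{v\sim\Pr(\cdot|x_{<t})}[\ell_{\log}(v,p_t)] + \alpha\cdot D_{\tv}(p_t,q_t) - \E_{v\sim\Pr(\cdot|x_{<t})}[\ell_{\log}(v,q_t)]$ and $C$ is independent of $r$. Writing $r^*$ for the Bayes-optimal deferral rule from Lemma \ref{lem:spec-def-opt}, the regret equals $(\hat r_{\rm\tt OPTLog}(x_{<t}) - r^*(x_{<t}))\cdot\Delta(x_{<t})$. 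I would then add and subtract the plug-in surrogate $\widehat\Delta(x_{<t}) := -\sum_v p_t(v)\log p_t(v) + \sum_v q_t(v)\log q_t(v) + \alpha\cdot D_{\tv}(p_t,q_t)$, which is exactly the quantity $\hat r_{\rm\tt OPTLog}$ thresholds against zero.

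The first resulting piece, $(\hat r_{\rm\tt OPTLog}(x_{<t}) - r^*(x_{<t}))\cdot \widehat\Delta(x_{<t})$, is non-positive by the same case analysis used in the proof of Lemma \ref{lem:regret-01}: $\hat r_{\rm\tt OPTLog}$ is by construction the minimizer of $r\mapsto r\cdot\widehat\Delta(x_{<t})$ over $r\in\{0,1\}$. The remaining two pieces are approximation errors of the form $|\E_{v\sim\Pr}[\ell_{\log}(v,q_t)] - H(q_t)|$ and similarly for $p_t$, each multiplied by $|\hat r_{\rm\tt OPTLog}(x_{<t}) - r^*(x_{<t})|\le 1$.

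The key step — which is the only place the log-loss bounds enter — is the estimate
\[
\bigl|\E_{v\sim\Pr(\cdot|x_{<t})}[\ell_{\log}(v,q_t)] - H(q_t)\bigr|
 \;=\; \Bigl|\sum_{v}\bigl(q_t(v)-\Pr(v|x_{<t})\bigr)\log q_t(v)\Bigr|
 \;\le\; B_q \sum_v \bigl|\Pr(v|x_{<t}) - q_t(v)\bigr|,
\]
with the analogous bound $B_p\sum_v|\Pr(v|x_{<t})-p_t(v)|$ for $p_t$. Summing the two approximation-error bounds and using the non-positivity of the surrogate term yields exactly the claimed inequality. I do not anticipate a serious obstacle: the main subtlety, compared to Lemma \ref{lem:regret-01}, is simply that the expected log-loss and entropy need not lie in $[0,1]$, so the bounds $B_q,B_p$ must be carried through the Hölder-type step above; everything else mirrors the 0-1 argument.
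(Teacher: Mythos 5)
Your proposal is correct and follows essentially the same route as the paper's proof: the same regret decomposition into a plug-in surrogate term (nonpositive by the thresholding case analysis, exactly as in Lemma~\ref{lem:regret-01}) plus two approximation-error terms, with the H\"older-type bound $\bigl|\sum_{v}(\Pr(v|x_{<t})-q_t(v))\log q_t(v)\bigr|\le B_q\sum_{v}\bigl|\Pr(v|x_{<t})-q_t(v)\bigr|$ (and its analogue for $p_t$) replacing the $[0,1]$ bounds used for the 0-1 loss. No changes are needed.
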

\begin{proof}
The proof follows similar steps to that for Lemma \ref{lem:regret-01}, except in bounding the resulting $\text{term}_2$ and $\text{term}_3$ for the log loss. In this case, 
\begin{align*}
\text{term}_2 
&=
\left|
\E_{x_t \sim \Pr(\cdot|x_{<t})}\left[\log(p_t(x_t))\right] - \sum_v p_t(v)\cdot \log(p_t(v))\right|\\
&= 
\left|\sum_v \Pr(v|x_{<t})\cdot \log(p_t(v)) -\sum_v p_t(v)\cdot \log(p_t(v))\right|\\
&\leq
\sum_v\left| \Pr(v|x_{<t}) -\sum_v p_t(v)\right|\cdot \log(p_t(v))\\
&\leq B_p \cdot\sum_v\left| \Pr(v|x_{<t}) -\sum_v p_t(v)\right|.
\end{align*}
Similarly,
\begin{align*}
\text{term}_3 
&\leq
\sum_v\left| \Pr(v|x_{<t}) -\sum_v p_t(v)\right|\cdot \log(p_t(v))\\
&\leq B_q \cdot\sum_v\left| \Pr(v|x_{<t}) -\sum_v q_t(v)\right|.
\end{align*}
Plugging these bounds into the  equivalent of \eqref{eq:regret-helper} in Lemma \ref{lem:regret-01} for the log-loss completes the proof.
\end{proof}

\subsection{Optimal speculative deferral for greedy decoding}
\label{app:greedy}
\begin{lemma}
\label{lem:greedy-special-case}
When $T \rightarrow 0$,  running Algorithm \ref{alg:spec-cascade} with  $\tilde{r}_{\rm\tt OPT}$ as the deferral rule and $\tilde{q}_t$ as the drafter is equivalent to running it with  $\hat{r}_{\rm\tt Diff}$ in \eqref{eq:seq-opt-def-01-plugin} as the deferral rule and $\tilde{q}_t$ as the drafter.
\end{lemma}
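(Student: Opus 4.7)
}
The plan is to reduce the equivalence to a simple case analysis on the relationship between $v_q^{*} = \argmax_v q_t(v)$ and $v_p^{*} = \argmax_v p_t(v)$. Under temperature $T \to 0$, the sampling distributions $\tilde{q}_t$ and $\tilde{p}_t$ concentrate as point masses on $v_q^{*}$ and $v_p^{*}$ respectively. In particular, both $\max_v \tilde{q}_t(v) = 1$ and $\max_v \tilde{p}_t(v) = 1$. The TV distance between two point masses takes only two possible values: $D_{\tv}(\tilde{p}_t, \tilde{q}_t) = 0$ when $v_p^{*} = v_q^{*}$ (the two deltas coincide) and $D_{\tv}(\tilde{p}_t, \tilde{q}_t) = 1$ otherwise (the deltas are supported on disjoint atoms).

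Given these identities, I would evaluate both the speculative-cascade rule $\tilde{r}_{\rm\tt OPT}$ and the sequential-cascade rule $\hat{r}_{\rm\tt Diff}$ on $(\tilde{q}_t, \tilde{p}_t)$. The $\tilde{r}_{\rm\tt OPT}$ condition $\max_v \tilde{q}_t(v) < \max_v \tilde{p}_t(v) - \alpha \cdot D_{\tv}(\tilde{p}_t, \tilde{q}_t)$ reduces to $1 < 1 - \alpha \cdot c$ for $c \in \{0, 1\}$, which is false for every $\alpha \ge 0$. The $\hat{r}_{\rm\tt Diff}$ condition $\max_v \tilde{q}_t(v) < \max_v \tilde{p}_t(v) - \alpha$ reduces to $1 < 1 - \alpha$, which is also false for $\alpha \ge 0$. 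Hence, in both cases of the argmax comparison, both deferral rules return $\delta = 0$.

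Finally, I would translate the coincidence of $\delta$-values into a coincidence of algorithmic behavior. Since $\delta(q_t, p_t) = 0$ under either rule, the target distribution constructed by \textup{\texttt{SpecCascade}} is $\pi_t = \tilde{q}_t$. Invoking \textup{\texttt{GenSpecSample}} with $\pi_t = \tilde{q}_t$ and drafter $\tilde{q}_t$, every draft token is accepted with probability $\min\{1, \pi_t(\cdot)/\tilde{q}_t(\cdot)\} = 1$, so the output distribution is exactly $\tilde{q}_t$, i.e., the deterministic greedy sample $v_q^{*}$. Thus both instantiations produce the same sequence, establishing the desired equivalence.

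The main subtlety (rather than a hard obstacle) is taking the limit $T \to 0$ cleanly: one needs the temperature-adjusted distributions to be the objects fed to both rules, and to handle tie-breaking in the $\argmax$ when $q_t$ or $p_t$ has multiple maximizers. Assuming a fixed tie-breaking convention, the above case analysis goes through verbatim.
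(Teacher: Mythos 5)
There is a genuine gap: you evaluate the confidence comparison in both deferral rules on the temperature-scaled (one-hot) distributions $\tilde{q}_t,\tilde{p}_t$, but in the lemma (and in $\hat{r}_{\rm\tt Diff}$ of \eqref{eq:seq-opt-def-01-plugin} and $\tilde{r}_{\rm\tt OPT}$) the thresholding is on the maximum probabilities of the \emph{raw} model distributions $q_t,p_t$; only the drafting and the TV penalty involve the greedy one-hot distributions. Consequently your reduction ``$1 < 1-\alpha c$ is always false, hence $\delta\equiv 0$ for both rules'' is wrong: under greedy decoding the cascade can and does defer, namely whenever $\max_v q_t(v) < \max_v p_t(v) - \alpha$ (and the argmaxes differ), and your conclusion that both algorithms always emit the drafter's greedy token would make the lemma vacuous and contradicts the fact that the greedy-decoding experiments trace out nontrivial cost--quality curves as $\alpha$ varies.

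The correct argument keeps the two roles separate. First, since $\tilde{q}_t$ and $\tilde{p}_t$ are point masses at $\argmax_v q_t(v)$ and $\argmax_v p_t(v)$, one has $D_{\textup{\tv}}(\tilde{p}_t,\tilde{q}_t)=\1\big(\argmax_v q_t(v)\ne\argmax_v p_t(v)\big)$ (you do note this). Then do the case analysis on the \emph{target distribution and residual} produced by Algorithm \ref{alg:gen-speed}: if the argmaxes agree, $\tilde{q}_t=\tilde{p}_t$, so whatever value either deferral rule takes, the target distribution $(1-\delta)\tilde{q}_t+\delta\tilde{p}_t$ equals $\tilde{q}_t$, the acceptance probability is $1$, and the residual is immaterial; if the argmaxes disagree, the TV distance equals $1$, so $\delta_{\rm OPT}=\1\big(\max_v q_t(v)<\max_v p_t(v)-\alpha\big)=\delta_{\rm\tt Diff}$, and hence the acceptance probabilities $\min\{1,\pi_t(v)/\tilde{q}_t(v)\}$ and the residual distributions $\operatorname{norm}(\max\{0,\pi_t(\cdot)-\tilde{q}_t(\cdot)\})$ coincide for the two rules. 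Equality of acceptance probabilities and residuals in both cases is what establishes the equivalence; the rules are not identically zero. Your observation about tie-breaking in the argmax is fine, but it does not repair the main issue of which distributions the thresholds are applied to.
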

\begin{proof}
Note that under greedy inference, $\tilde{q}_t$ $\tilde{p}_t$ are one-hot encodings of $\argmax_v q_t(v)$ and $\argmax_v p_t(v)$ respectively. As a result,
\[
D_\tv(\tilde{q}_t, \tilde{p}_t) = \1\left(\argmax_v q_t(v) \ne \argmax_v p_t(v)\right). 
\]
When running Algorithm \ref{alg:spec-cascade} with  $\tilde{r}_{\rm\tt OPT}$ as the deferral rule, we will accept a draft token $v$ with probability:
\[
\kappa(v) = \min\left\{1, \frac{(1-\delta_{\rm OPT}(q,p)) \cdot \tilde{q}(v) + \delta_{\rm OPT}(q,p) \cdot \tilde{p}(v)}{\tilde{q}(v)}\right\} 
\]
where $\delta_{\rm OPT}(q,p) ~=~ \1\left(\max_{v} q(v)  \,<\, \max_{v} p(v) - \alpha \cdot \1\left(\argmax_v q(v) \ne \argmax_v p(v)\right)\right)$.
When $\argmax_v q(v) = \argmax_v p(v)$, then $\tilde{q} = \tilde{p}$, and irrespective of the outcome of $\delta(q,p),$ we have that $\pi(v) = 1$. When $\argmax_v q(v) \ne \argmax_v p(v)$, then 
$$\pi(v) = 1-\delta_{\rm OPT}(q,p) = \1\left(\max_{v} q(v)  \,\geq\, \max_{v} p(v) - \alpha\right) = 1 - 
\delta_{\rm \tt Diff}(q,p).$$

When a token gets rejected, we sample a new token from the residual distribution:
\[
p_{\rm res}(v) \propto \min\{0, (1-\delta_{\rm OPT}(q,p)) \cdot \tilde{q}(v) + \delta_{\rm OPT}(q,p) \cdot \tilde{p}(v) - \tilde{q}(v)\}
= \delta_{\rm OPT}(q,p) \cdot \min\{0, \tilde{p}(v) - \tilde{q}(v)\}
\]
When $\argmax_v q(v) = \argmax_v p(v)$, $p_{\rm res}(v) = 0$. When $\argmax_v q(v) \ne \argmax_v p(v)$, 
$$
p_{\rm res}(v) \propto \delta_{\rm OPT}(q,p) \cdot \min\{0, \tilde{p}(v) - \tilde{q}(v)\} = \delta_{\rm Diff}(q,p) \cdot \min\{0, \tilde{p}(v) - \tilde{q}(v)\}.
$$
Thus both the acceptance probability and the residual distribution are the same as the one we would have used had we run Algorithm \ref{alg:spec-cascade} with  $\hat{r}_{\rm\tt Diff}$ as the deferral rule.
\end{proof}
~~\\[-1cm]

\subsection{Equivalence between \eqref{eq:spec-def-constrained} and \eqref{eq:spec-def-risk}}
\label{app:equivalence}
Since the prefix $x_{<t}$ is fixed in \eqref{eq:spec-def-constrained}, the constrained optimization we seek to solve is of essentially of the following form:
\[
\min_{r \in \{0,1\}} (1-r) \cdot c_0 + r \cdot c_1 ~~~\text{s.t.}~~~ r \cdot c_2 \leq B,
\]
for some coefficients $c_0, c_1, c_2 > 0$. Since $r$ is a binary variable, we may formulate an equivalent unconstrained problem with the same minimizer:
\[
\min_{r \in \{0,1\}} (1-r) \cdot c_0 + r \cdot c_1 + \alpha \cdot r \cdot c_2,
\]
where we choose $\alpha = 0$ when $c_2 \leq B$ and choose an $\alpha > \frac{1}{c_2} \cdot (c_0 - c_1)$ otherwise. This unconstrained optimization problem is of the form in \eqref{eq:spec-def-risk}.

\section{Token-specific Speculative Cascade}
\label{app:sample-dep}

We provide a modification of Algorithm \ref{alg:spec-cascade} to accommodate the token-specific deferral rules in \S\ref{sec:sample-dep}.
\begin{algorithm}[H]\small
\caption{{\tt{TokenSpecCascade}}}
\label{alg:token-spec}
\begin{algorithmic}
\Require Models $q$, $p$, Token-specific deferral rule $r$, Prefix $x_{<t}$, Block size $\gamma$
\State $\bT_{\tt Token}(q,p)(v) \defeq q(v) \cdot (1 - r(x_{<t}, v))  +  p(v) \cdot \sum_{v' \in \cV} r(x_{<t}, v') \cdot q(v')$
\Ensure $\text{\tt{GenSpecSample}}(q, p, \bT_{\tt Token}, x_{<t}, \gamma)$
\end{algorithmic}
\end{algorithm}

\textbf{Optimal token-specific deferral.} Similar to \S\ref{sec:opt-spec}, we may consider deriving the optimal token-specific deferral rule. We start by formulating a similar optimization objective. For a fixed prefix $x_{<t}$, this would look like:
\begin{align}
\min_{r}&~~\E_{v \sim \Pr(\cdot|x_{<t})}\Big[ \ell(v, \pi_{\tt Token})\big) \Big]
\label{eq:token-spec-def-constrained}\\
&
\text{s.t.}~~
 D_{\tv}(\pi_{\tt Token}, \bq)  \,\leq\, B,
\nonumber
\end{align}
where 
$\pi_{\tt Token}(v) \defeq (1 - r(x_{<t}, v)) \cdot q_t(v) +  \eta  \cdot p_t(v)$ is the target distribution resulting from the choice of $r$, $\eta = \sum_{v' \in \cV} r(x_{<t}, v') \cdot q_t(v')$ is a normalization term, and $B > 0$ is a budget parameter.

However, unlike \S\ref{sec:opt-spec}, the above constrained optimization problem does not lend itself to a simple closed-form solution. In some highly simplistic special cases, we may be able to derive a solution. For example, suppose $\ell=\ell_\zo$,  and the mode of $q_t$ coincides with that of $\Pr(\cdot|x_{<t})$, i.e., $\argmax_v q_t(v) = \argmax_v \Pr(v|x_{<t})$; then the optimal token-specific rule is given by $r(x_{<t}, v) = 0,$ for all $v \in \cV$. 

Under more realistic cases, we may not be able to derive a solution as simple as the {\tt OPT} rule in \eqref{eq:spec-opt-def-01-plugin}. 
Therefore, in our experiments, we employ the three heuristic rules in equations \ref{eq:sample-dep-01-plugin-v1}--\ref{eq:sample-dep-01-plugin-v3}, which are motivated by the form of the simpler {\tt Diff} rule in \eqref{eq:seq-opt-def-01-plugin}.

\section{Additional Experimental Details}
\label{app:expts}
We provide additional details about our experimental setup and additional experimental results. \textbf{We will release code and an illustrative tutorial notebook along with the final manuscript.}
\begin{figure}[t]
    \centering
    \includegraphics[scale=0.33]{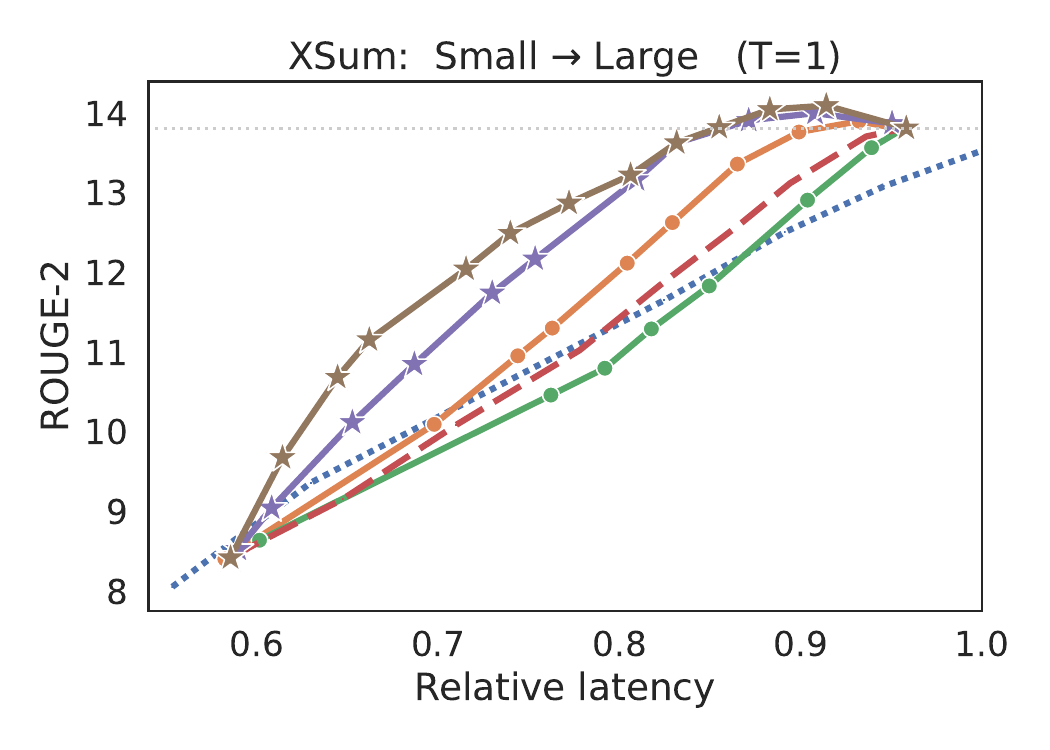}
    \includegraphics[scale=0.33]{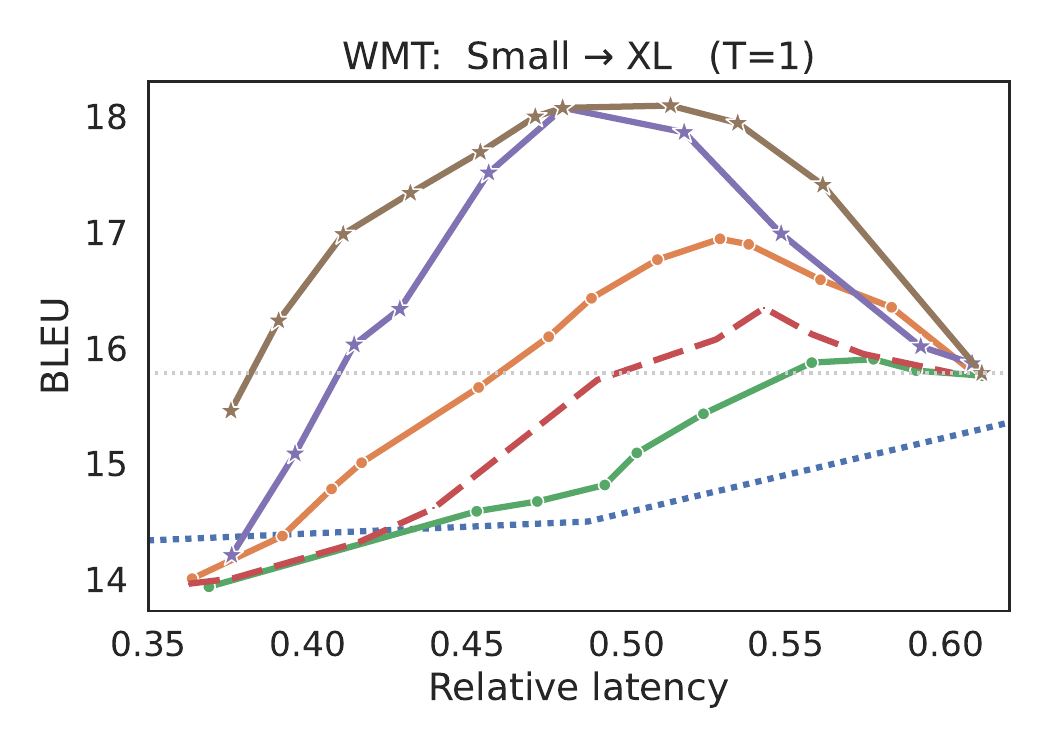}
    \includegraphics[scale=0.33]{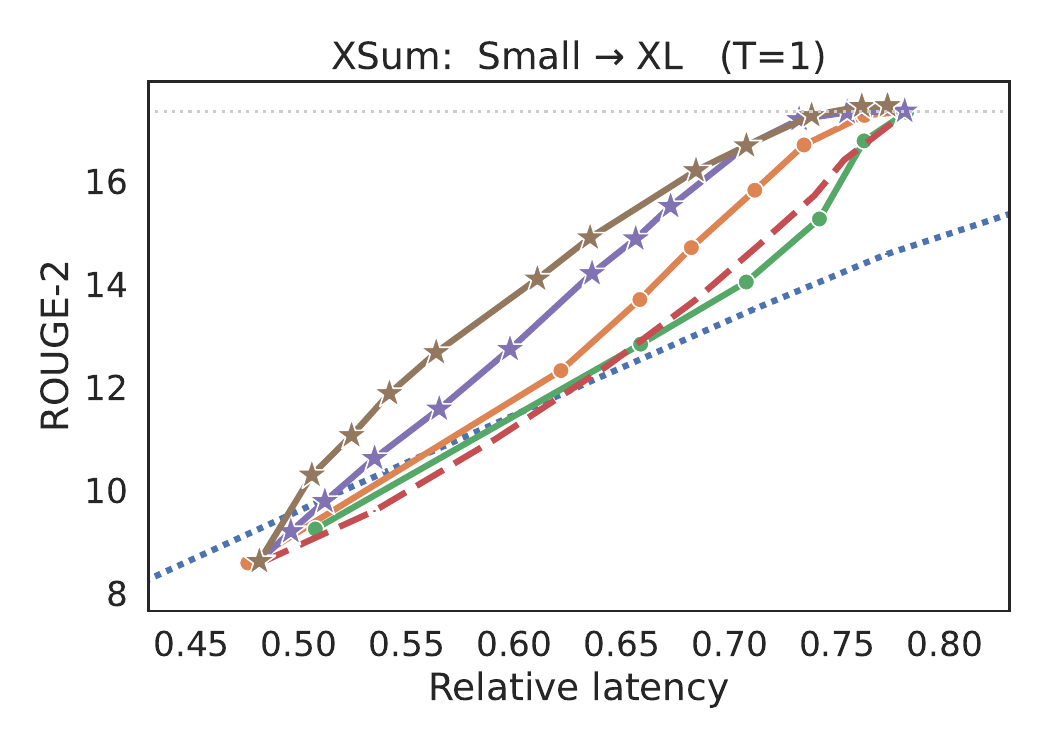}
    \includegraphics[scale=0.33]{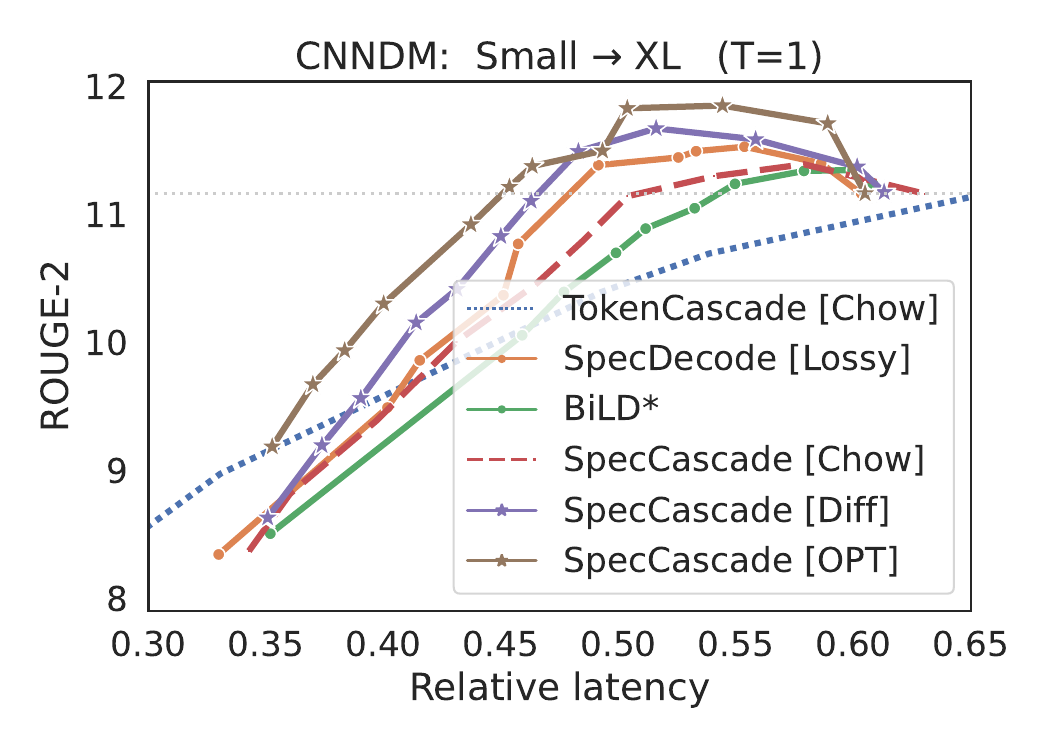}
    \vspace{-5pt}
    \caption{Plots of quality vs.\ latency for T5 models with temperature $T=1$ and block size $\gamma=5$. We include \textbf{T5 plots not included in Figure \ref{fig:tradeoffs}} in the main text. Each method interleaves T5-small with T-large (or T5-XL). The $x$-axis tracks the latency \emph{relative} to that of  calling  the large model on all inputs.
    The horizontal dotted line denotes the quality of the large model. 
    }
    \label{fig:tradeoffs-additional-temp-sample}
\end{figure}

\begin{figure}[t]
    \centering
    \includegraphics[scale=0.33]{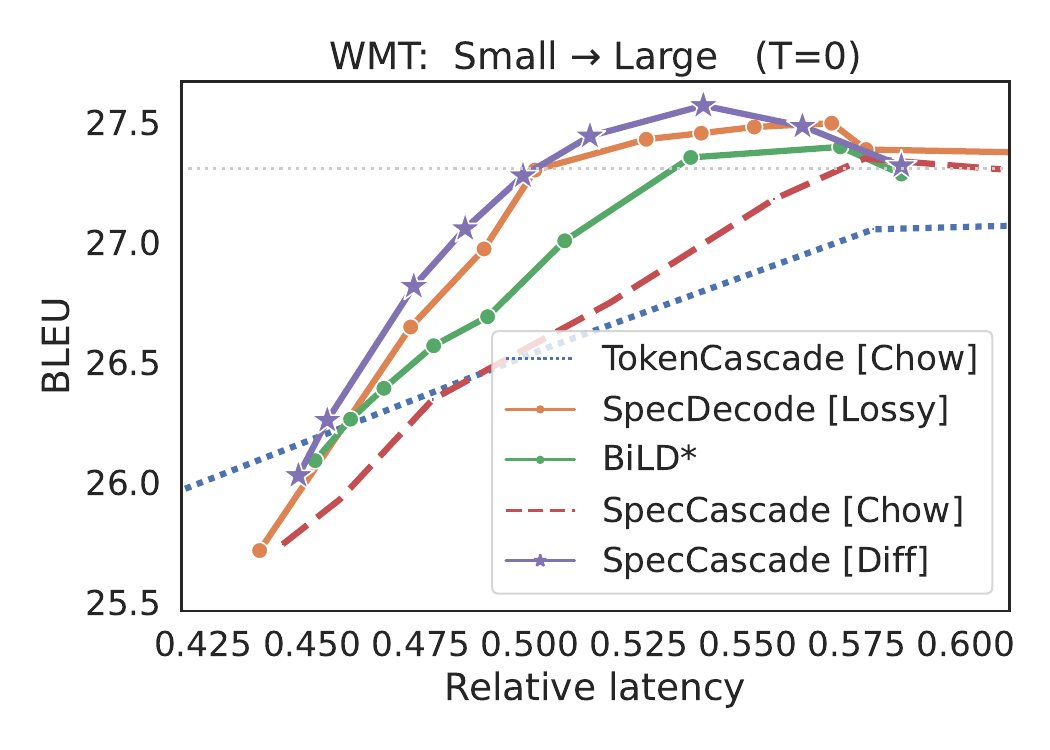}
    \includegraphics[scale=0.33]{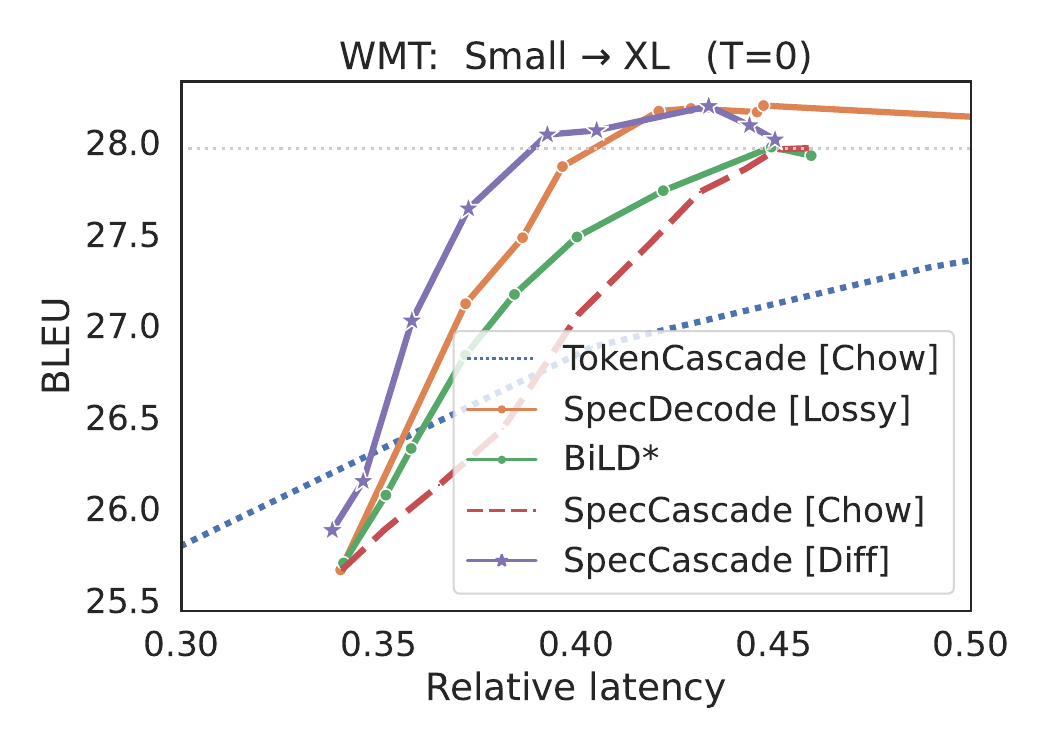}
    \includegraphics[scale=0.33]{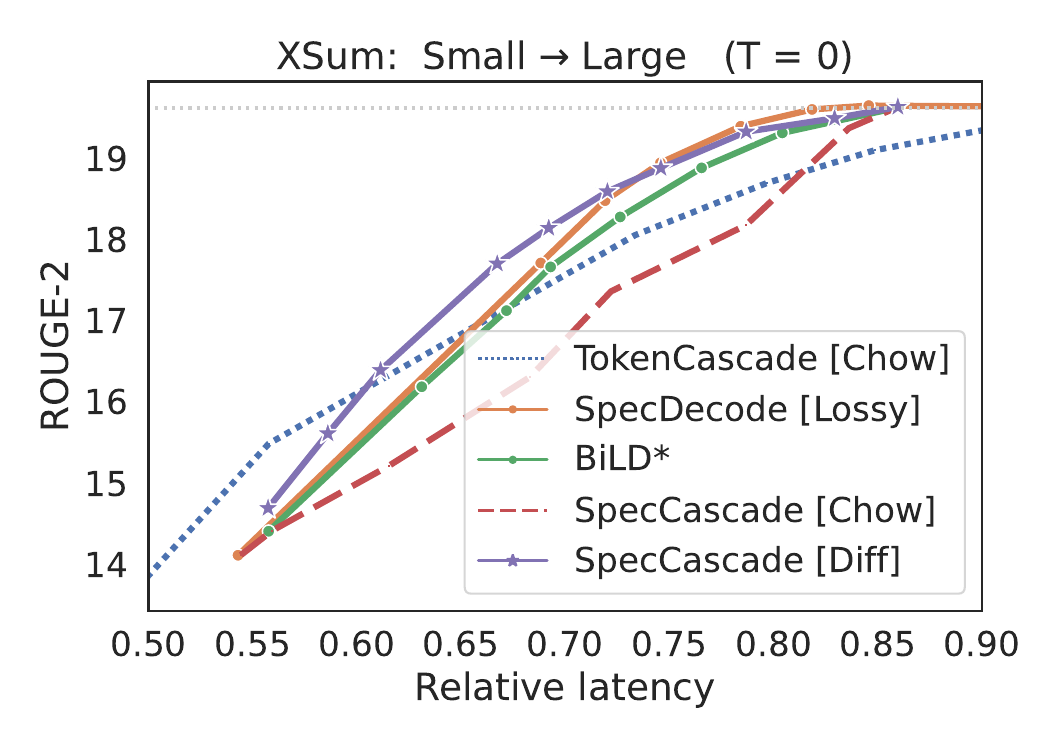}
    \includegraphics[scale=0.33]{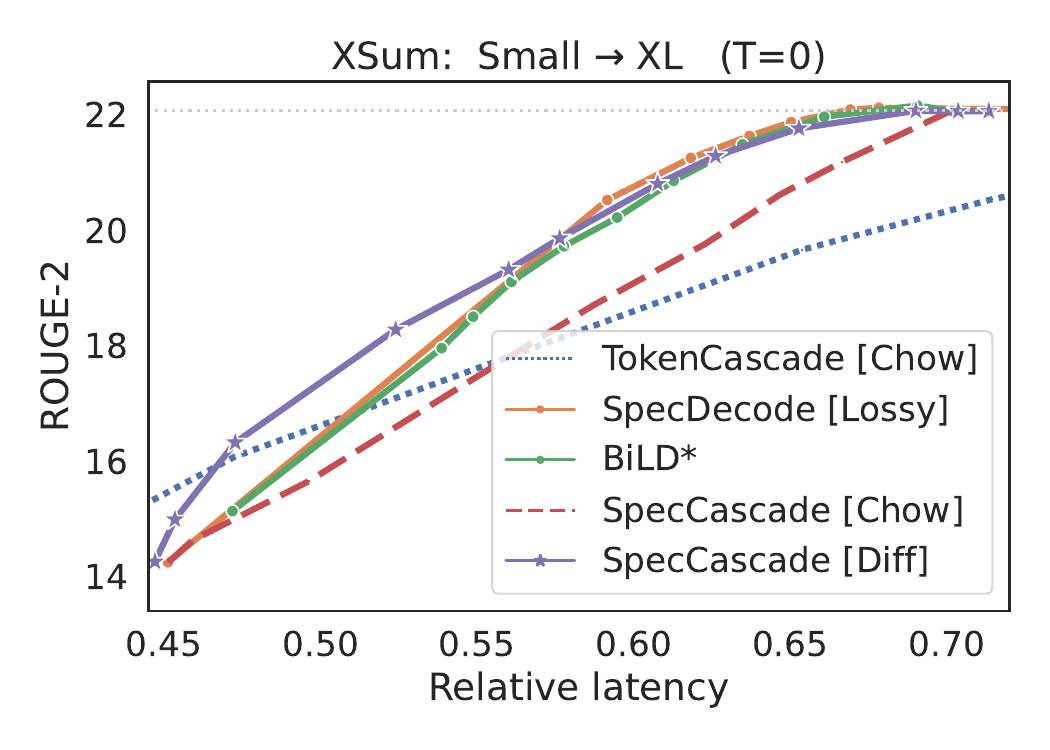}
    \includegraphics[scale=0.33]{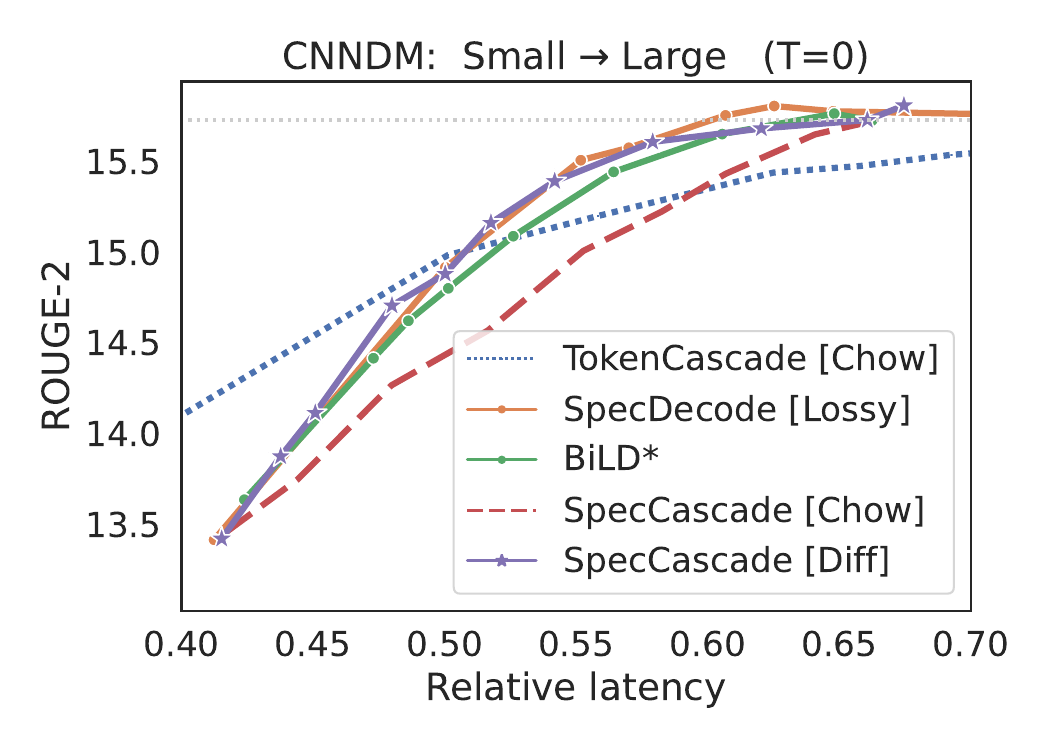}
    \vspace{-5pt}
    \caption{Plots of quality vs.\ latency for T5 models with \textbf{greedy decoding} with temperature $T=0$ and block size $\gamma=5$. Each method interleaves T5-small with T-large (or T5-XL). The $x$-axis tracks the latency \emph{relative} to that of  calling  the large model on all inputs.
    The horizontal dotted line denotes the quality of the large model. 
    }
    \label{fig:tradeoffs-additional-greedy}
\end{figure}

\if 0
\begin{figure}[t]
    \centering
    \includegraphics[scale=0.5]{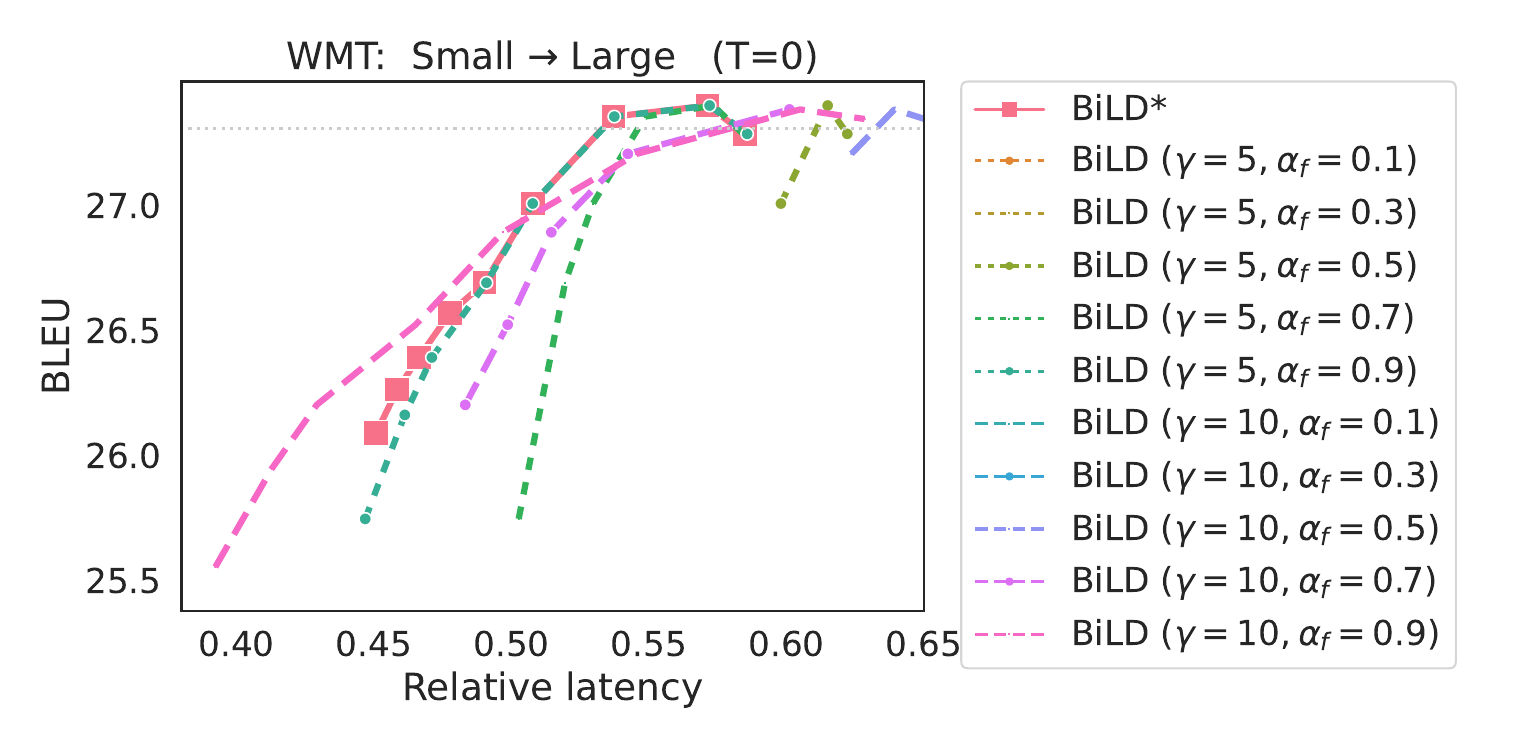}
    \includegraphics[scale=0.33]{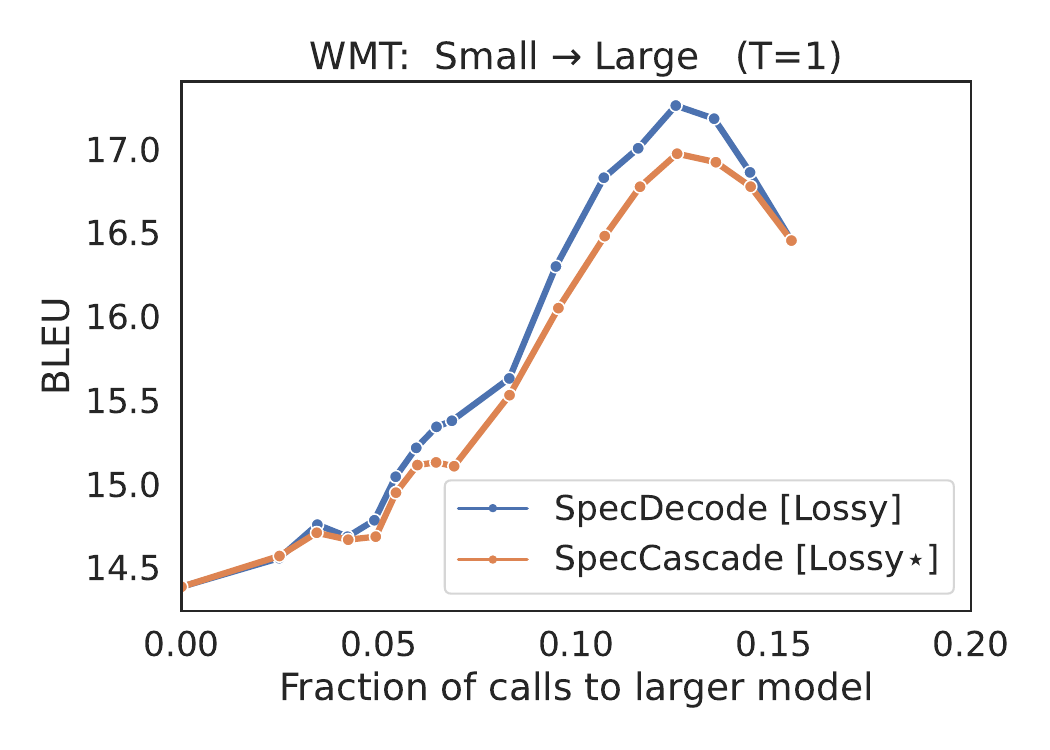}
    \includegraphics[scale=0.33]{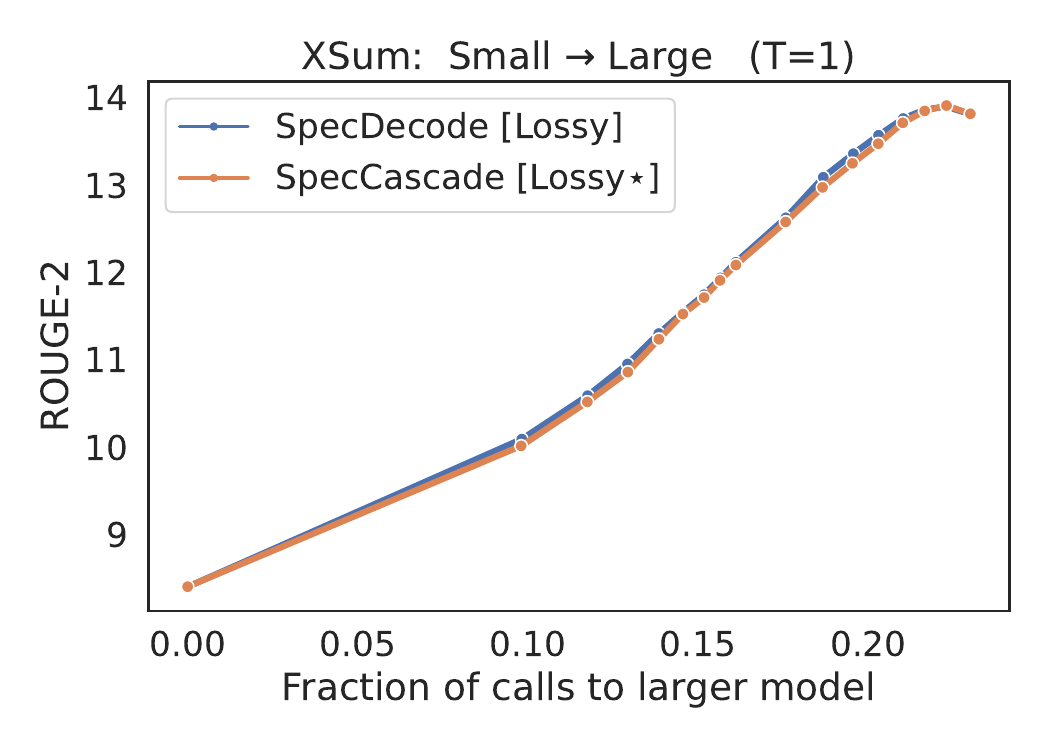}
    \vspace{-5pt}
    \caption{Top: Plots of quality vs.\ latency comparing BiLD$^*$ with the original BiLD algorithm in \cite{kim2023speculative} with varying block-size $\gamma$ and fallback confidence  threshold $\alpha_f$. Bottom: Comparison of lossy speculative decoding \citep{Tran-Thien_2023} with optimal choice of $\beta$ [{\tt Lossy}$^\star$] and with $\beta=1$ [{\tt Lossy}].
    }
    \label{fig:tradeoffs-bild-lossy}
\end{figure}
\fi
\subsection{Experimental setup and hyper-parameters}
\label{app:expt-setup}

We elaborate on our experimental setup and the hyper-parameters used.

\textbf{T5 datasets.} For the WMT English to German translation task \citep{bojar-EtAl:2014:W14-33}, we use a validation sample of size 3000 provided with the dataset. We set the maximum input length to 80 and the maximum output length to 80. For the Extreme Summarization (XSum) task \citep{narayan2018don}, we use a validation sample of size 11305, and set the  maximum input length to 1024 and the maximum output length to 64. For the CNN/Daily Mail summarization task \citep{hermann2015teaching}, we use a validation sample of size 13368, and set the  maximum input length to 2048 and the maximum output length to 128. Following \citep{zhou2024distillspec}, we use ROUGE-2 as the  evaluation metric for the summarization tasks.

{We note that \citet{kim2023speculative} report ROUGE-L metrics for CNN/DM, which generally tend to evaluate to higher values than ROUGE-2. Furthermore, most of their experimental results are with greedy decoding ($T=0$), and hence, the ROUGE-L evaluation metrics they report in their paper tend to be higher for the same T5 models when compared to our numbers for ROUGE-2 with temperature sampling.}

\textbf{Gemma datasets.} In addition to the WMT EN$\rightarrow$DE translation and the CNN/DM summarization datasets, we use the GSM8K \citep{cobbe2021training}  math reasoning dataset, the MBPP \citep{austin2021program}  Python programming dataset, and four question-answering datasets: Natural Questions \citep{Kwiatkowski:2019}, TriviaQA \citep{Joshi:2017}, WebQuestions \citep{berant2013semantic} and the Stanford Question-Answering Dataset (SQuAD) 2.0 \citep{Rajpurkar:2016}.
In each case, we sample 1000 prompts for evaluation. We employ few-shot inference, and set the maximum output length to 80 for WMT, to 128 for CNN/DM, to 320 for GSM8K and MBPP, and to 5 for all the question-answering datasets.

\textbf{Models.} We construct cascades from T5 v1.1 family of encoder-decoder models \citep{Raffel:2020}, of different sizes T5-small (77M), T5-base (250M), T5-large (800M) and T5-XL (3B).\footnote{The pre-trained checkpoints we use are available  \href{https://console.cloud.google.com/
storage/browser/t5-data/pretrained_models}{here}.} We follow the protocol in \citep{zhou2024distillspec}: we initialize with the public checkpoints, pre-train them further for 100K steps, and supervise \textbf{finetune} pre-trained models on the three respective tasks. We finetune them for a maximum of 250K steps on WMT, a maximum of 100K steps on XSum and a maximum of 200K steps on CNNDM.

We construct the Gemma cascades from \textit{instruction-tuned} decoder-only v2 models. For MBPP alone we additionally experiment with \textit{pre-trained} models.  We use a 2B drafter, and either a 9B verifier or a 27B verifier \citep{team2024gemma}.

\textbf{Evaluation.} For each dataset, we evaluate the quality metrics on the entire validation set. For the run-time analysis, we adopt the protocol followed in \cite{Leviathan:2023, zhou2024distillspec}. We randomly sample 500 examples from the validation set, and calculate the wall-clock time taken for decoding with a batch size of 1. We repeat this for three trials and report the average running time. All methods are run on the same TPUv4 device. The drafter and verifier models are run without model parallelism. 

\textbf{Hyper-parameters.} We set the block-size $\gamma$ to 5 for all methods that use speculative execution. For the token-level cascades, we allow the small model to predict for a maximum of 10 tokens (similar to \citep{kim2023speculative}), before invoking the large model. This was needed, as otherwise, the small model would predict a long sequence, and when it eventually defers to the large model, the large model is bottle-necked by the pre-filling of the long prefix accumulated by the small model. We vary the lenience parameter $\alpha$ to vary the latency and plot quality as a function of latency. We vary this parameter in the range 0 to 1 for all methods where the thresholding is on a probability metric; the  exceptions to this are the BiLD variants, for which, we use a longer range, as detailed below.

\textbf{BiLD baseline.} For the BiLD method, we adopt the same discrepancy metric $D$ as \citep{kim2023speculative} for greedy decoding:
\vspace{-15pt}
$$
D(q, p) = -\log\left(p\left(\argmax_v q(v)\right)\right),
$$
and pick the value of the threshold $\alpha$ on this metric from the range $[0, 10]$. 
For temperature sampling with a non-zero temperature, we use the following natural analogue to the above $D$:
\[
D(q, p) = -\E_{v \sim q}\left[ \log(p(v)) \right] = -\sum_{v \in \cV} q(v) \cdot \log(p(v)).
\]
In \S\ref{app:expts-bild}, we present comparisons between different implementations of this method. 

\textbf{Lossy speculative decoding.} See \S\ref{app:expts-lossy} for details.

\subsection{Additional experimental plots}
\label{app:T5-greedy}
In Figures \ref{fig:tradeoffs-additional-temp-sample} and \ref{fig:tradeoffs-additional-greedy}, we provide additional plots of quality vs.\ latency for different inference strategies under temperature sampling ($T=1$) and greedy decoding respectively.

As noted in \S\ref{app:greedy}, with greedy decoding, the {\tt OPT} deferral rule  coincides with the {\tt Diff} deferral rule. When temperature $T \rightarrow 0$, $D_{\textup{\tv}}(\tilde{p}_t, \tilde{q}_t) = 1$ whenever $\argmax_v p_t(v) \ne \argmax_v q_t(v)$, and is zero otherwise. In this case, running Algorithm \ref{alg:spec-cascade} with  $\tilde{r}_{\rm\tt OPT}$ as the deferral rule (and $\tilde{q}_t$ as the drafter) is equivalent to running it with  $\hat{r}_{\rm\tt Diff}$ in \eqref{eq:seq-opt-def-01-plugin} as the deferral rule. In other words, for greedy decoding, the optimal deferral rules for a speculative cascade coincides with that for a sequential cascade.

Note that under greedy decoding, all methods yield better quality metrics  compared to their performance under temperature sampling.  

\subsection{Comparing speculative deferral rules under different temperatures}
\label{app:expts-temperature}
In Figure \ref{fig:tradeoffs-T5-vary-temp}, we present latency-quality trade-off plots for T5 cascades under temperature sampling with different temperatures. We compare lossy speculative decoding with two speculative cascade deferral rules: {\tt OPT} rule in \eqref{eq:seq-opt-def-01-plugin} and the {\tt Token}-specific rule in \eqref{eq:sample-dep-01-plugin-v3}. We find that the gap between {\tt OPT} and the {\tt Token}-specific rule diminishes as the temperature decreases. 

The reason the {\tt Token}-specific rule fares better than  {\tt OPT} is because the latter compute their deferral decisions based on which of $q_t(\cdot)$ and $p_t(\cdot)$ is more \textit{peaked}; this can be a disadvantage when the sampled token is not be close the distribution mode, which is likely to happen with higher temperatures. With lower temperatures, however, the sampled token is likely to be close the distribution mode, and as a result,  the advantage that the {\tt Token}-specific rule has over {\tt OPT} diminishes.

\begin{figure}[t]
    \centering
    \includegraphics[scale=0.27]{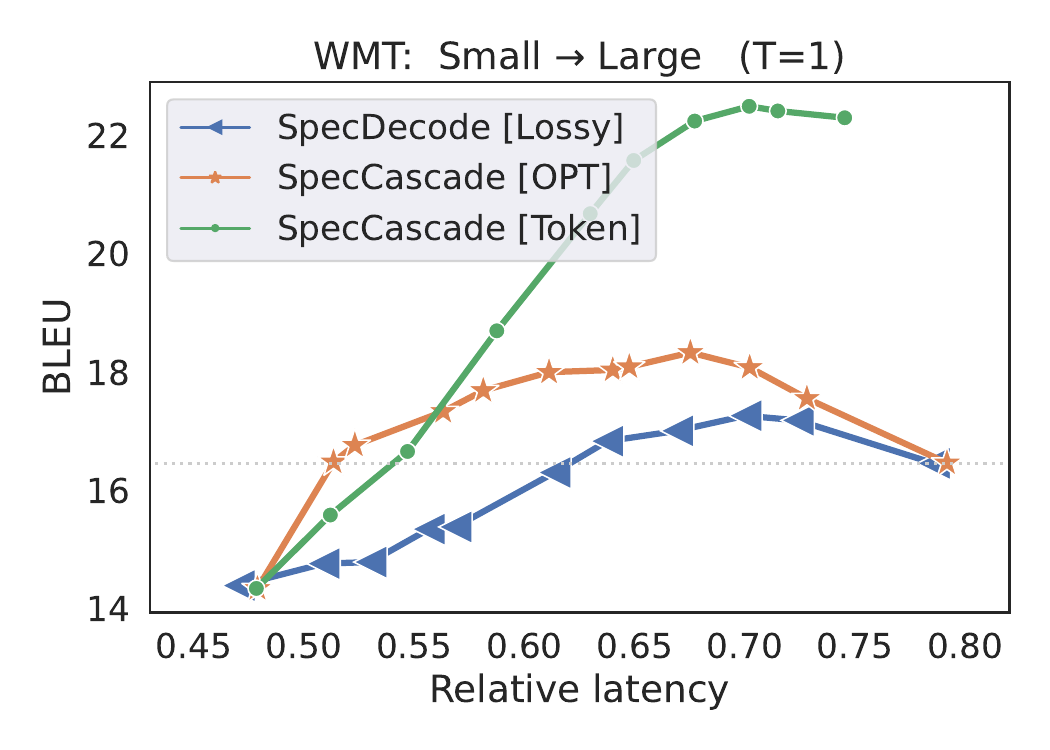}
    \includegraphics[scale=0.27]{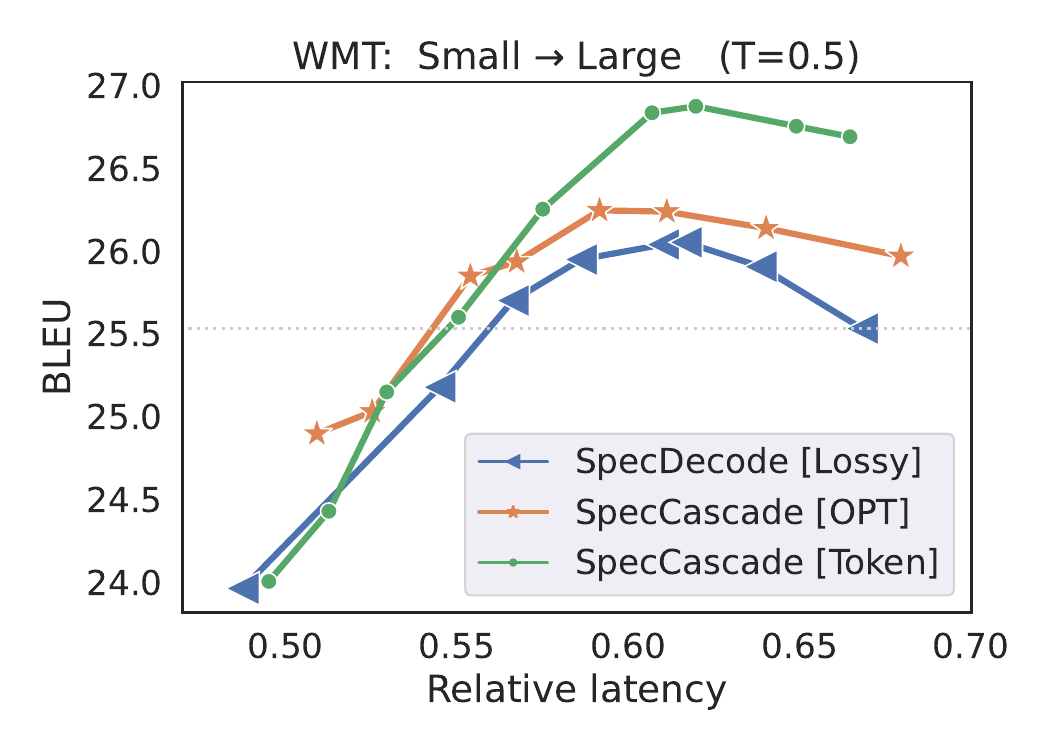}
    \includegraphics[scale=0.27]{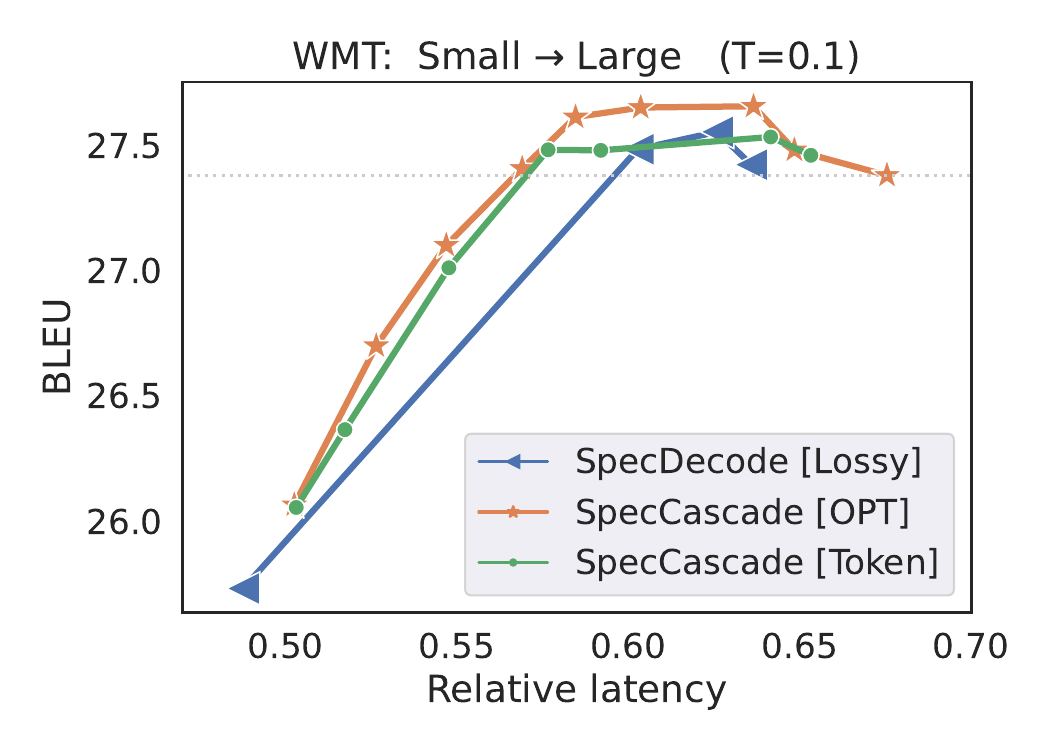}
    \\
    \includegraphics[scale=0.27]{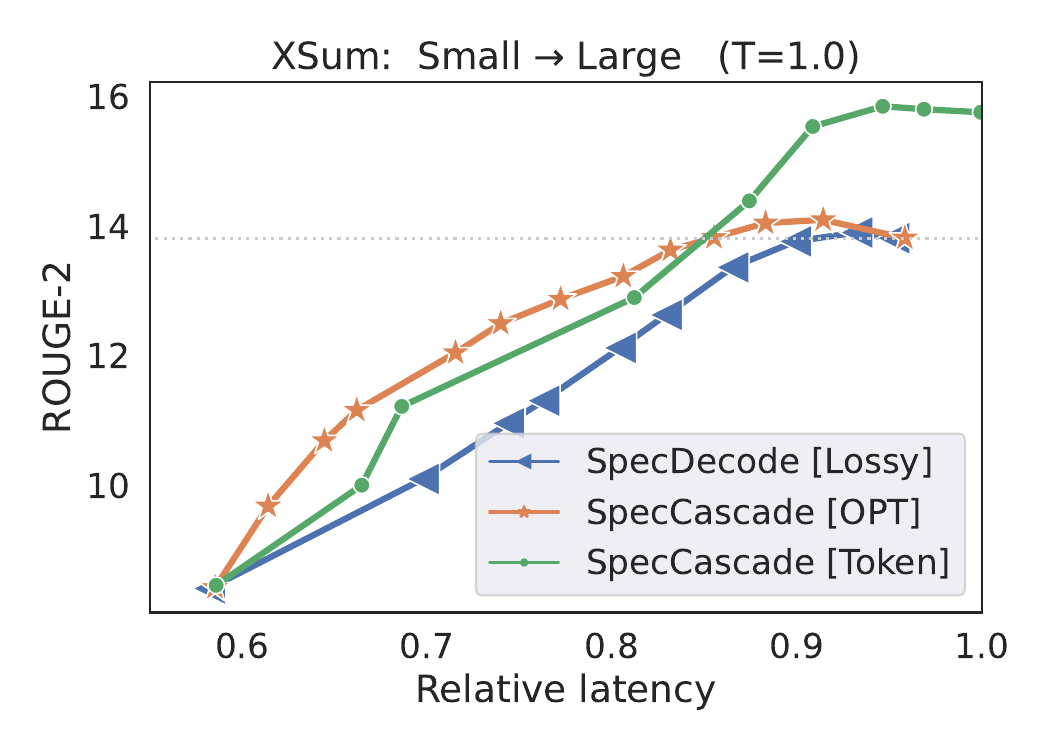}
    \includegraphics[scale=0.27]{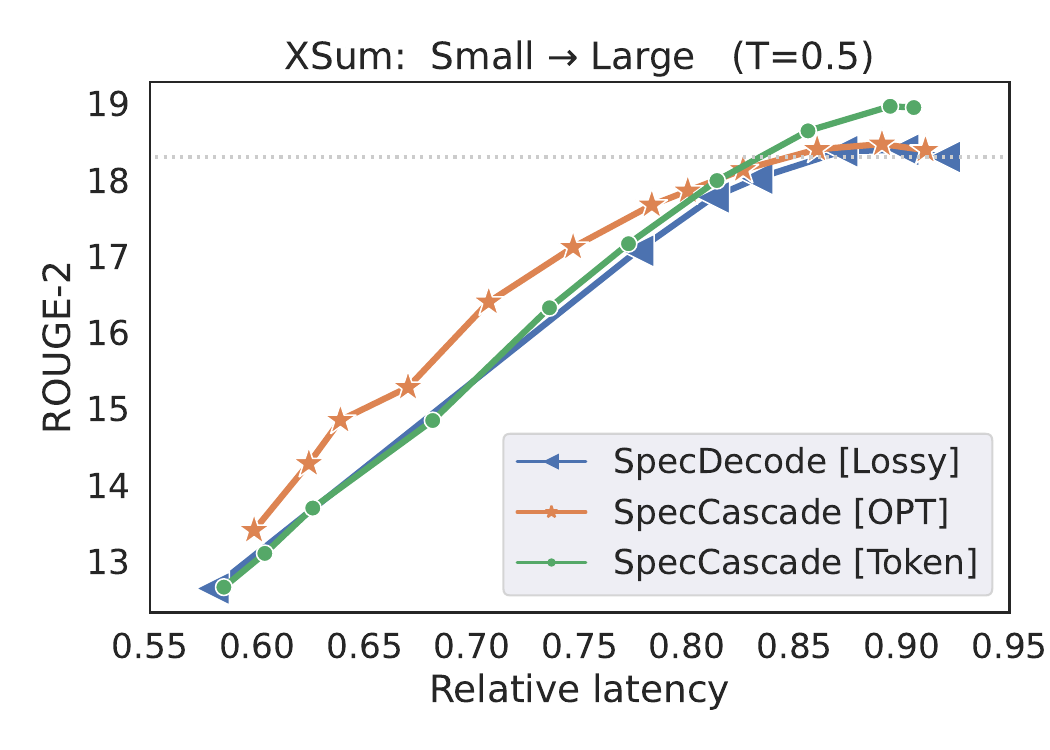}
    \includegraphics[scale=0.27]{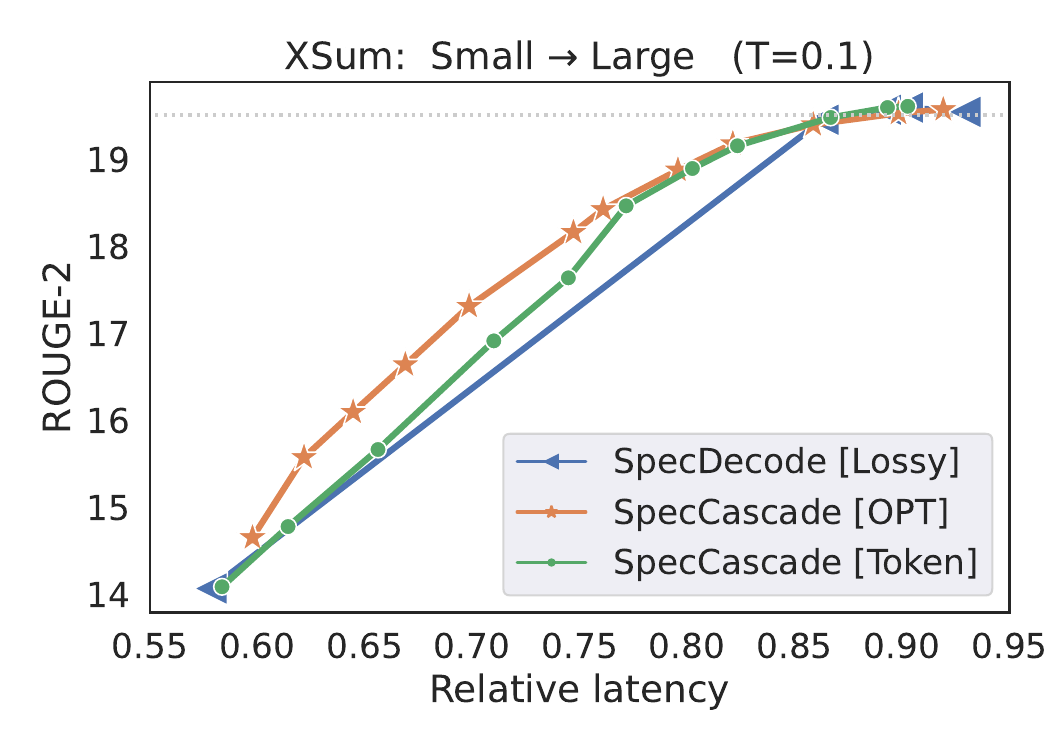}
    \\
    \includegraphics[scale=0.27]{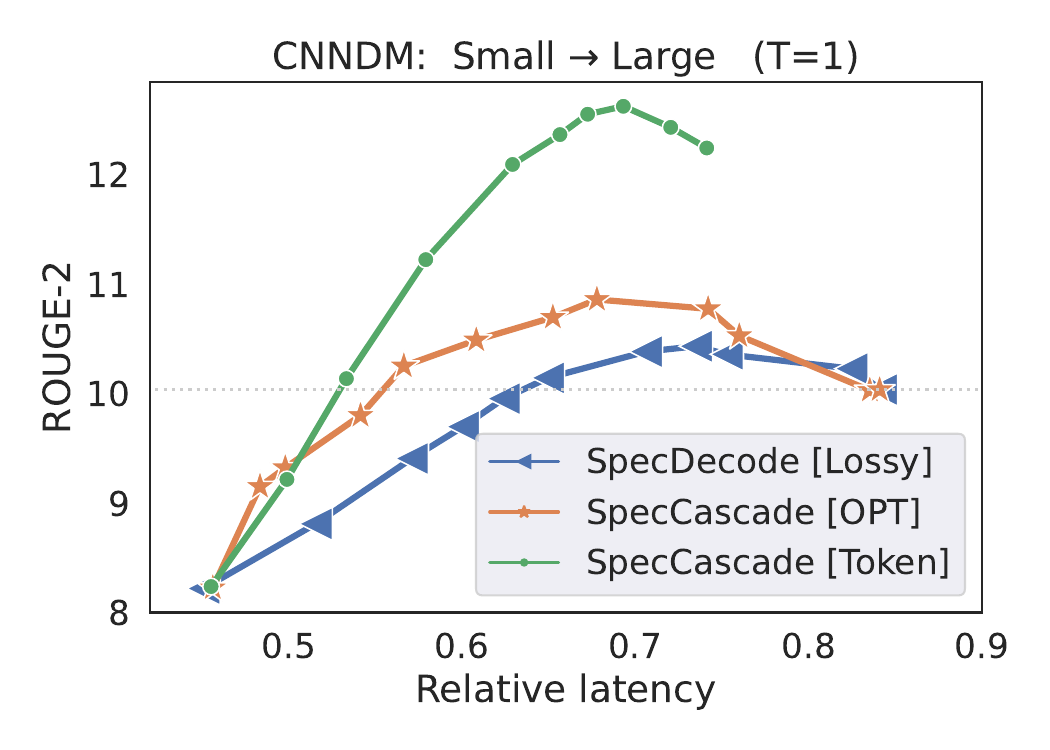}
    \includegraphics[scale=0.27]{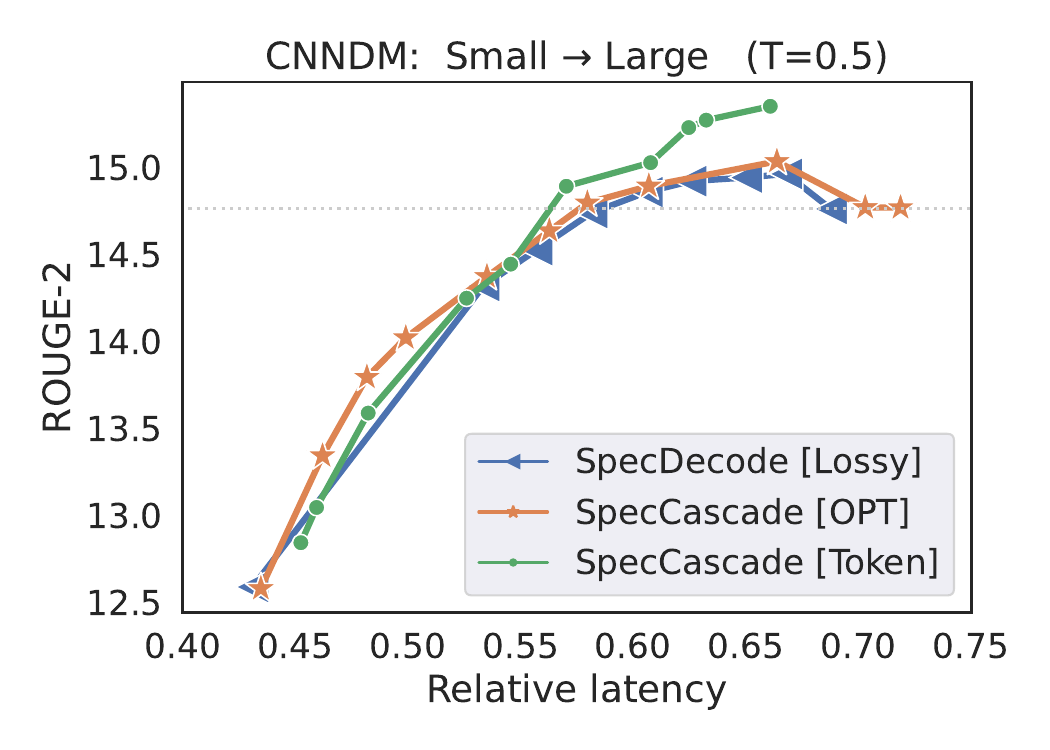}
    \includegraphics[scale=0.27]{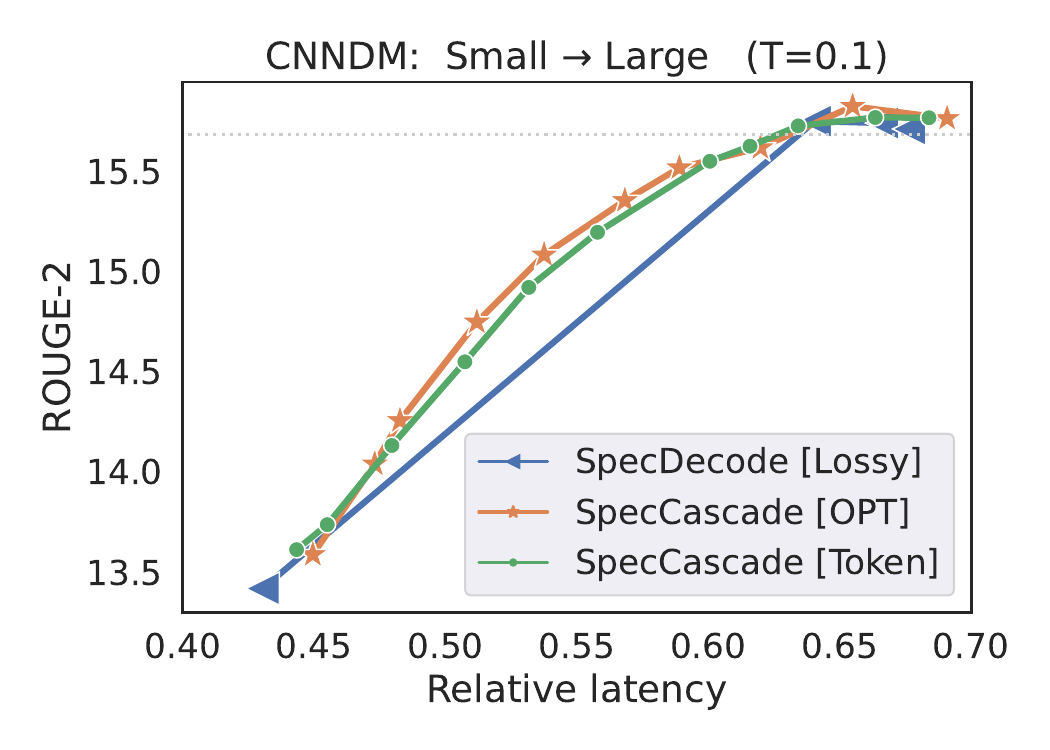}
    \vspace{-5pt}
    \caption{Plots of quality vs.\ latency for T5 models with \textbf{varying temperatures}. Each method interleaves T5-small with T-large. The $x$-axis tracks the latency \emph{relative} to that of  calling  the large model on all inputs.
    The horizontal dotted line denotes the quality of the large model. 
    }
    \label{fig:tradeoffs-T5-vary-temp}
\end{figure}

\todoakm{add plots. Hari: done!}

\subsection{Comparing speculative deferral rules under different block sizes $\gamma$}
\label{app:expts-gamma}
In Figure \ref{fig:tradeoffs-T5-vary-gamma}, we present latency-quality trade-off plots for T5 cascades under different block sizes $\gamma$. In each case, we find that the proposed speculative cascading techniques  outperform lossy speculative decoding across different latency values. Furthermore, higher values of $\gamma$ are seen to yield a wider range of trade-offs, with lower quality operating points shifting to the left, and better quality operating points shifting to the right.   For example, with XSum, {\tt SpecDecode [Lossy]} with $\gamma=3$ matches the small model's quality at ~0.64 relative latency, and matches the large model's quality at ~0.85 relative latency; with $\gamma=7$, it matches the small model's quality at an even lower latency, but practically provides no speed-up when matching the larger model's quality. The reason a larger block size can hurt speed-up at the higher quality regime is because it can result in frequent rollbacks, thus defeating the purpose of using speculative execution.

\begin{figure}[t]
    \centering
    \hspace{-5pt}
    \includegraphics[scale=0.27]{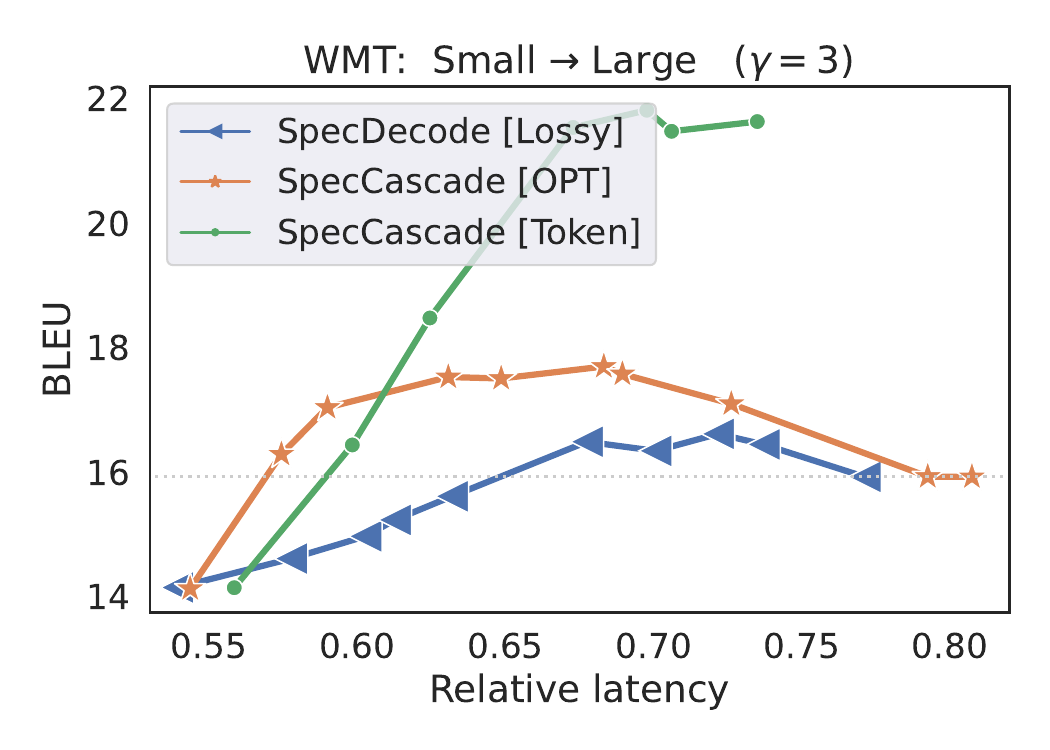}
    \hspace{-10pt}
    \includegraphics[scale=0.27]{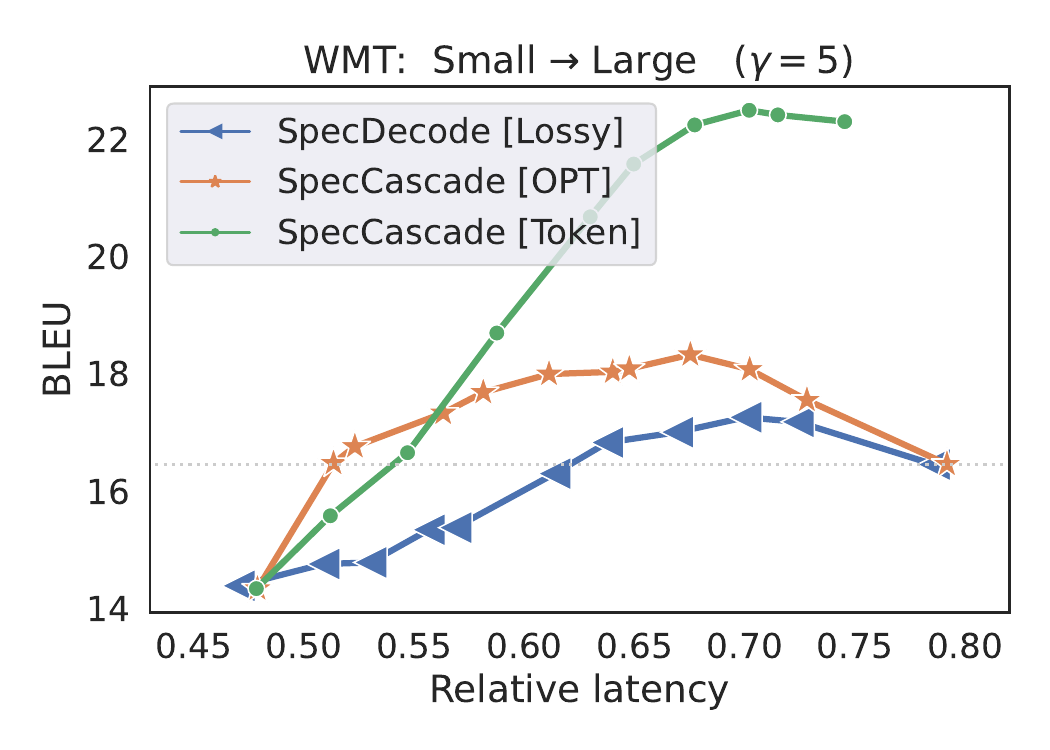}
    \hspace{-10pt}
    \includegraphics[scale=0.27]{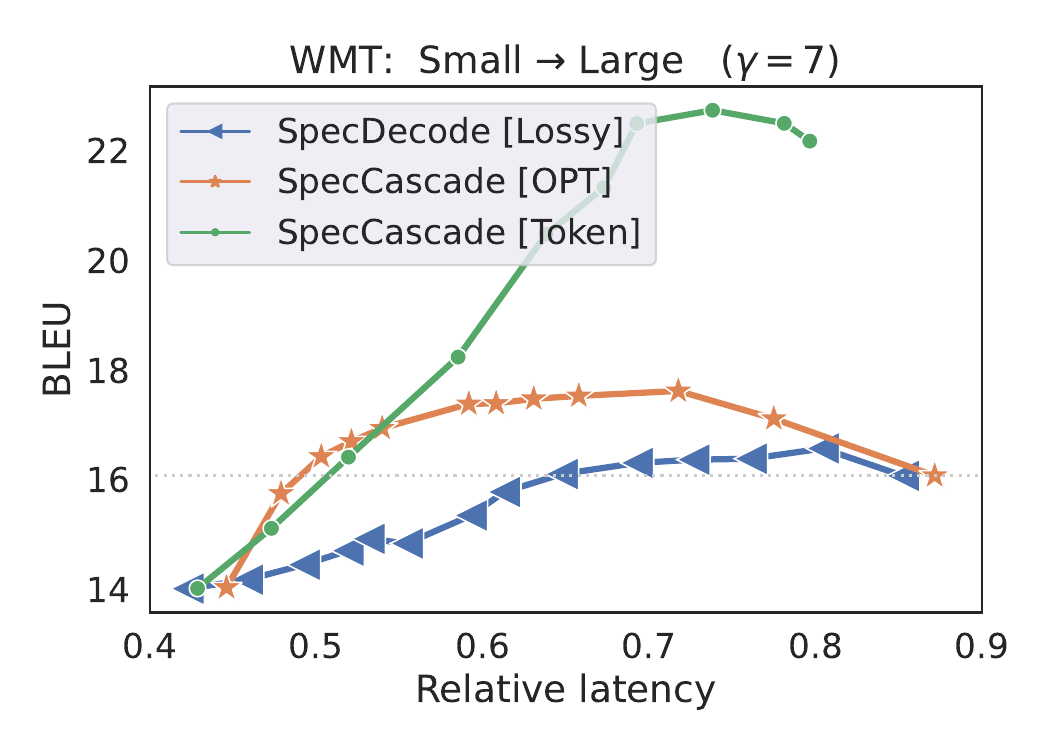}
    \\
    \hspace{-5pt}
    \includegraphics[scale=0.27]{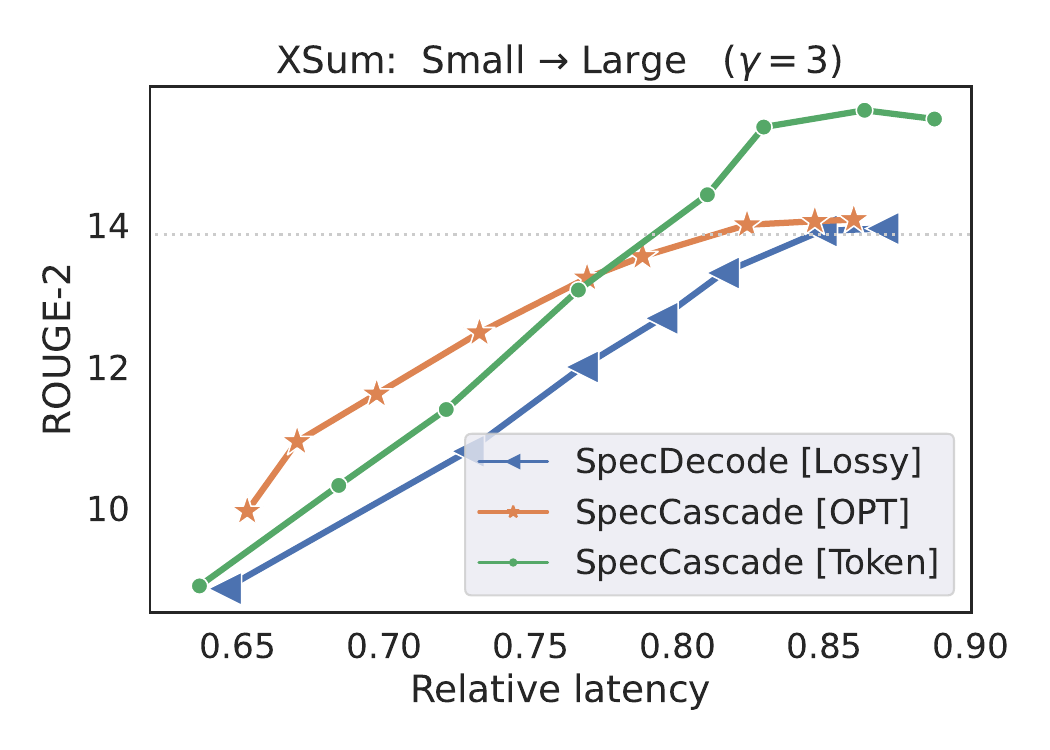}
    \hspace{-10pt}
    \includegraphics[scale=0.27]{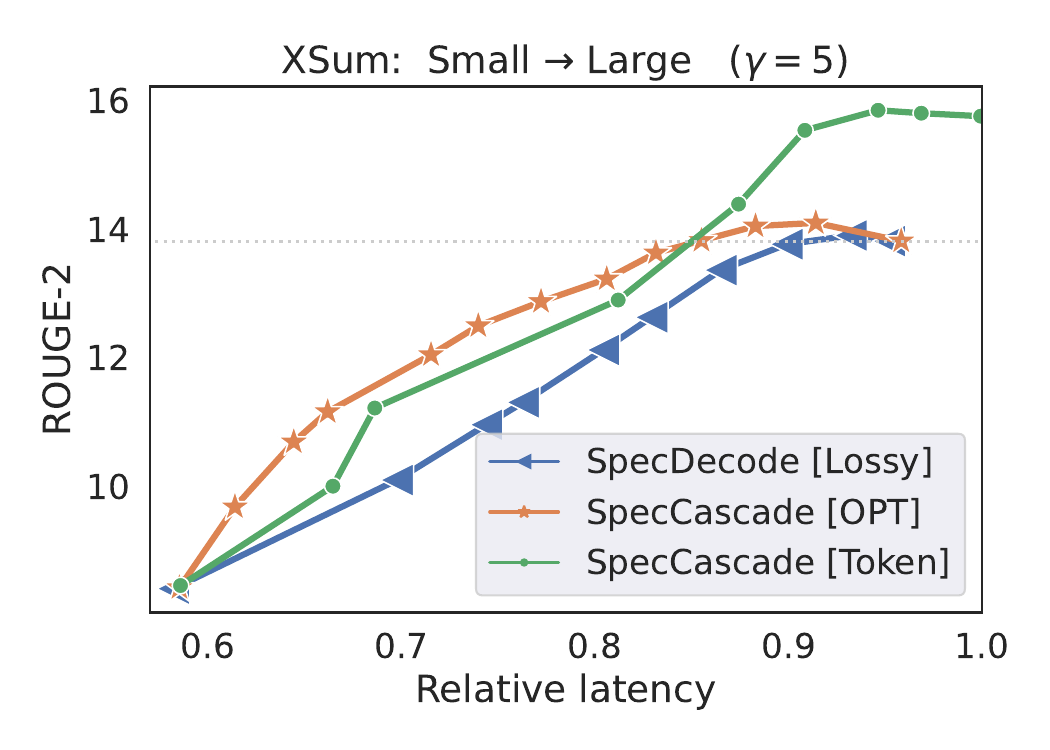}
    \hspace{-10pt}
    \includegraphics[scale=0.27]{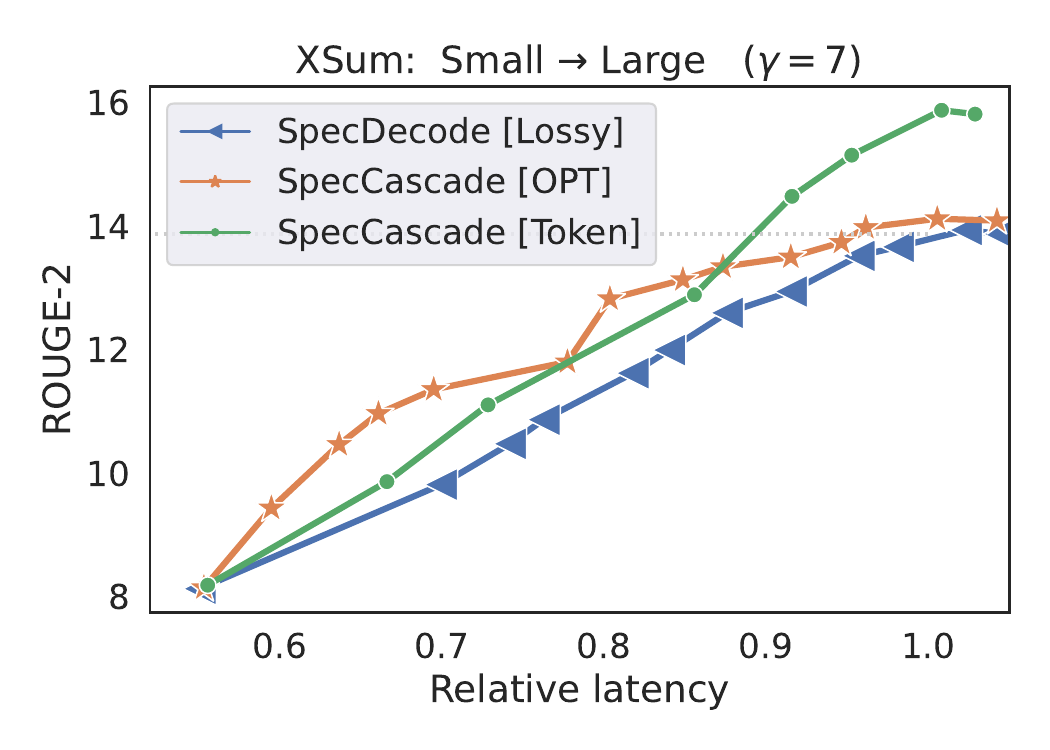}
    \\
    \hspace{-5pt}
    \includegraphics[scale=0.27]{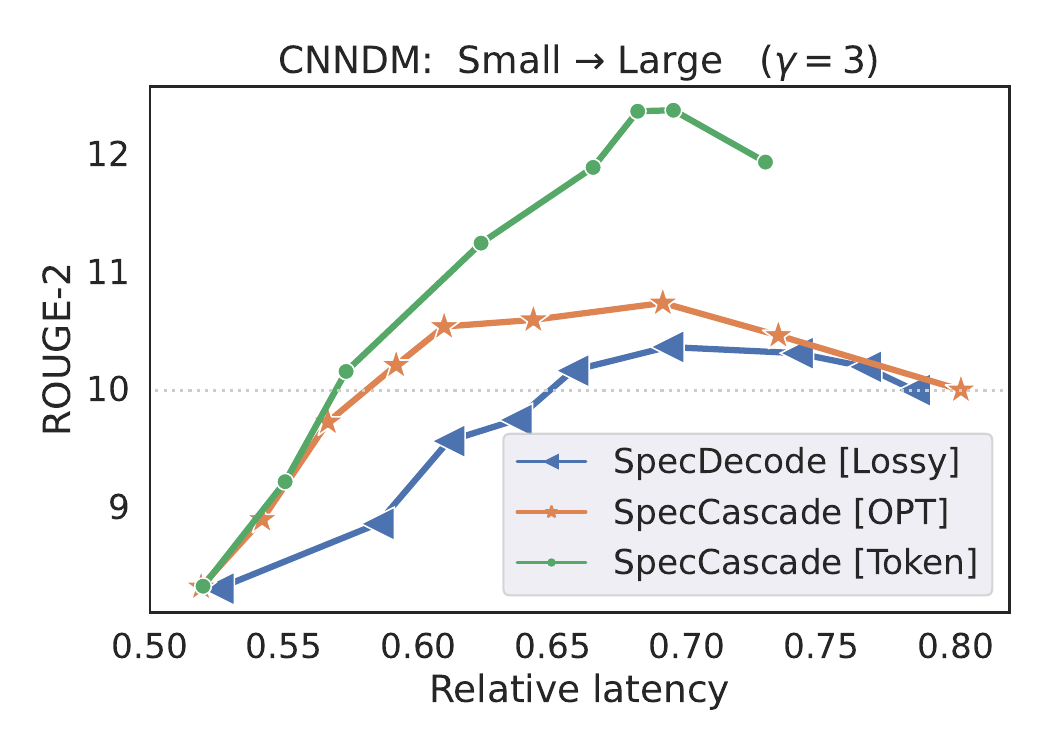}
    \hspace{-10pt}
    \includegraphics[scale=0.27]{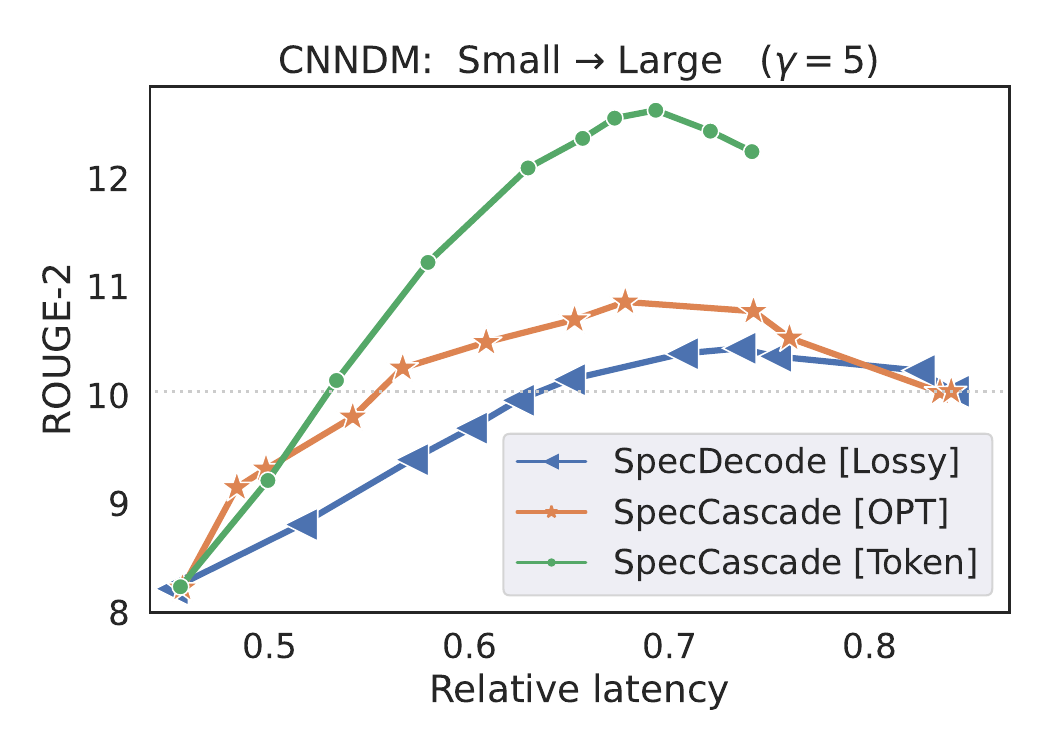}
    \hspace{-10pt}
    \includegraphics[scale=0.27]{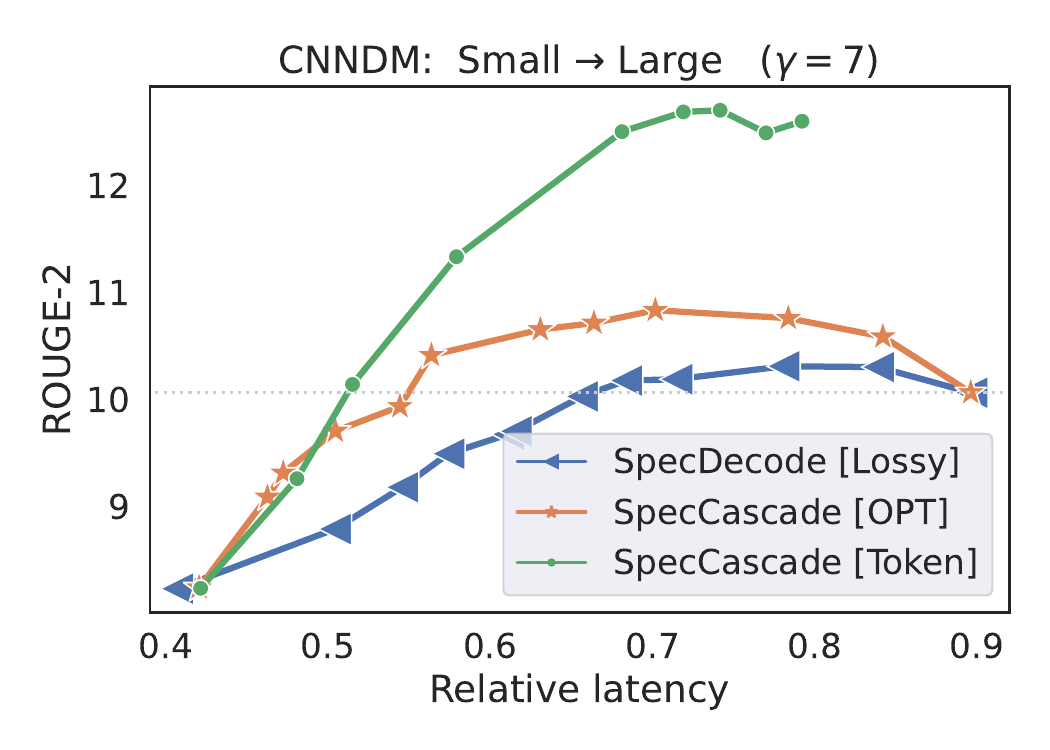}
    \vspace{-5pt}
    \caption{Plots of quality vs.\ latency for T5 models with \textbf{with varying block sizes $\gamma$}. Each method interleaves T5-small with T-large. The $x$-axis tracks the latency \emph{relative} to that of  calling  the large model on all inputs.
    The horizontal dotted line denotes the quality of the large model. 
    }
    \label{fig:tradeoffs-T5-vary-gamma}
\end{figure}

\begin{figure}[t]
    \centering
    \includegraphics[scale=0.5]{NeurIPS/plots/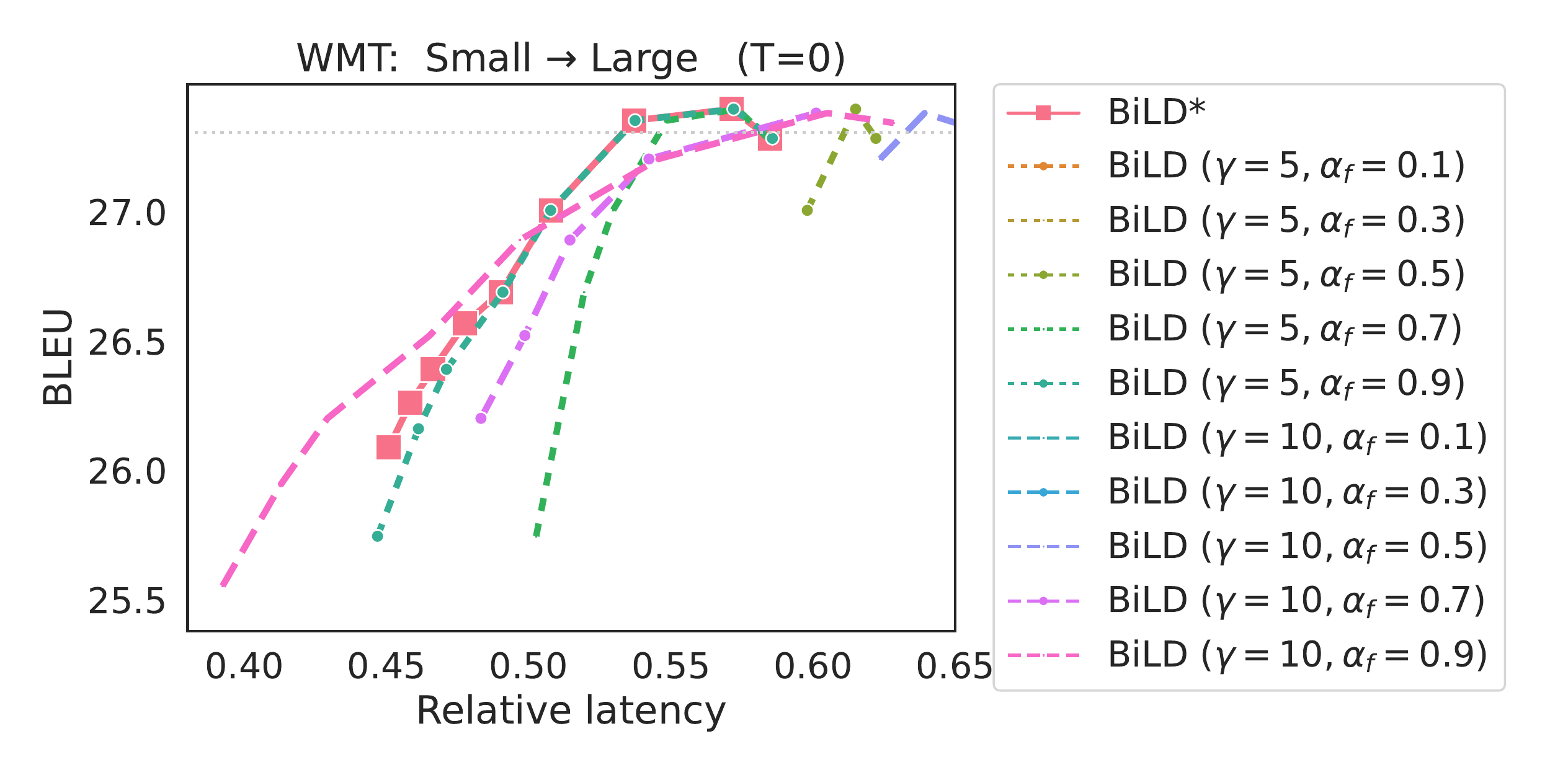}
    \includegraphics[scale=0.33]{NeurIPS/plots/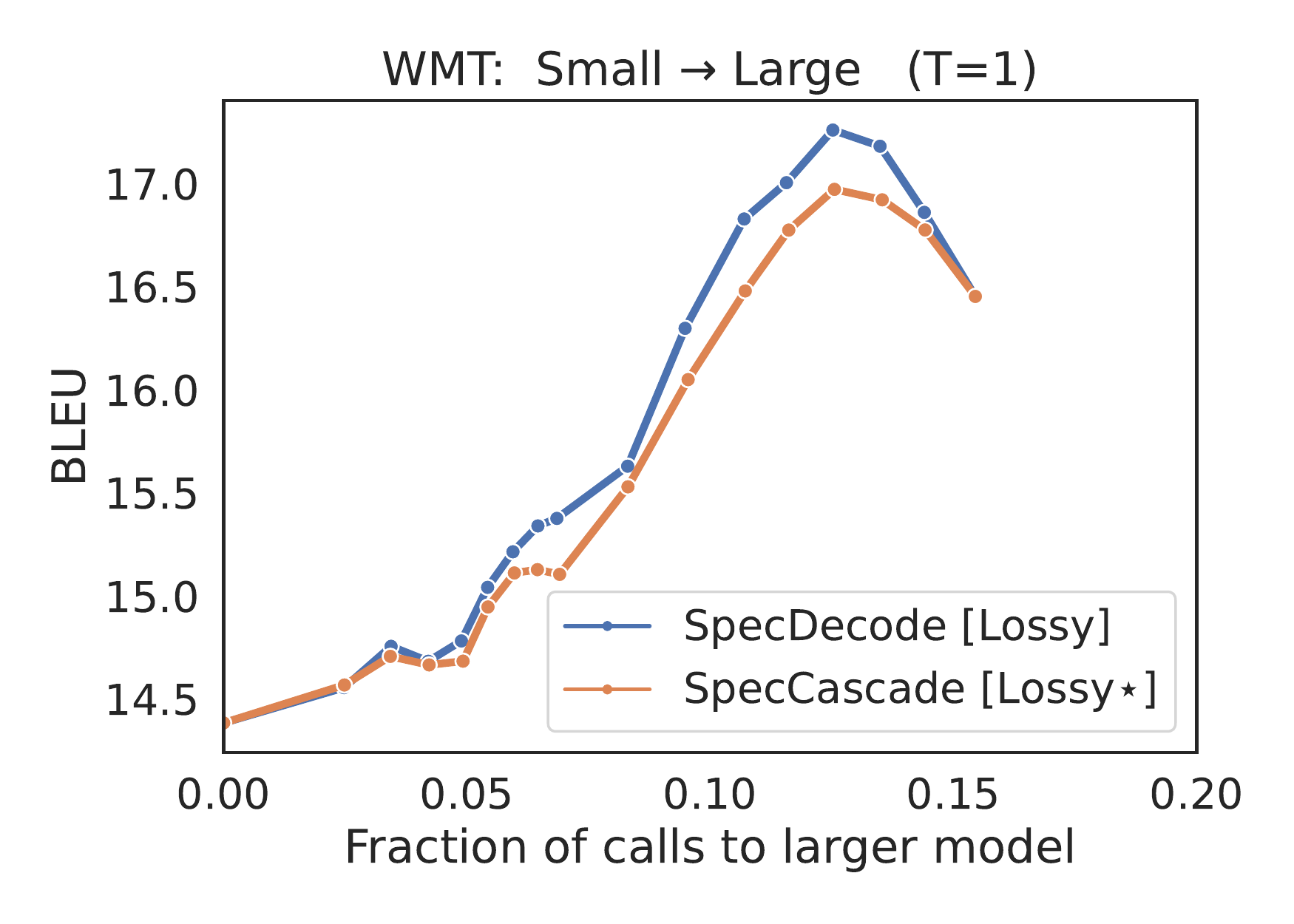}
    \includegraphics[scale=0.33]{NeurIPS/plots/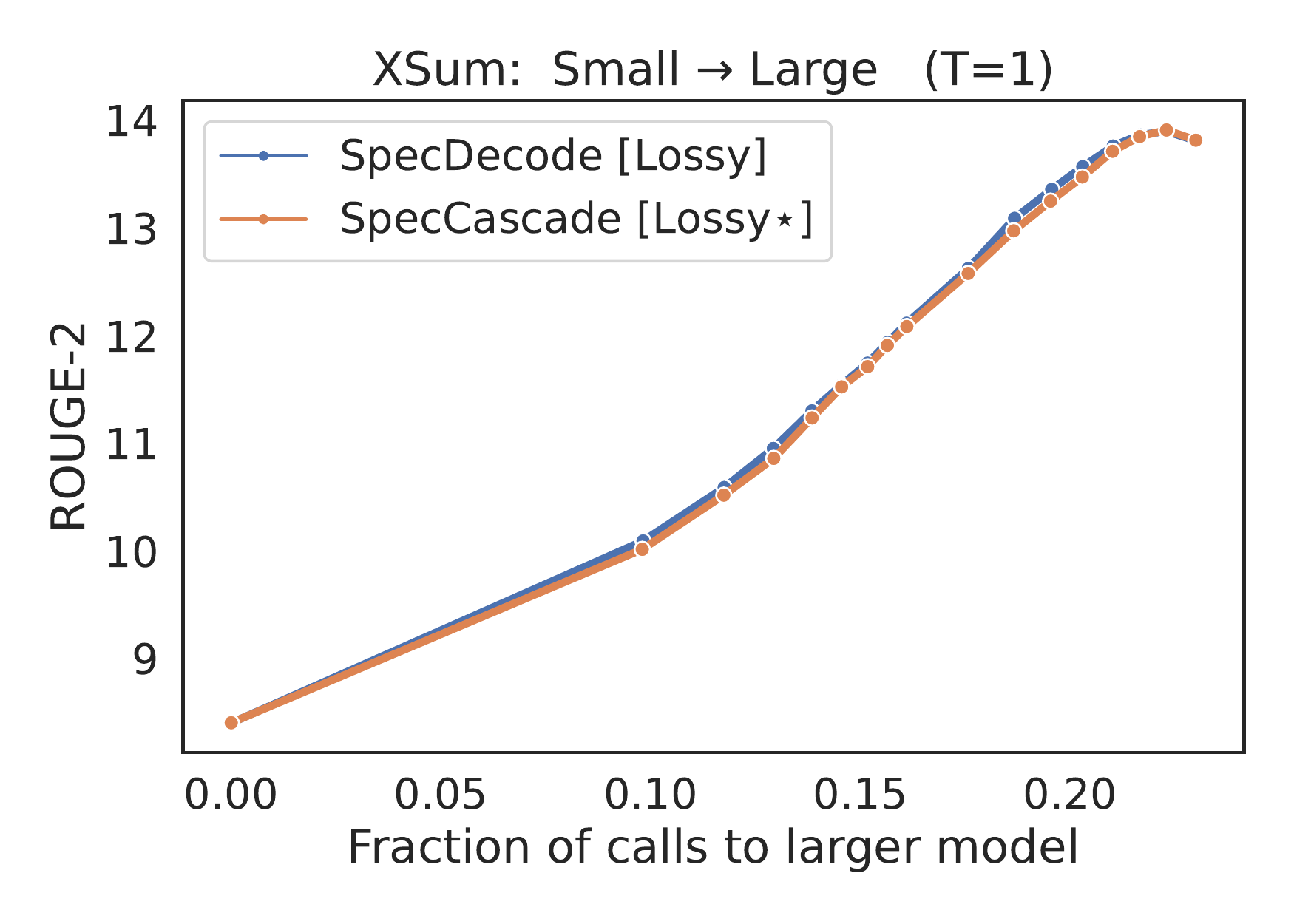}
    \vspace{-5pt}
    \caption{Top: Plots of quality vs.\ latency \textbf{comparing BiLD$^*$ with the original BiLD algorithm in \cite{kim2023speculative}} with varying maximum draft window size $\gamma$ and fallback confidence  threshold $\alpha_f$. Bottom: Comparison of lossy speculative decoding  with $\beta=1$ [{\tt Lossy}] and $\beta$ tuned using the procedure in \citep{Tran-Thien_2023} [{\tt Lossy}$^\star$].
    }
    \label{fig:tradeoffs-bild-lossy}
\end{figure}

\subsection{Big Little Decoder (BiLD) variants}
\label{app:expts-bild}
In \S\ref{sec:expts}, we compared against a version of the Big Little Decoder method \citep{kim2023speculative} that applied Algorithm \ref{alg:gen-speed} to the target distribution $\mathbb{T}_{\textup{BiLD}}$ the authors seek to mimic (\S\ref{sec:related}). We now show that this version performs similarly to the original BiLD algorithm in \citep{kim2023speculative}.

A key difference to the original algorithm in \citep{kim2023speculative} is the use of the fallback phase, where the drafter is run until its maximum predicted probability $\max_v q(v) < 1 - \alpha_f$, for a threshold $\alpha_f \in [0,1]$ (or until a maximum block size of 10 is reached), and the use of a deterministic rollback policy where the verifier rejects a draft token whenever $D(q, p) > \alpha$.
In our implementation, we adopt the speculative sampling algorithm from \citep{Leviathan:2023}: we do not have the fallback policy and replace the determinisic rollback policy with the rejection sampling in Algorithm \ref{alg:gen-speed}.  

Figure \ref{fig:tradeoffs-bild-lossy} (top) compares the original version of BiLD with the version we use in \S\ref{sec:expts}. We interleave between a T5-small and T5-large model on WMT, using greedy decoding ($T=0$) for inference. As prescribed by the authors \citep{kim2023speculative}, we use the following discrepancy metric for greedy decoding:
\[
D(q, p) = \log p\left(\argmax_v q(v)\right).
\]
We compare our implementation ({\tt BiLD}$^*$), where we set the block size 5 (same as our proposed speculative cascading approaches) with the original {\tt BiLD} for different choices of maximum block size $\gamma$ and different fallback thresholds $\alpha_f$. For both methods, we vary the threshold $\alpha$ on $D(q, p)$ to vary the latency and plot the resulting BLEU score. 

A  higher fallback threshold $\alpha_f$ results in larger draft generation windows; this gives an advantage in the low latency regime, where most of the draft tokens are accepted. As a result, {\tt BiLD} [$\gamma=10, \alpha=0.9$] yields the lowest latencies, but also yields lower quality. A low fallback threshold results in very small draft generation windows, and consequently, in higher latencies. This is why {\tt BiLD} [$\gamma=5, \alpha=0.1$] is the slowest but yields high quality metrics.

Our implementation {\tt BiLD}$^*$ is seen to perform comparable to the best parameter choices for the original {\tt BiLD} algorithm in Figure \ref{fig:tradeoffs-bild-lossy}.

\textbf{Note:} It is worth noting that while we view $\bT_{\rm BiLD}$ as the target distribution that algorithm in \citep{kim2023speculative} seeks to mimic, the presence of the fallback phase could mean that on some inputs a output response is generated without the verification (or rollback) phase being invoked. In such cases, the output will be a sample from the drafter even if it turns out that it contains tokens for which  $D(q_t,p_t) > \alpha$. 

\subsection{Lossy speculative decoding variants}
\label{app:expts-lossy}
In \S\ref{sec:expts}, we compared against the lossy speculative decoding \cite{Tran-Thien_2023, zhou2024distillspec} described in \S\ref{sec:prelims}, with the parameter $\beta$ set to 1. We now present results for this method with  $\beta$ tuned according to the procedure in \cite{Tran-Thien_2023}, and show that choosing $\beta=1$ fares at least as well as tuning $\beta$.

The goal in \cite{Tran-Thien_2023} is to choose $\alpha$ and $\beta$ so as to maximize the acceptance rate for the draft token, while ensuring that the KL divergence between the resulting target distribution and $p$ is within an allowable limit $R$. The authors prescribe specifying $R$, and for each prefix, tuning $\alpha$ and $\beta$ to solve the resulting constrained optimization problem. To be consistent with the rest of our experimental setup, we vary $\alpha$ to vary the draft acceptance rate (note that each choice of $\alpha$ corresponds to a particular KL divergence to $p$), and tune $\beta \geq 1 - \alpha$ to satisfy the following condition outlined in \cite{Tran-Thien_2023}:
\[
\sum_v \max\left\{0, q(v) - \frac{p(v)}{1 - \alpha}\right\} = \sum_v \max\left\{0, \frac{ p(v) }{\beta} - q(v)\right\}
\]

We pick $\beta$ using a grid-search over 1000 values between $\alpha$ and 10. Since this tuning procedure, in turn, can add to the method's latency,
for a fair comparison, we analyze quality as a function of the fraction of calls to the large model. In Figure \ref{fig:tradeoffs-bild-lossy} (bottom), we plot these trade-off curves for loss speculative decoding with $\beta = 1$ ({\tt Lossy}) and for speculative decoding with $\beta$ tuned using the above procedure ({\tt Lossy$^{\star}$}). We compare performances on WMT and XSum, and in each case, interleave a T5-small model with a T5-large model. 

In both cases, setting $\beta = 1$ provides trade-offs comparable to or better than using a tuned value of $\beta$. The reason using a tuned value of $\beta$ fares worse than setting $\beta = 1$ might be because we are measuring quality in terms of BLEU or ROUGE-2, which is different from the KL divergence to $p$ objective that the tuning procedure in \cite{Tran-Thien_2023} seeks to optimize.

\subsection{Token-specific speculative cascades}
\label{app:expts-token-specific}
In Figure \ref{fig:T5-small-large-token-specific}, we present latency-quality trade-off plots for cascades constructed from a T5 small and a T5 large model. We include in these comparisons, all three token-specific deferral rules in equations \ref{eq:sample-dep-01-plugin-v1}--\ref{eq:sample-dep-01-plugin-v3}. 
 In Figure \ref{fig:gemma-2B-27B-token-specific}, we present trade-off plots for cascades constructed from Gemma 2B and Gemma 27B models with all three token-specific rules, and 
in Figure \ref{fig:gemma-2B-9B-token-specific}, we include similar plots for cascades constructed from Gemma 2B and Gemma 9B models.
We note that the trends with the 2B $\rightarrow$ 9B are similar to those seen with the 2B $\rightarrow$ 27B cascades. 

With the T5 models, the results are mixed, with the V1 and V2 variants sometime surpassing the V3 variant (which is the variant we included in the main experiments results in \S\ref{sec:expts}) Interestingly, with the Gemma models, the V3 variant is seen to  outperform the others for most rejection rates, with the exception of the 2B$\rightarrow$27B cascade on SQuAD 2.0, where the V2 variant is better. 

The reason the V3 variant outperforms V1 and V2 on the Gemma models could be due to the fact that it uses the larger model's distribution $p_t(\cdot)$ to measure confidence for both the drafter and verifier (see LHS and RHS in \eqref{eq:sample-dep-01-plugin-v1}). We expect this to be particularly helpful when there is a larger gap in sizes between $q$ and $p$, and the larger model's distribution is better aligned with the ground-truth distribution compared to the smaller model. Furthermore, as  noted in \S\ref{sec:sample-dep}, the multiplicative form of the rule results in a target distribution that has an intuitive form: it seeks to mimic $q_t(\cdot)$ on the top-$\alpha$ ranked tokens by $p_t(\cdot)$ and uses a re-scaled version of $p_t(\cdot)$ for the other tokens. 

\begin{figure}[t]
    \centering
    \includegraphics[scale=0.39]{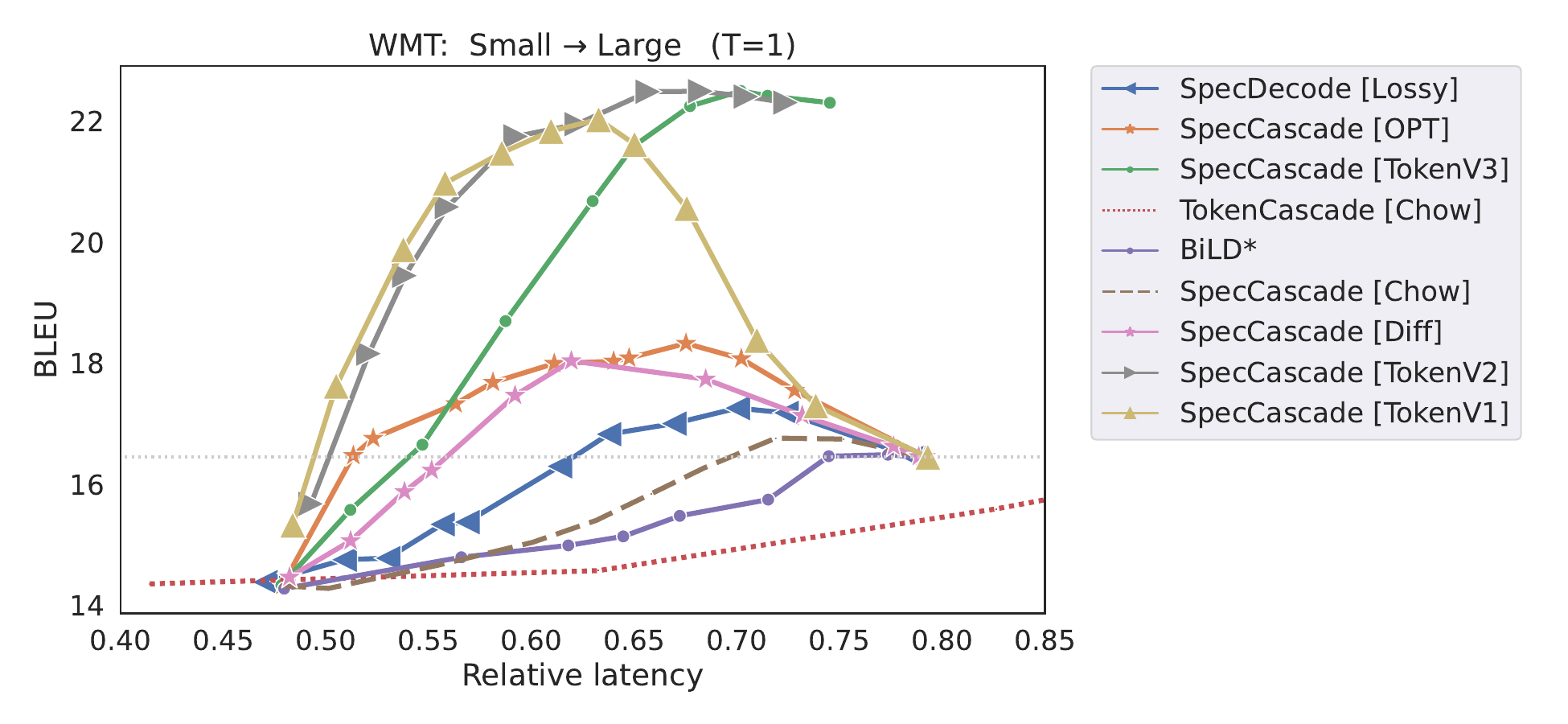}
    \includegraphics[scale=0.39]{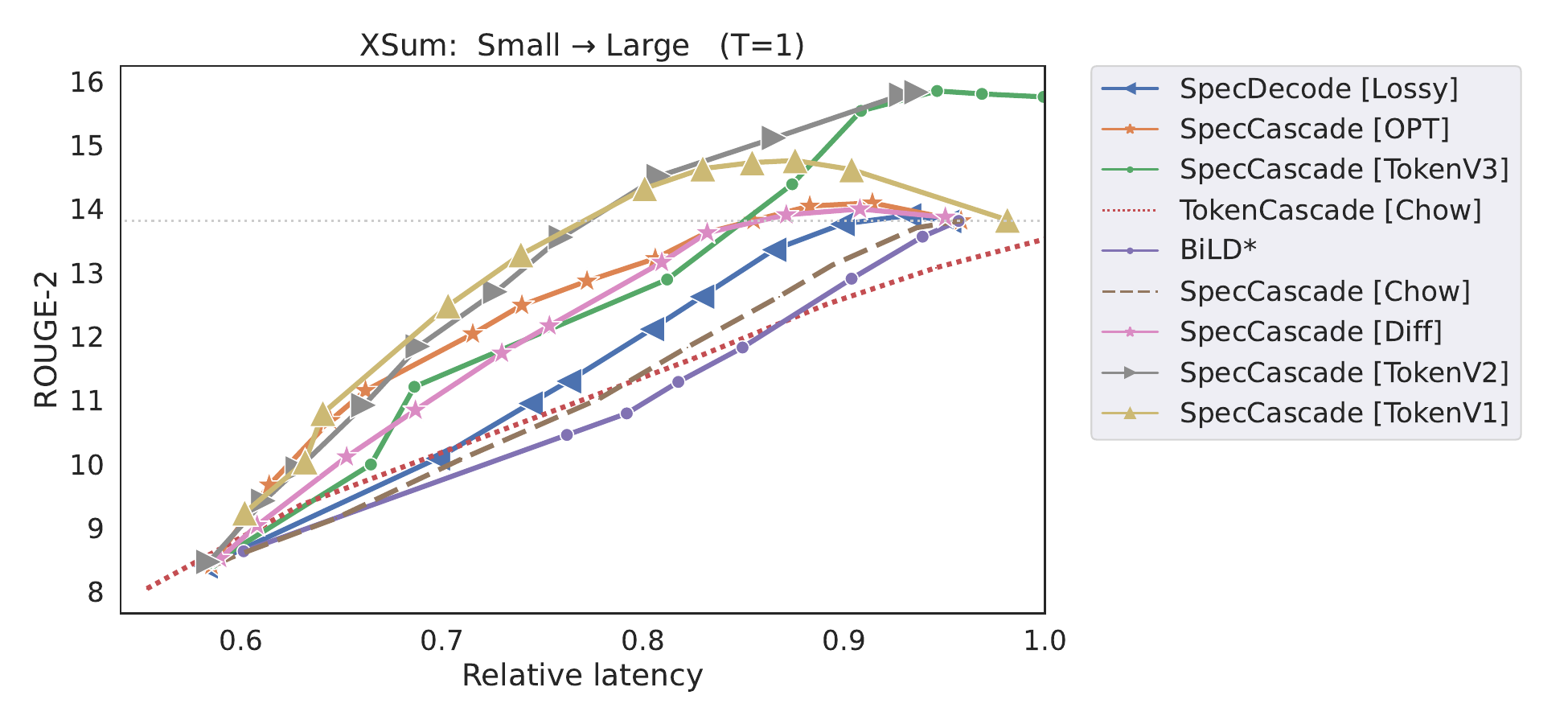}
    \includegraphics[scale=0.39]{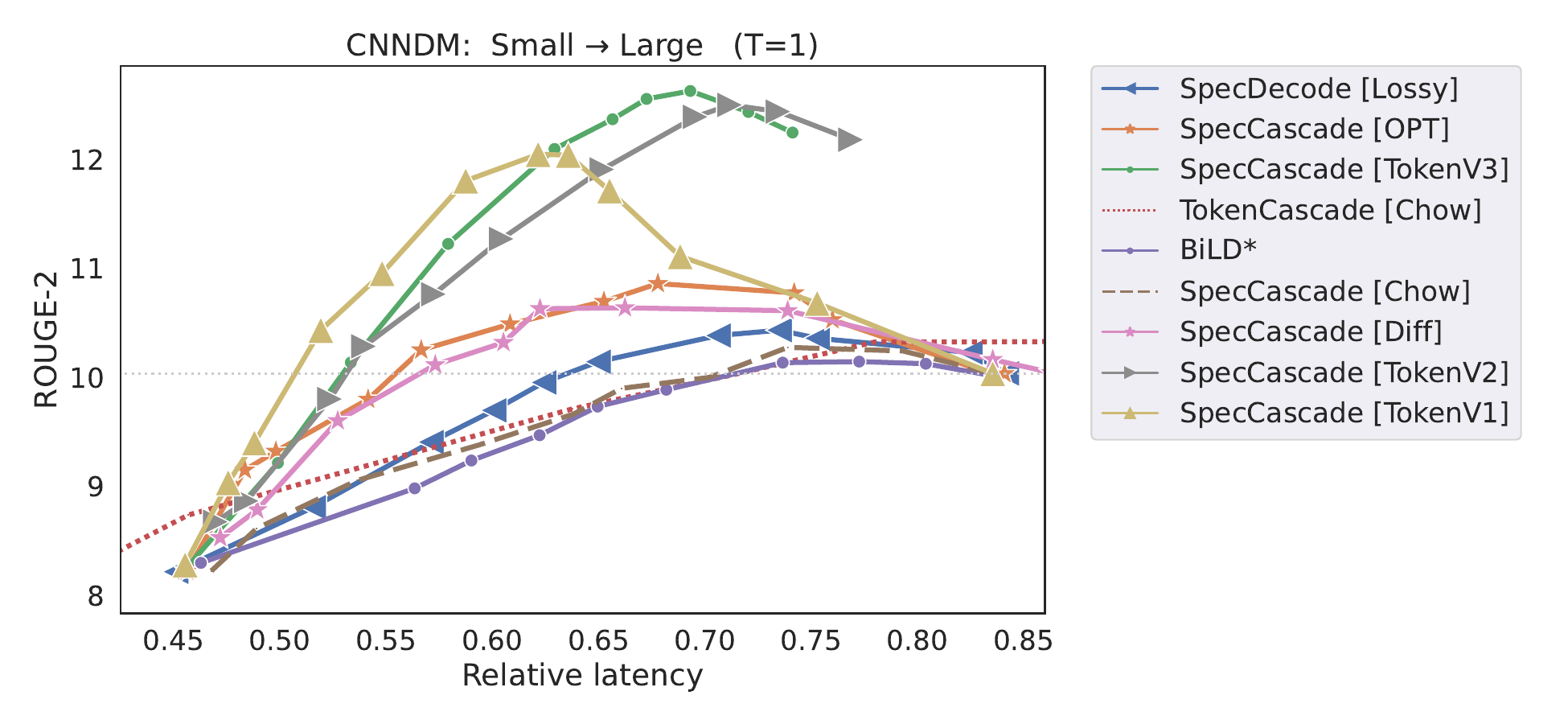}
\caption{Plots of quality vs.\ latency for  \textbf{T5 models with all three token-specific speculative cascade deferral rules} in equations \ref{eq:sample-dep-01-plugin-v1}--\ref{eq:sample-dep-01-plugin-v3}. Each method interleaves a T5 small and a T5 large model. The $x$-axis tracks the latency \emph{relative} to that of  calling  the large model on all inputs.
    The horizontal dotted line denotes the quality of the large model. }
    \label{fig:T5-small-large-token-specific}
\end{figure}

\begin{figure}[t]
    \centering
    \includegraphics[scale=0.25]{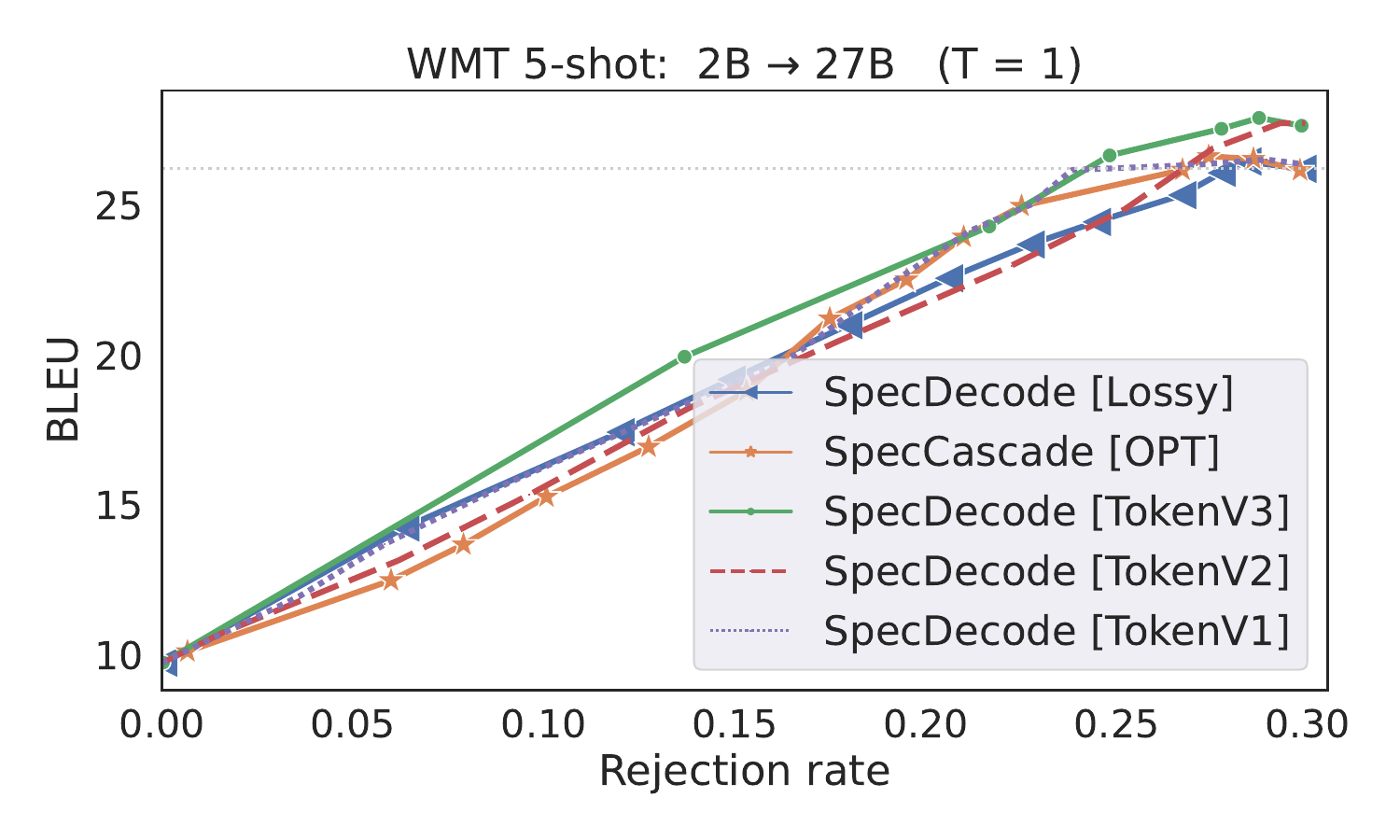}
    \includegraphics[scale=0.25]{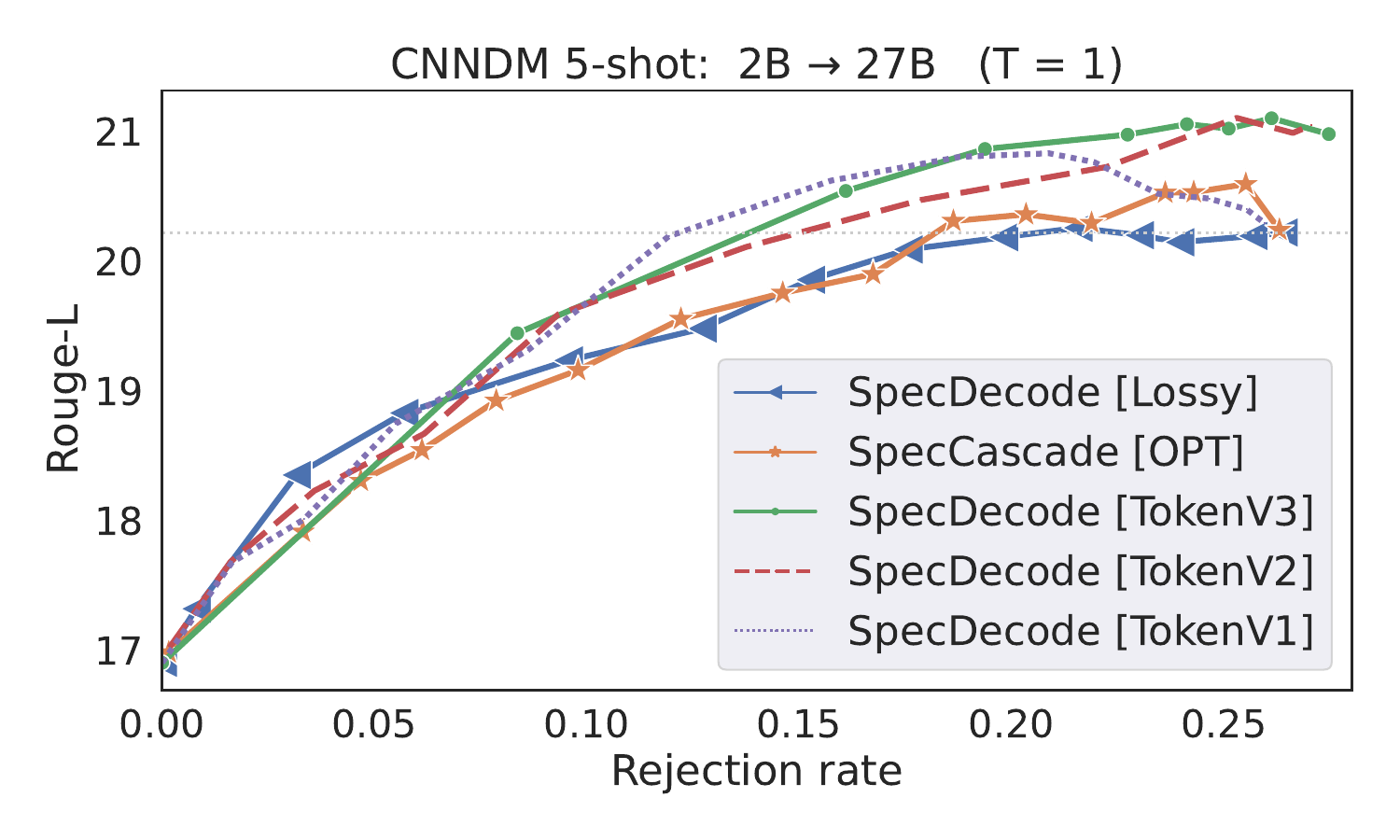}
    \includegraphics[scale=0.25]{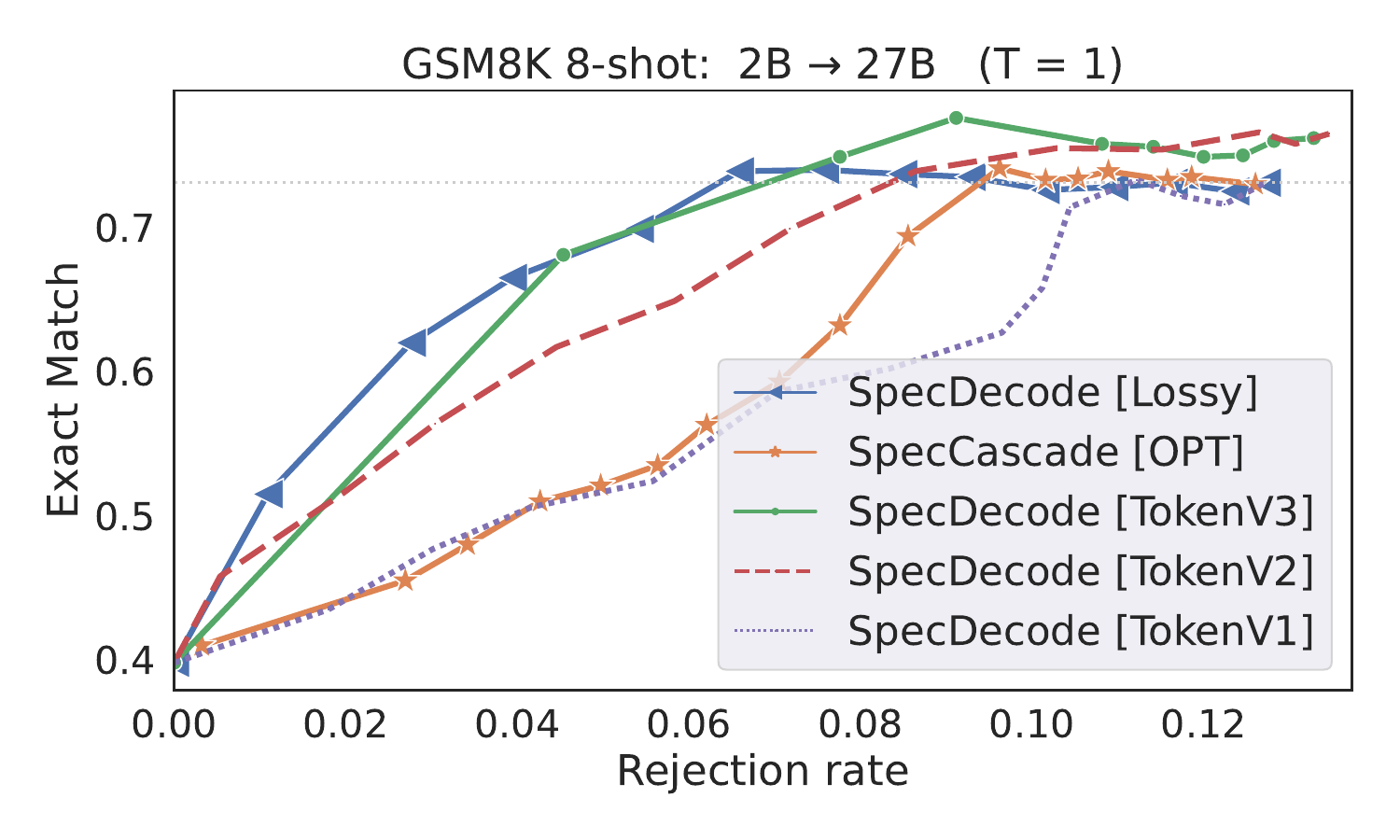}
    \includegraphics[scale=0.25]{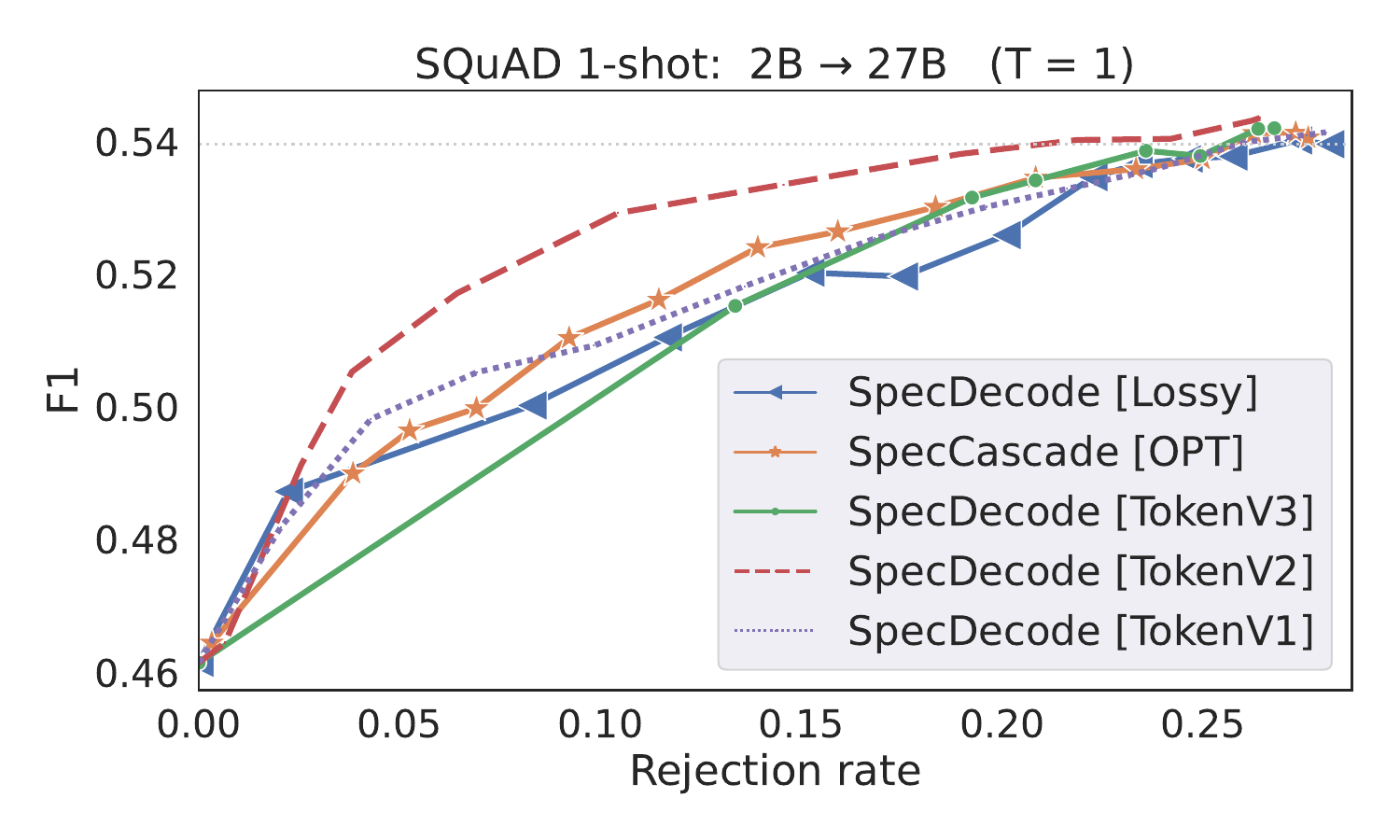}
    \includegraphics[scale=0.25]{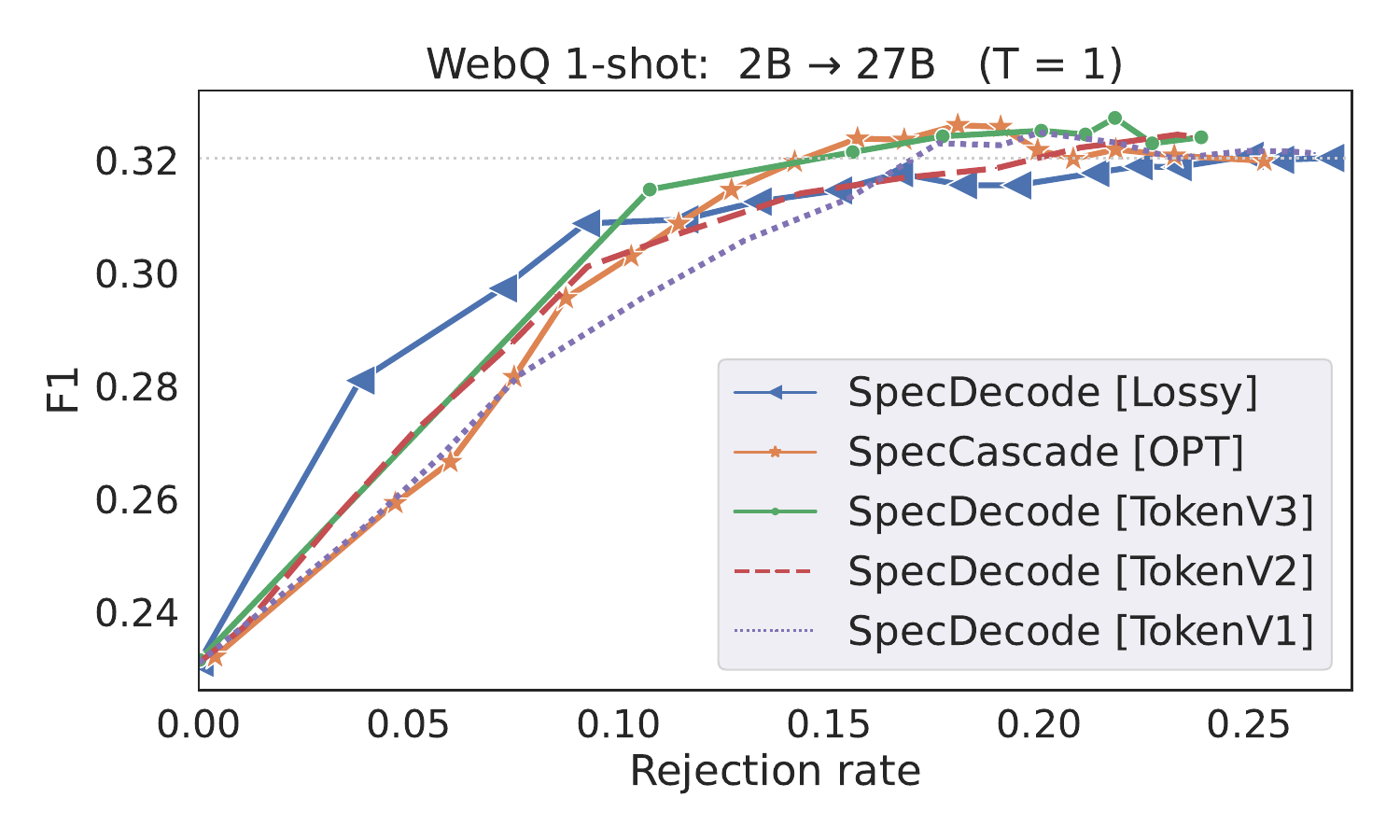}
    \includegraphics[scale=0.25]{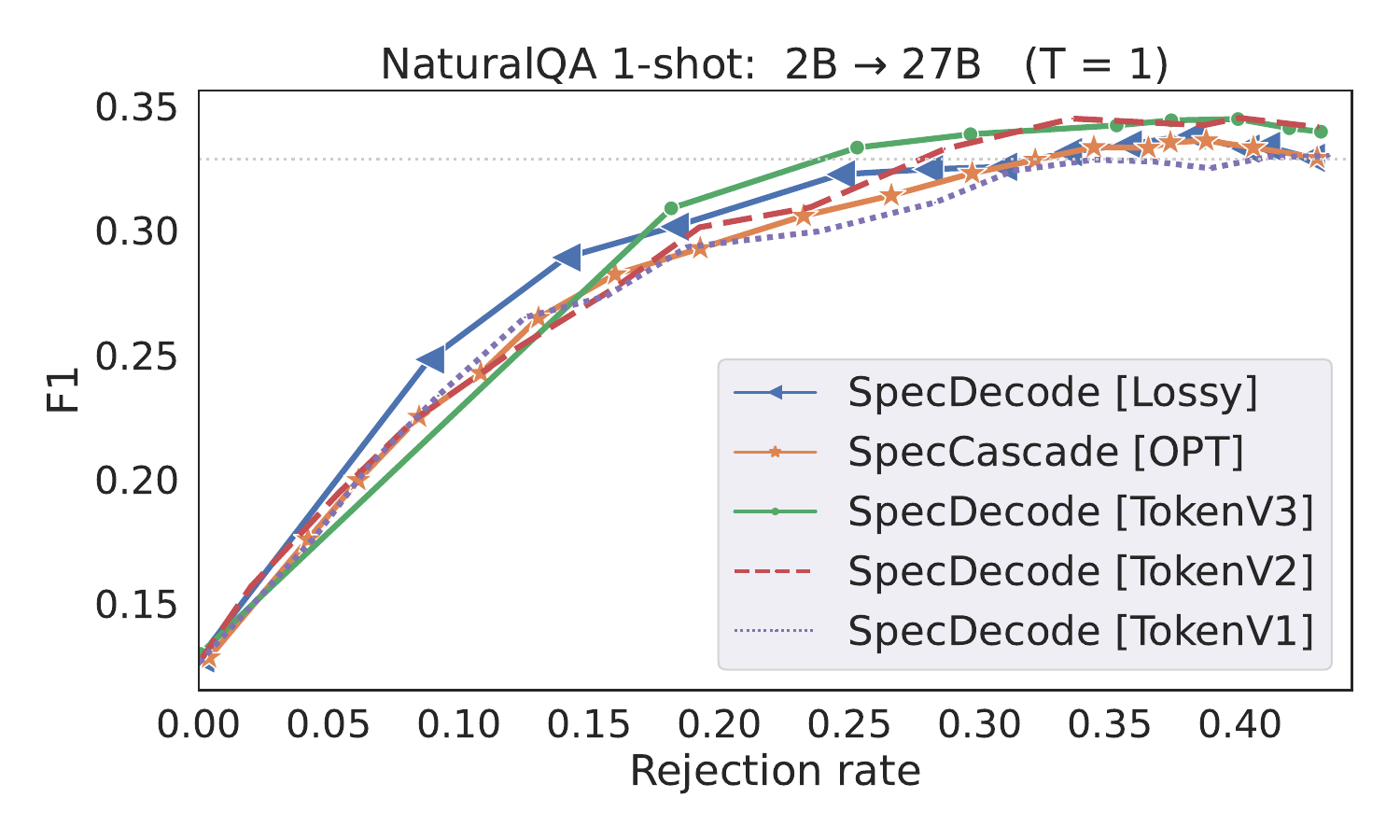}
    \includegraphics[scale=0.25]{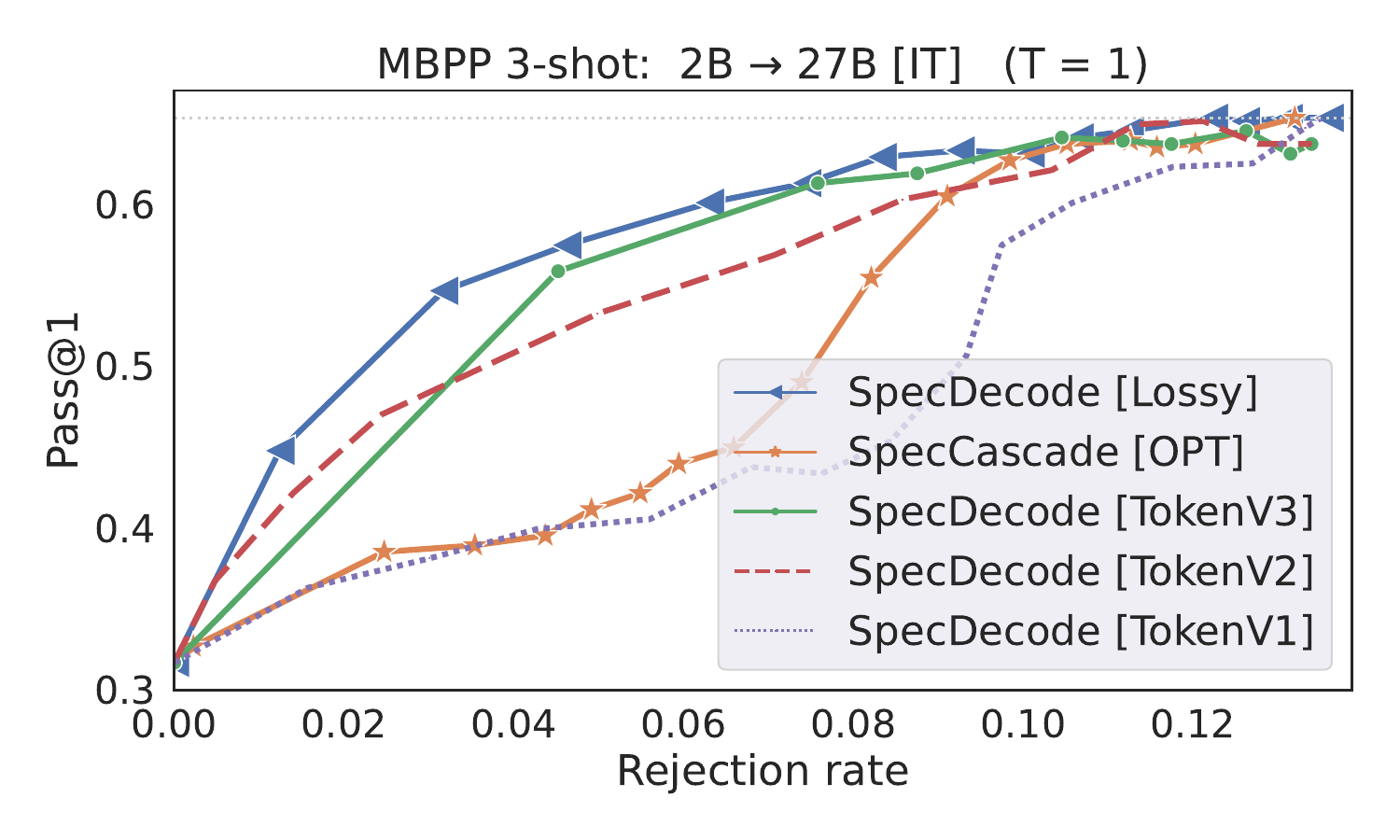}
    \includegraphics[scale=0.25]{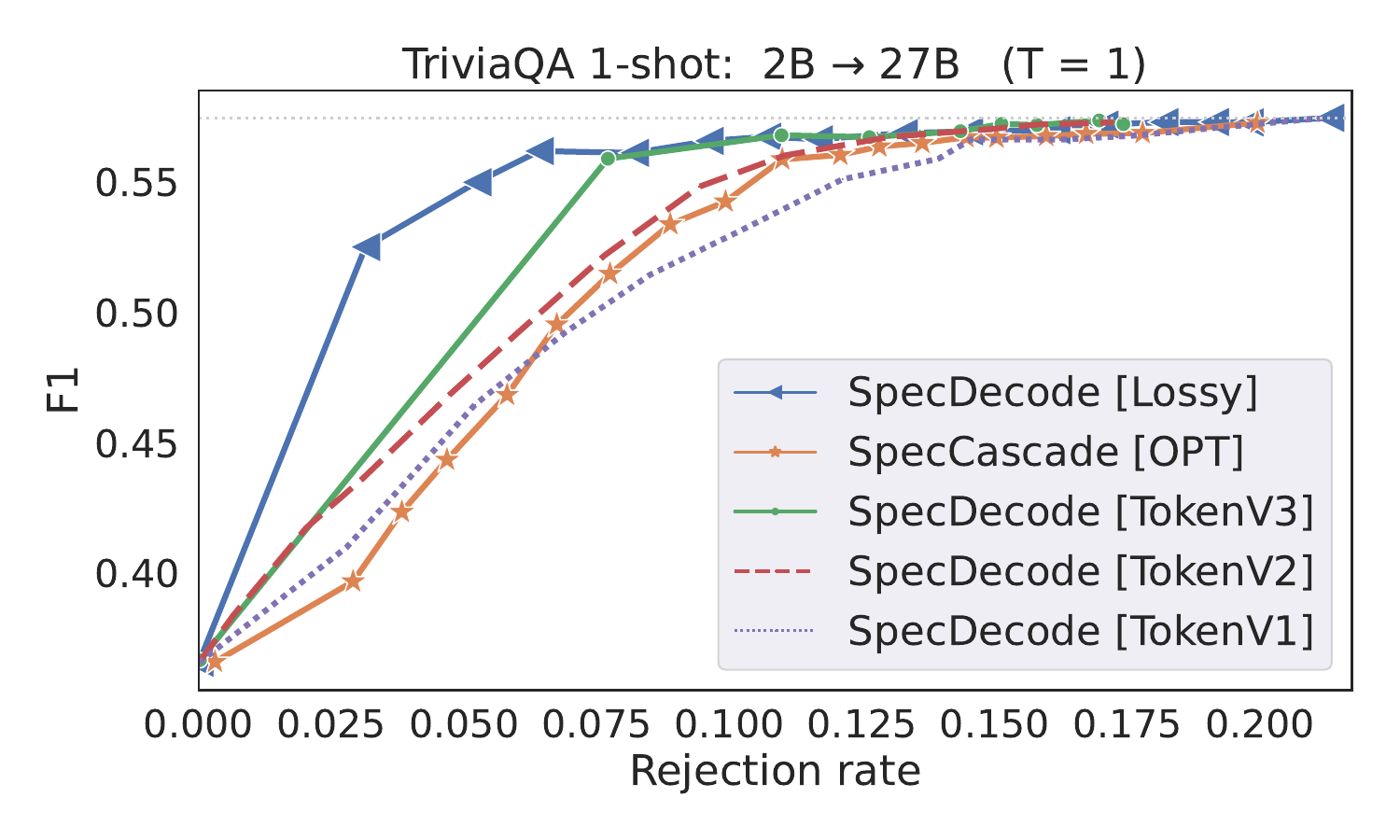}
\caption{Plots of quality vs.\ rejection rate for \textbf{Gemma models with all three token-specific speculative cascade deferral rules} in equations \ref{eq:sample-dep-01-plugin-v1}--\ref{eq:sample-dep-01-plugin-v3}. Each method interleaves a Gemma 2B drafter with a Gemma 27B verifier. 
    The horizontal dotted line denotes the quality of the large model. We include all three token-specific speculative cascade deferral rules in equations \ref{eq:sample-dep-01-plugin-v1}--\ref{eq:sample-dep-01-plugin-v3}.}
    \label{fig:gemma-2B-27B-token-specific}
\end{figure}

\todoakm{add plots. Hari:done!}

\begin{figure}[t]
    \centering
    \includegraphics[scale=0.25]{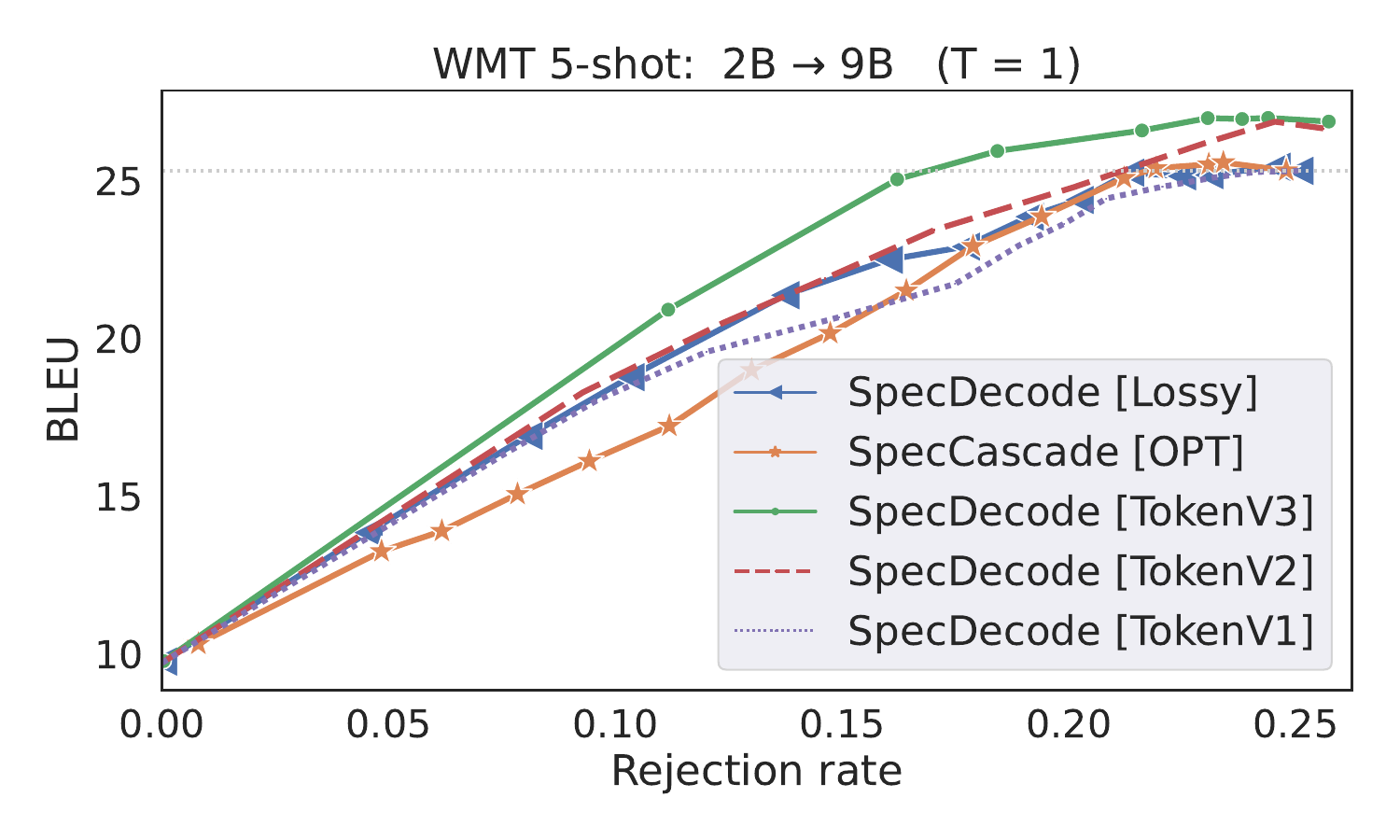}
    \includegraphics[scale=0.25]{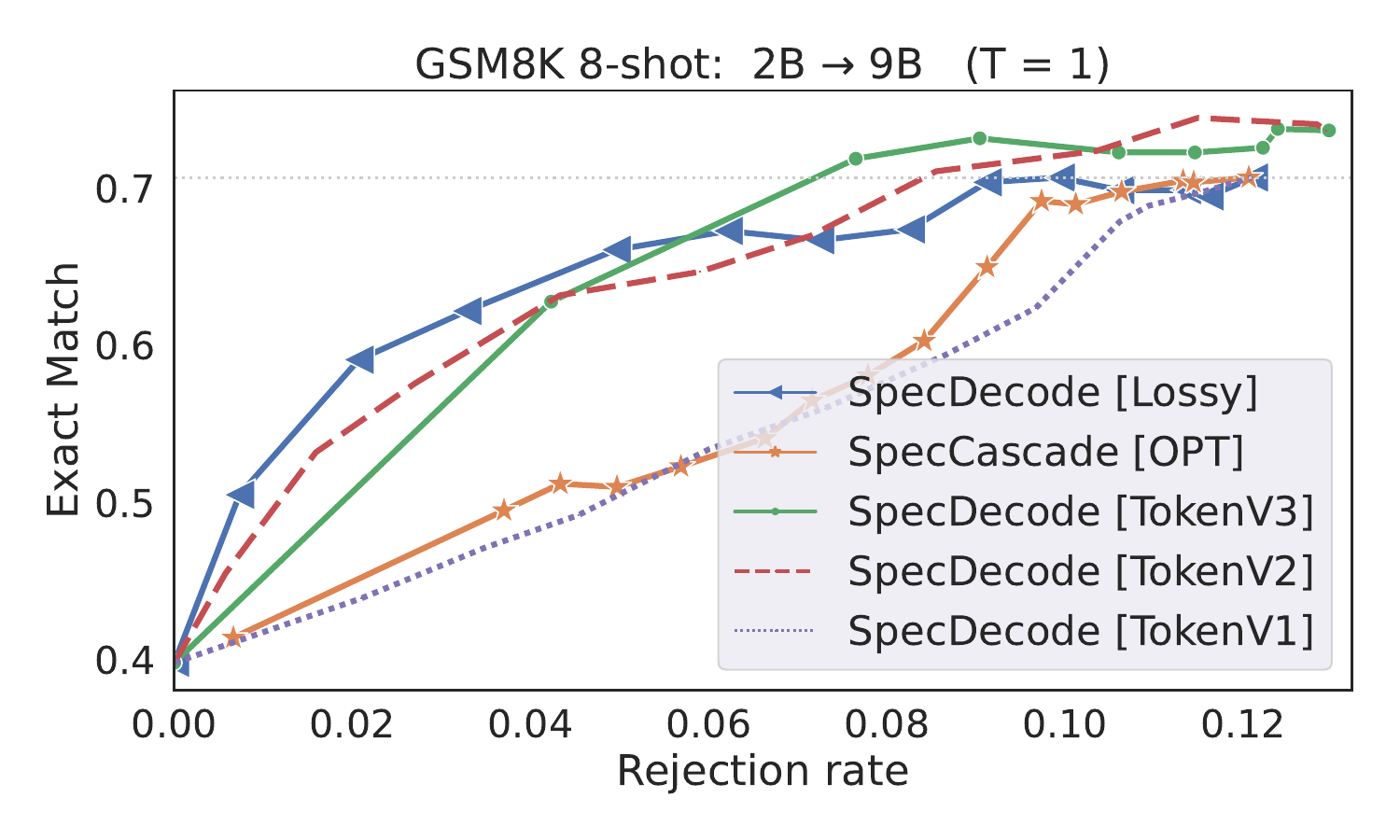}
    \includegraphics[scale=0.25]{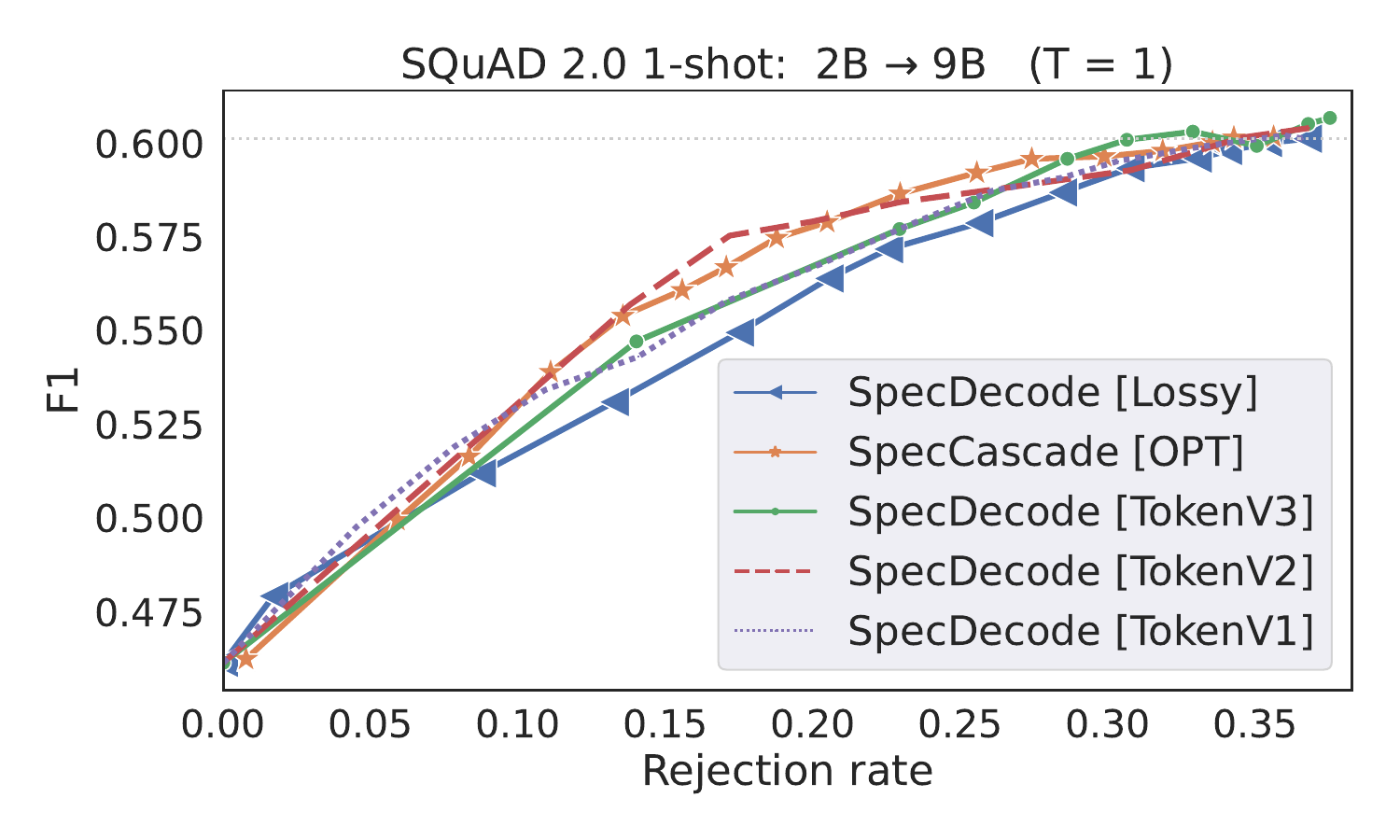}
    \includegraphics[scale=0.25]{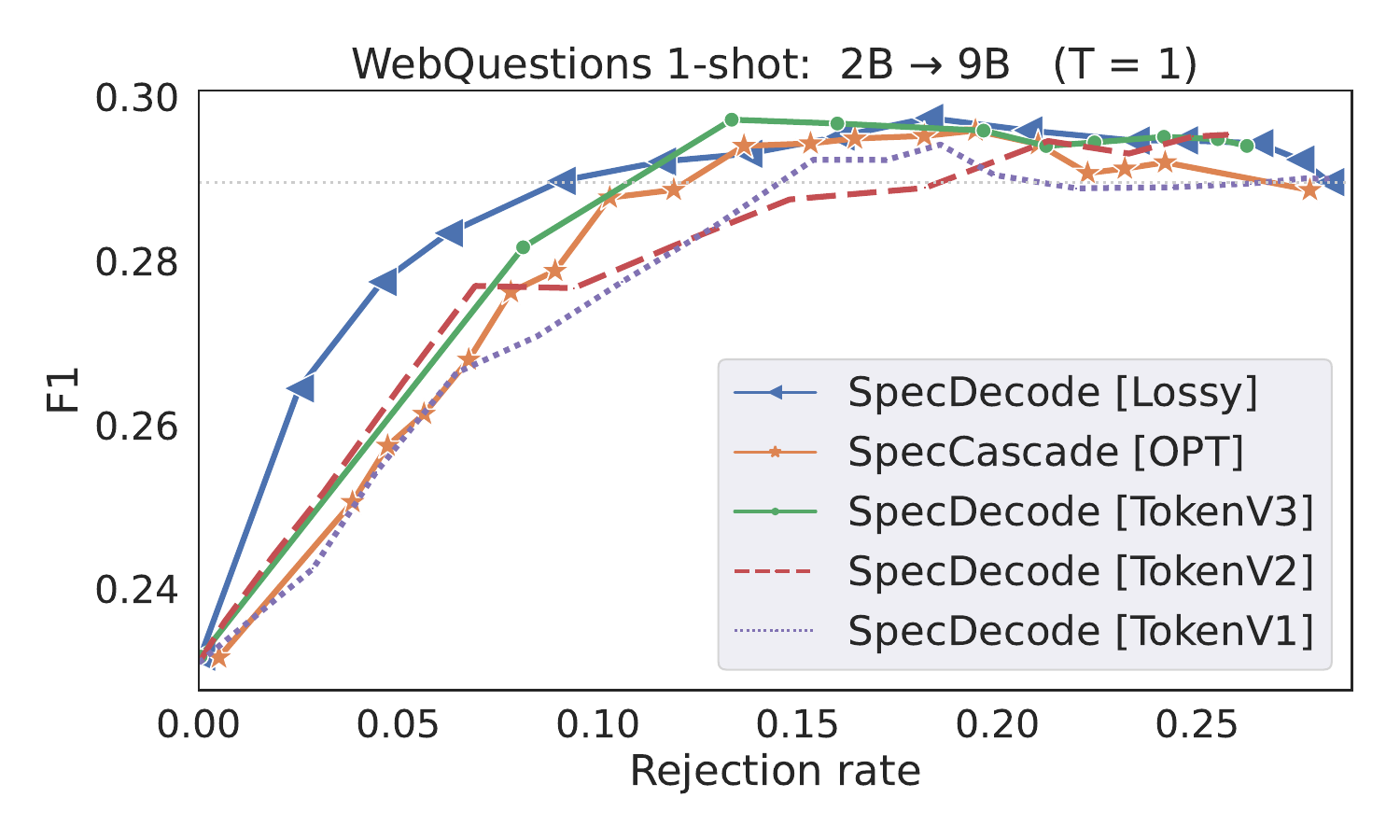}
    \includegraphics[scale=0.25]{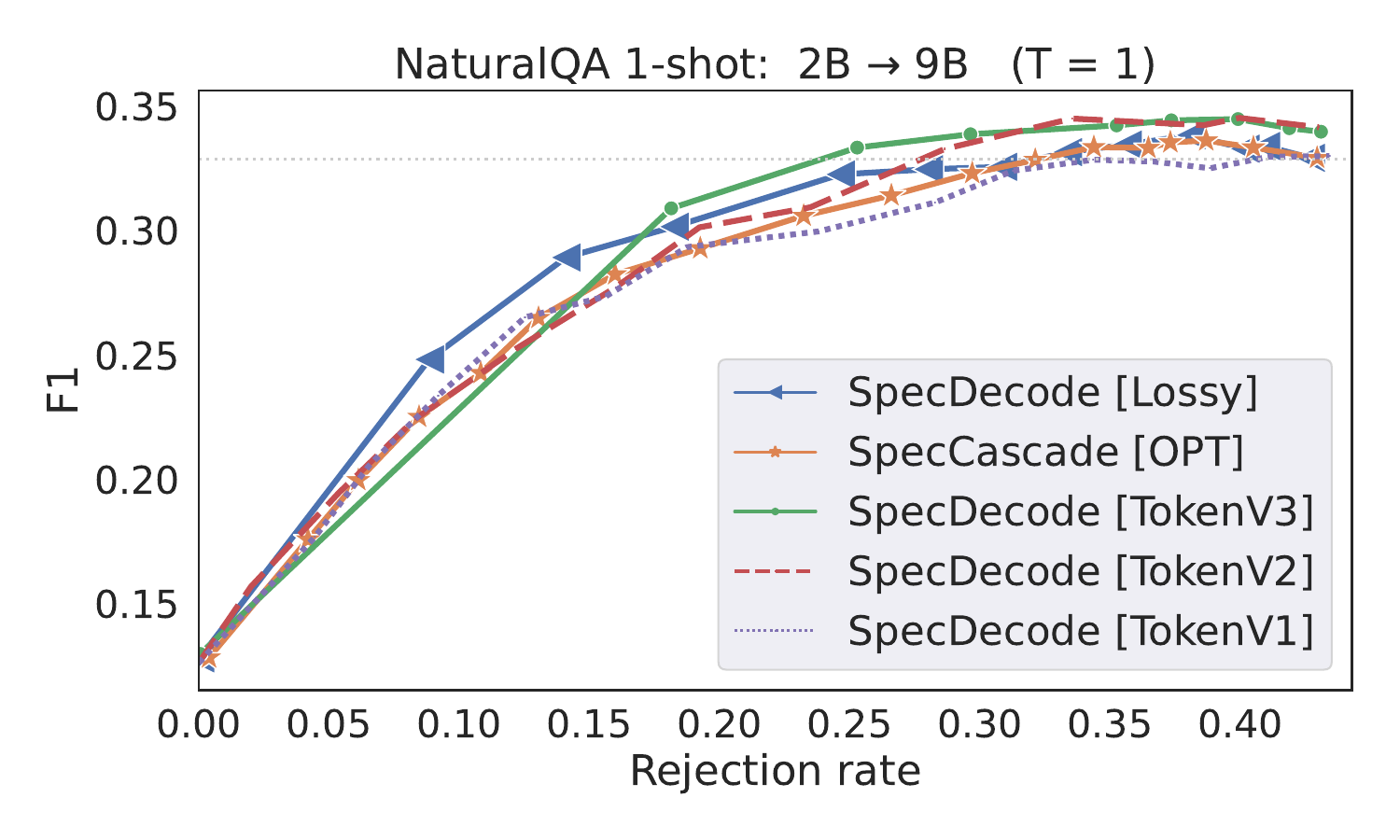}
    \includegraphics[scale=0.25]{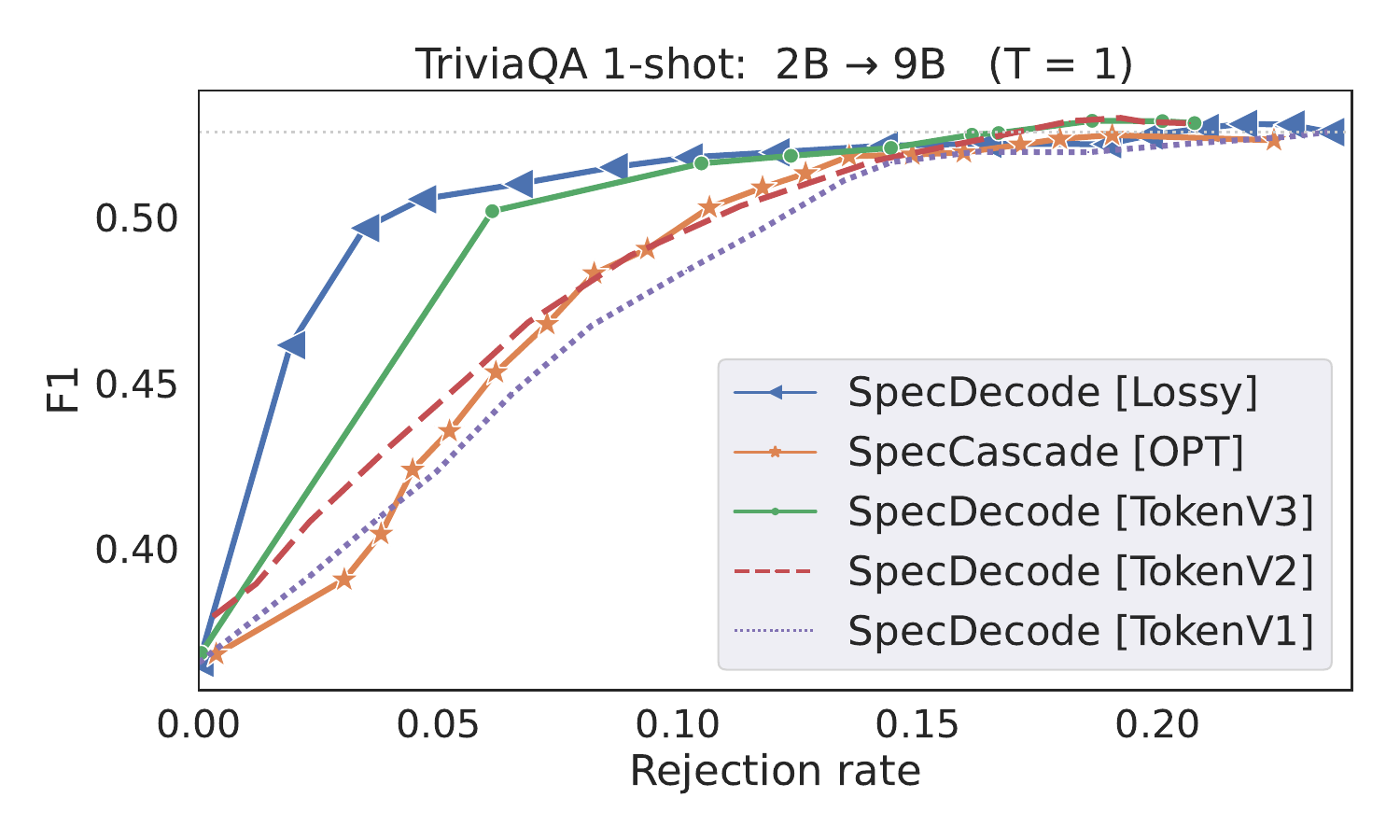}
\caption{Plots of quality vs.\ rejection rate with \textbf{Gemma 2B $\rightarrow$ 9B} speculative cascades. Each method interleaves a Gemma 2B drafter with a Gemma 9B verifier. 
    The horizontal dotted line denotes the quality of the large model. We include all three token-specific speculative cascade deferral rules in equations \ref{eq:sample-dep-01-plugin-v1}--\ref{eq:sample-dep-01-plugin-v3}.}
    \label{fig:gemma-2B-9B-token-specific}
\end{figure}

\todoakm{add plots. Hari:done!}

\section{Limitations}
\label{app:limitations}
One of the limitations of our proposal is the use of plug-in estimators to approximate the optimal rule \eqref{eq:spec-def-opt}. While these approximations are effective in practice, they rely on the individual models being calibrated. An alternative to the use of plug-in estimators is to use a router model explicitly trained to mimic the optimal rule using a validation sample drawn from $\Pr$ \citep{gupta2024language}.  Another limitation is that the optimization objectives we seek to minimize are local objectives that seek to make the best deferral decision at the current position $t$. In doing so, they ignore the downstream effects of choosing a particular model in the current step. Devising a global deferral objective that takes downstream errors into account would be an interesting direction for future work. 
 More broadly, our paper seeks to improve cost-quality trade-offs in LM inference. It is important that such improvements do not unfairly advantage one slice of the data or a subset of the population, at the cost of others. Ensuring that the trade-off gains that our approach offers is equitable across different slices of the data is another important direction for the future.

\end{document}